\documentclass{article}
\usepackage{kr}

\usepackage{colortbl} 

\usepackage{times}
\usepackage{soul}
\usepackage{url}
\usepackage[utf8]{inputenc}
\usepackage[small]{caption}
\usepackage{graphicx}
\usepackage{amsmath}
\usepackage{amsthm}
\usepackage{booktabs}
\usepackage{algorithm}
\usepackage{algorithmic}
\usepackage{xcolor} 

\usepackage{xargs}

\usepackage{latexsym}

\usepackage{color}
\usepackage{amssymb}
\usepackage{soul}

	\newcommand\Cst{\mathcal{C}} 
	\newcommand\Prd{P} 
	\newcommand\Voc{(\Prd,\Cst)} 
	 	
	\newcommand\PS{S} 
	\newcommand\PO{O} 

	\newcommand\I{I} 
	\newcommand\D{D} 
	\newcommand\Mod{\mathcal I} 

	\newcommand{\terms}[1]{\mathtt{terms}(#1)} 
	\newcommand{\vars}[1]{\mathtt{vars}(#1)} 
	\newcommand{\consts}[1]{\mathtt{consts}(#1)} 

	\newcommand\q{Q} 
	\newcommand\ucq{Q} 
	\newcommand\cq{q} 
	\newcommand\qs{Q_S} 
	\newcommand\qo{Q_O} 

	\newcommand{\tuple}[1]{\vec{#1}}	

	\newcommand{\er}{R}	
	\newcommand{\erules}{\mathcal{R}} 

	\newcommand{\fr}[1]{\mathtt{fr}(#1)} 
	\newcommand{\body}[1]{\mathtt{body}(#1)} 
	\newcommand{\head}[1]{\mathtt{head}(#1)} 

	\newcommand\map{\mathcal{M}} 
	\newcommand\mapa{\mathcal{M}_\mathcal{A}} 
	\newcommand\rmap{\map_\mathcal{A}'} 

	\newcommand\KBDM{\Sigma} 
	\newcommand\kbdm{(\PS,\PO,\map,\erules)}
	
	\newcommand\mods{\mathtt{Mod}} 
	\newcommand{\modK}[2]{\mods_{#2}({#1})} 

	\newcommand{\homo}{h} 

	\newcommand{\eval}[2]{#1(#2)} 
	\newcommand{\cert}[2]{\mathtt{certain}(#1,#2)} 
        \newcommand{\certain}[3]{\mathtt{certain}_{#1}(#2, #3)}

	\newcommand{\evalC}[2]{#1^\mathtt{cert}(#2)}  
	\newcommand{\chase}[2]{\mathtt{chase}(#1,#2)} 
	\newcommand{\rew}[2]{\mathtt{rew}(#1,#2)}  
	\newcommand{\mrew}[1]{\map^-(#1)} 
 	
\newcommand{\vect}[1]{\tuple{#1}}

\newcommand{\sol}[2]{\modK{#2}{#1}}
\newcommand{\Srewriting}[2]{{#2}^-(#1)}

\newcommand{\target}{\mathcal{T}}
\newcommand{\source}{\mathcal{S}}

\usepackage{refcount}

\newcommand{\saveContent}[4][]{%
  \expandafter\def\csname content#3\endcsname{#4} 
  \begin{#2}[#1]\label{#3}
  #4
  \end{#2}
}
\newcommand{\recallContent}[2][Theorem]{
  \paragraph{#1~\ref{#2}.} \csname content#2\endcsname
  \par\addvspace{\baselineskip} }

\newtheorem{theorem}{Theorem}
\newtheorem{example}[theorem]{Example}
\newtheorem{definition}[theorem]{Definition}
\newtheorem{corollary}[theorem]{Corollary}
\newtheorem{proposition}[theorem]{Proposition}
\newtheorem{lemma}[theorem]{Lemma}

\title{Abstractions of Queries in Ontology-Based Data Access\\ (Extended version)
\footnote{\emph{This report contains the paper published at KR 2025 (with title ``Abstractions of Queries in Ontology-Based Data Access'' and the same list of authors) and an appendix with detailed proofs and further examples.}}
}

\author{%
Michel Leclère\and
Marie-Laure Mugnier\and
Guillaume Pérution-Kihli \\
\affiliations
LIRMM, Inria, University of Montpellier, CNRS, Montpellier, France 
\emails
\{michel.leclere, marie-laure.mugnier, guillaume.perution-kihli\}@lirmm.fr
}

\begin{document}

\maketitle

\begin{abstract} 
In ontology-based data access (OBDA), multiple data sources are integrated via mappings to an ontology. We consider an OBDA setting based on existential rules and the certain answer semantics. We address the recent issue of query abstraction, which consists of abstracting data queries by translating them to the ontology layer. Since a perfect abstraction may not exist, the notions of minimally complete and maximally sound abstractions have been introduced. 
 We study abstractions within an extension of UCQs with a limited form of inequality and a special predicate marking database constants. While this extension does not lead to an increased complexity of the problems of interest, it is able to express minimally complete abstractions, hence perfect abstractions when they exist. We also characterize maximally sound abstractions by making a new connection with the notion of  maximum recovery stemming from data exchange. 
\end{abstract}

\section{Introduction}

In ontology-based data access (OBDA), multiple data sources are integrated via mappings to a shared ontology \cite{Poggi2008LinkingDT}. This approach generalizes classical data integration by replacing the global schema with an ontology, enabling users to query data at a high level of abstraction while benefiting from reasoning over domain knowledge.

A central problem in OBDA is ontological query answering. Given an OBDA specification $\KBDM=(\PS,\PO,\map,\mathcal O)$, where $\PS$ is a data schema, $\PO$ an ontology schema, $\map$ a mapping from $\PS$ to $\PO$, and $\mathcal O$ an ontology over $\PO$, and a database $D$ over $\PS$, the goal is to compute the answers to an ontological query $Q_O$ over the knowledge base (KB) defined by $D$, $\map$, and $\mathcal O$. Such KB may remain virtual: then, $Q_O$ is rewritten into a data-level query $Q_S$ that is a \emph{perfect} translation of $Q_O$, which means that, for any database $D$ over $\PS$, the answers to $Q_S$ on $D$ coincide with the answers to $Q_O$ on the virtual KB. While $Q_S$ is simply evaluated on $D$, the answers to $Q_O$ are logically entailed by the KB—corresponding to the standard semantics of \emph{certain answers}, i.e., answers that hold in every model of the KB.

More recently, research in OBDA has also investigated query translation in the opposite direction, i.e., from data queries to ontological queries—a task called \emph{query abstraction} \cite{Cima2023TheNO}. This issue arises in a range of relevant scenarios. 
First, during the (often incremental) design of an OBDA system, query abstraction can be used to verify whether the mapping provides adequate coverage of important data queries \cite{Lutz_et_al_KR_2018}. 
Second, query abstraction is a means to automatically characterize the semantics
of data services implemented at the data-source level—which can be seen as a form of reverse engineering \cite{Cima2019SemanticCO}. This capability opens the door to promising applications, such as providing open datasets supplied by organizations with high-level semantics, or enhancing the FAIRness of data services \cite{Cima2023TheNO}. 

In this paper, we investigate query abstraction within an OBDA setting based on \emph{existential rules} \cite{blms:09,pods-09-cgl}, aka TGDs in database theory \cite{AbiteboulHV95}.
 We use existential rules for both the mapping (we obtain Global-Local-As-View (GLAV) mappings) and the ontology.  
  So doing, we generalize classical OBDA frameworks based on Horn description logics (DLs), as these can be seen as specific existential rule classes. Moreover, this uniform setting allows us to rely on the same fundamental tools to handle mappings and ontologies, namely the \emph{chase} and 
\emph{query rewriting}. It also helps to make connections with database theory.

Most work in OBDA has focused on ontological queries expressed as unions of conjunctive queries (UCQs), the core relational database queries. FO-rewritable ontologies—such as those expressed with the main dialects of the DL-Lite family \cite{DBLP:journals/jar/CalvaneseGLLR07} or certain fragments of existential rules \cite{blms:09,DBLP:journals/pvldb/CaliGP10}—guarantee that every ontological UCQ admits a perfect rewriting as a UCQ.
  In the other direction, query translation appears to be much more challenging. 
 To start with, a perfect abstraction of a data (U)CQ may not exist at all, even with an empty ontology. Apart from the fact that mappings may not transfer all the answers, they may also make source relations indistinguishable, as illustrated next.

\begin{example} Let $S = \{s_1,s_2\}$, $O = \{r\}$ and $\map = \{m_1,m_2\}$, with
$m_1$: $s_1(x) \rightarrow r(x)$ and $m_2$: $ s_2(x) \rightarrow r(x)$.\\
The query $Q_{\PS}(u) = s_1(u)$ has no perfect abstraction through $\map$. In particular, the ontological query $Q_{\PO}(u) = r(u)$ captures all answers to $Q_{\PS}$, i.e., it is a complete abstraction of $Q_{\PS}$, but it is not a sound abstraction of $Q_{\PS}$, as it also retrieves values coming from the source relation $s_2$. In fact, $Q_{\PO}$ would be a perfect abstraction of $s_1(u) \lor s_2(u)$. 
\label{ex-simple}
\end{example}

The topic began to be investigated only recently  \cite{Lutz_et_al_KR_2018,Cima2019SemanticCO,DBLP:conf/lics/CimaCLP21}. 
However, OBDA can build on a rich body of previous work in database theory.
As pointed out by Lutz et al. \cite{Lutz_et_al_KR_2018}, when the ontology is ignored, deciding if a source query has a perfect abstraction as a UCQ is closely related to the long-studied problem of query expressibility with views \cite{DBLP:journals/tods/NashSV10}.  
We will establish a new link with database theory by considering specific inverse mappings from data exchange 
\cite{fagin2008quasi,Arenas2009TheRO}. 

Since a perfect abstraction of a UCQ $Q_{\PS}$ 
may not exist, Cima et al.  introduced \emph{minimally complete} and \emph{maximally sound} abstractions, which respectively provide a minimal superset and a maximal subset of answers to $Q_{\PS}$ \cite{Cima2019SemanticCO}. Here, minimality and maximality are with respect to the set of queries definable in some target query language (e.g., UCQ). 

With the aim of capturing perfect abstractions of UCQs, we consider an extension of UCQ, denoted by UCQ$^{\neq,\textbf{C}}$, with a  limited form of inequality (variables that occur in inequalities must be mapped to constants) and a special unary predicate marking database constants, denoted by $\textbf{C}$---which was introduced in \cite{fagin2008quasi} to define inverse mappings. As a data query language, UCQ$^{\neq}$ is of great interest in practice, especially as inequalities are in fact not limited  when the queried databases are ground. However, the true benefit of  UCQ$^{\neq,\textbf{C}}$ is as an ontological query language. Indeed, while a perfect abstraction of a UCQ, when it exists, may not be expressible as a UCQ, we will show that it is always expressible as a UCQ$^{\neq,\textbf{C}}$.

We will now detail our main contributions.  To distinguish between the settings of database integration, i.e., without ontology, and OBDA, we will use the terms $\map$-abstraction and $\KBDM$-abstraction, respectively.

\smallskip
\noindent \textbf{(1) Complexity of the problems of interest within the UCQ$^{\neq,\textbf{C}}$ class. } 
In Section \ref{sec-ucq-dif}, we study the computational impact of extending UCQ to UCQ$^{\neq,\textbf{C}}$.
In particular, we show that this does not lead to increased complexity of 
verifying whether a candidate $\map$-abstraction is perfect, which is $\Pi^{\mathsf P}_2$-complete (see Table \ref{table-complexity}, first column). 
We also exhibit an FO-rewritable rule fragment for which this problem with a $\KBDM$-abstraction remains in $\Pi^{\mathsf P}_2$  (this subsumes earlier results on DL-Lite). Other complexity results are established later.  

 \begin{table} 
 \begin{small}
  \centering
 \begin{tabular}{|c|c|c|}
  \hline
  \textbf{Setting} &\textbf{Verification}&    \textbf{Existence}\\
  \hline
 {(U)CQ, GAV, } &  \textsc{$\Pi^{\mathsf P}_2$-c} (*) & \textsc{$\Pi^{\mathsf P}_2$-c} (*) \\
 {bounded arity} &  & \\
  \hline
{UCQ$^{\neq,\textbf{C}}$, GLAV}  &  \cellcolor[gray]{.8}  \textsc{$\Pi^{\mathsf P}_2$-c} &  \cellcolor[gray]{.8}  \textsc{$\Pi^{\mathsf P}_2$-c}\\
   {bounded frontier} &  \cellcolor[gray]{.8} &  \cellcolor[gray]{.8} \\
   \hline
  {UCQ$^{\neq,\textbf{C}}$, GLAV} &  \cellcolor[gray]{.8}  \textsc{$\Pi^{\mathsf P}_2$-c} &  \cellcolor[gray]{.8}  in  \\
  &  \cellcolor[gray]{.8}&  \cellcolor[gray]{.8} \textsc{Co-NExpTime} \\
  \hline
 \end{tabular}\\
 \small{(*) from \cite{Lutz_et_al_KR_2018}}
 \caption{Complexity of problems related to perfect $\map$-abstractions } \label{table-complexity}
 \end{small}

\end{table}

\smallskip
\noindent \textbf{(2) Capturing minimally-complete and perfect abstractions of UCQ$^{\neq,\textbf{C}}$s.} In Section \ref{sec-min-complete}, we first point out that a non-Boolean query $Q_{\PS}$ may not have a \emph{complete} abstraction and characterize the conditions under which it has one (which only depends on the interactions between $Q_{\PS}$ and the mapping).  For a UCQ $Q_{\PS}$ that has a complete abstraction, it is known that, by applying $\map$ to $Q_{\PS}$ (i.e., ``chasing''  $Q_{\PS}$ with $\map$), one produces a UCQ $Q_{\PO}$ that is \emph{minimally complete} w.r.t. all $\map$-abstractions definable as UCQs \cite{Lutz_et_al_KR_2018,Cima2019SemanticCO}. However, we point out that $Q_{\PS}$ may have a better $\map$-abstraction (i.e., a complete $\map$-abstraction with fewer unwanted answers) in the UCQ$^{\neq,\textbf{C}}$ class, which can be computed by a modified chase. The important result here is that  UCQ$^{\neq,\textbf{C}}$ is in fact able to express a minimally complete $\map$-abstraction (and $\KBDM$-abstraction as well) of any source UCQ$^{\neq,\textbf{C}}$, where minimality is w.r.t. all possible ontological queries (i.e., queries on schema $O$ expressed in any language, provided that they are answered
with the semantics of certain answers). We take special care in exhibiting the properties behind this result.  
It follows that, when a perfect $\KBDM$-abstraction of a UCQ$^{\neq,\textbf{C}}$ exists, it can be expressed in this class. 
Moreover, UCQ$^{\neq,\textbf{C}}$ is a minimal language with this property, even when the source query is a plain UCQ. 
Finally, we use these results to show that the complexity of deciding whether a perfect $\map$-abstraction exists for a given UCQ$^{\neq,\textbf{C}}$ is $\Pi^{\mathsf P}_2$-complete when the mapping rules have a frontier of bounded size (the frontier being the set of variables shared between the body and head) and  in Co-NExpTime otherwise (see Table \ref{table-complexity}, second column). 

\smallskip
\noindent \textbf{(3) Characterizing maximally sound abstractions of UCQ$^{\neq,\textbf{C}}$s.} There is no known algorithm that builds a \emph{maximally sound} $\map$-abstraction of a UCQ as a UCQ (or UCQ$^{\neq,\textbf{C}}$), when such abstraction exists, except in a very specific case \cite{Cima2019SemanticCO}. Whether the associated existence problem is decidable is an open question. In Section \ref{sec-max-sound}, we make a step towards better understanding by drawing a connection with the notion of a \emph{maximum recovery} investigated in a quite different context, namely data exchange. In data exchange, mappings are used to specify how to transfer data from a source schema to a target schema; then, a maximum recovery of a mapping $\map$ is an inverse mapping from target to source that allows one to recover the most answers to source queries when it is composed with $\map$  \cite{Arenas2009TheRO}. To express such inverse mapping, a strictly more expressive language than GLAV is required, as disjunction in rule heads is needed. 
We show that a maximally sound $\map$-abstraction of a UCQ$^{\neq,\textbf{C}}$ $Q_{\PS}$ can be equivalently defined as the \emph{rewriting} of $Q_{\PS}$ with a maximum recovery of $\map$. To define such rewriting, we rely on a rewriting operator for UCQs and disjunctive existential rules introduced in  \cite{Leclre2023QueryRW}. 
In passing, we correct a wrong claim from the literature, that a maximum recovery for CQs could always be expressed by a conjunctive mapping (i.e., without disjunction) \cite{Arenas2009InvertingSM}. We then extend the notion of a maximum recovery to an OBDA specification (i.e., we add an ontology) and show that a maximum recovery can still be expressed by the same form of disjunctive mapping when the ontology is FO-rewritable; hence, in this case, a maximally sound $\KBDM$-abstraction of a UCQ$^{\neq,\textbf{C}}$ can also be characterized as a rewriting with a maximum recovery. When this rewriting is finite, it is a UCQ$^{\neq,\textbf{C}}$. 


\section{Preliminaries}

We assume the reader has basic knowledge in database theory and ontological query answering. 

\smallskip
\noindent\textbf{Database theory}
We consider a denumerable set of constants $\Cst$. A \emph{term} is a constant from $\Cst$ or a variable. A \emph{schema} is a finite set of predicates. 
 An atom with predicate $p$ is called a \emph{$p$-atom}. Given a schema $\Prd$, a \emph{$\Prd$-atom} is any $p$-atom with $p \in \Prd$. 
A \emph{$\Prd$-instance} is a possibly infinite set of $\Prd$-atoms, also seen as an \emph{interpretation} of $\Voc$, in which constants are interpreted by themselves. 
A \emph{database} is a ground finite instance. 
Given an instance $\I$, we denote by $\vars{\I}$, $\consts{\I}$ and $\terms{\I}$ its sets of variables, constants and terms, respectively. 
A tuple $\tuple{x}$ of pairwise distinct variables is sometimes seen as a set.  
 Given instances $\I_1$ and $\I_2$,
a \emph{homomorphism} $\homo$ from $\I_1$ to $\I_2$ is a substitution of $\vars{\I_1}$
by $\terms{\I_2}$ such that $\homo(\I_1) \subseteq \I_2$.  We also denote it by $\homo: \I_1 \rightarrow \I_2$ and we say that $\I_1$ \emph{maps} to  $\I_2$ (by $\homo$). When convenient, we extend the domain of a homomorphism to constants.

We define the notion of a query in an abstract way, i.e., independently from any syntactic form: an $n$-ary \emph{query} $\q$ 
on schema $\Prd$
is
any function that maps each 
$\Prd$-instance $\I$ to a 
set of $n$-ary tuples on $\consts{\I}$.
We denote by $\eval{\q}{\I}$ the set of answers to $\q$ on $\I$.
An  $n$-ary \emph{FO-query} on $\Prd$ has the form $\q(\tuple x)$, where $\q$ is an FO-formula on $\Prd$ and $\tuple x$ are $n$ free variables from $\q$ (the answer variables). A \emph{Boolean} query is 0-ary. 
Given an instance $\I$, a tuple $\tuple{c} \subseteq \consts{\I}$ with $|\tuple{c}| = |\tuple{x}|$ is an \emph{answer} to an FO-query $\q(\tuple{x})$ on $\I$ if $\I$ is a model of $\q(\tuple{c})$, where  $\q(\tuple{c})$ is obtained by substituting each i-th variable in $\tuple{x}$ by the i-th constant in $\tuple{c}$. %
A \emph{conjunctive query} (CQ) has the form $\cq(\tuple{x}) = \exists \tuple{y}.~\phi[\tuple{x},\tuple{y}]$, where 
 $\phi$ is a finite conjunction of atoms
and $\tuple{x} \cup \tuple{y} = \vars{\phi}$;  to denote a CQ, we write $q$ rather than $Q$. 
A CQ is \emph{with join-free existential variables} (CQJFE) if each existential variable occurs only once. 
A \emph{union of conjunctive queries} (UCQ) is a finite disjunction of CQs with the same tuple of answer variables.
 A UCQ may contain equalities, but to simplify technical developments we will silently assume that, before any processing, equalities are removed by substituting variables. 

Given queries $\q_1$ and $\q_2$, 
$\q_1$ is \emph{contained} in $\q_2$, denoted by $\q_1 \sqsubseteq \q_2$, if $\eval{\q_1}{\I}\subseteq\eval{\q_2}{\I}$ for any instance $\I$. 
Given two CQs $\cq_1(\tuple x_1)$ and $\cq_2(\tuple x_2)$, a \emph{query homomorphism} from $\cq_1$ to $\cq_2$ is a homomorphism $h$ from $\cq_1$ to $\cq_2$ such that $h(\tuple x_1) = \tuple x_2$.
It is well known that, given CQs $\cq_1$ and $\cq_2$, $\cq_1 \sqsubseteq \cq_2$ iff there is a query homomorphism from $\cq_2$ to $\cq_1$.  Note that for UCQs $\ucq_1$ and $\ucq_2$, we have $\ucq_1 \sqsubseteq \ucq_2$ iff for all $\cq_1\in \ucq_1$, there is $\cq_2\in \ucq_2$ such that $\cq_1 \sqsubseteq \cq_2$.

When information is incomplete, a set of instances is often considered instead of a single instance; then, the set of \emph{certain answers} to a query $\q$ on a set of instances $\Mod$ is $\cert{\q}{\Mod} = \bigcap\limits_{\I\in \Mod} \eval{\q}{\I}$.

\noindent\textbf{Rules and mappings}
An \emph{existential rule} $\er$ (or simply \emph{rule} hereafter) 
is a closed formula of the form  
  $\forall\tuple{x}.~ (\exists \tuple{y}.~ B[\tuple{x},\tuple{y}]) \rightarrow \exists \tuple{z}.~ H[\tuple{x},\tuple{z}]$
 where $B$ and $H \neq \emptyset$ are finite conjunctions, respectively called the \emph{body} and the \emph{head} of $R$, also noted $\body{\er}$ and $\head{\er}$, and $\tuple{x}, \tuple{y}$ and $\tuple{z}$ are pairwise disjoint tuples of variables and $\consts{H}\subseteq \consts{B}$. 
The \emph{frontier} of $\er$ is $\fr{\er} = \tuple{x}$. Note that $\body{\er}$ and $\head{\er}$ can be seen as CQs with answer variables $\fr{\er}$.
 $\er$ is \emph{Datalog} if  $\tuple{z} = \emptyset$. 
Given schemas $\mathcal S$ and $\mathcal T$, called source and target respectively, with $\mathcal S \cap \mathcal T = \emptyset$, an \emph{$\mathcal S$-to-$\mathcal T$ rule} has a body made of $\mathcal S$-atoms and a head made of $\mathcal T$-atoms.  
A GLAV \emph{mapping} from $\mathcal S$ to $\mathcal T$ is a finite set of $\mathcal S$-to-$\mathcal T$ rules. 
A GLAV mapping is \emph{GAV} if it is a set of Datalog rules.

\smallskip
\noindent\textbf{OBDA} An \emph{OBDA specification} is a quadruplet $\KBDM=\kbdm$ where $\PS$ is the source schema, $\PO$ the ontology schema, $\map$ a (GLAV) mapping from $\PS$ to $\PO$ and $\erules$ a finite set of  rules over $\PO$.
An \emph{OBDA system} is a pair $(\D,\KBDM)$ with $\KBDM$ an OBDA specification and $\D$ an $\PS$-database. 
A \emph{source query}, usually denoted by $\qs$, is defined on $S$ and an \emph{ontological query}, usually denoted by $\qo$, is defined on $O$. 
The answers to an ontological query $\qo$ on an OBDA system $(\D,\KBDM)$, denoted by $\evalC{\qo}{\D,\KBDM}$, 
 are its certain answers
 on the \emph{models of the OBDA system}, 
denoted by $\modK{\D}{\KBDM}$:
$$\modK{\D}{\KBDM} ~=~ \{\PO\textit{-instance}\, \I~|~ \D\cup\I\models\map \,\textit{and}\, \I\models\erules\}$$
$$\evalC{\qo}{\D,\KBDM} = \cert{\qo}{\modK{\D}{\KBDM}}$$
When we want to ignore the set of rules $\erules$ of an OBDA system, we use notations with $\map$ instead of $\Sigma$.
Query containment is extended to ontological queries: $\q_1\sqsubseteq_\KBDM\q_2$ if $\evalC{\q_1}{\D,\KBDM}\subseteq \evalC{\q_2}{\D,\KBDM}$ for every $\PS$-database $\D$.

\smallskip
\noindent\textbf{Query abstraction}
For an OBDA specification $\KBDM$, a query $\qo$ on $O$ is a \emph{complete} (resp. \emph{sound}) $\KBDM$-abstraction of a query $\qs$ on $S$ if $\eval{\qs}{\D}\subseteq\evalC{\qo}{\D,\KBDM}$ (resp. $\evalC{\qo}{\D,\KBDM}\subseteq \eval{\qs}{\D}$) for all $\D$; $\qo$ is a \emph{perfect} $\KBDM$-abstraction if it is both sound and complete. 
We also consider ``best'' approximations of perfect abstractions within a target query class: Given a query class $\mathbb Q$, an ontological query $\qo \in \mathbb Q$ is a \emph{$\mathbb Q$-minimally complete} $\KBDM$-abstraction of $\qs$ if $\qo$ is a complete $\KBDM$-abstraction of $\qs$ and $\qo\sqsubseteq_\KBDM\qo'$ for any $\qo' \in \mathbb Q$ that is a complete $\KBDM$-abstraction of $\qs$; similarly, $\qo$ is a 
\emph{$\mathbb Q$-maximally sound} $\KBDM$-abstraction of $\qs$ if $\qo$ is a sound $\KBDM$-abstraction of $\qs$ and $\qo'\sqsubseteq_\KBDM\qo$ for any $\qo' \in \mathbb Q$ that is a sound $\KBDM$-abstraction of $\qs$. 
\emph{When we do not specify $\mathbb Q$, we consider \emph{any} query, as abstractly defined above.  }
Finally, when $\erules$ is ignored, we speak of \emph{$\map$-abstraction} instead of $\KBDM$-abstraction. 

\smallskip
Let $\mathcal X \in \{\map, \Sigma \}$ and $\mathbb{Q}$ be a class of queries. We study three kinds of problems: 
\begin{itemize}
\item \emph{Verifying} if an abstraction is perfect: the $\mathbb{Q}$ $\mathcal X$-perfectness \emph{verification} problem takes as input $\mathcal X$, $\mathbb{Q}$-queries $\qs$ and $\qo$, and asks if $\qo$ is a perfect $\mathcal X$-abstraction of $\qs$. This problem is decomposed into verifying $\mathcal X$-soundness and 
 $\mathcal X$-completeness.
\item \emph{Deciding} the existence of a perfect abstraction: the $\mathbb{Q}$ $\mathcal X$-\emph{expressibility} problem takes as input  $\mathcal X$ and $\qs \in \mathbb{Q}$, and asks if a perfect $\mathcal X$-abstraction of $\qs$ is expressible in  $\mathbb{Q}$. 
\item \emph{Computing} abstractions satisfying property $P \in \{ \text{perfectness, maximal soundness, minimal  completeness} \}$; Given $\mathcal X$ and a $\mathbb{Q}$-query $\qs$, the task is to compute a $\mathbb{Q}$-query $\qo$ that is an $\mathcal X$-abstraction of $\qs$ with property $P$, 
when such an abstraction  exists. 
\end{itemize}

\smallskip
\noindent\textbf{Reasoning tools}
Let $\erules$ be a set of rules. 
 Given an instance $I$, the \emph{chase} with $\erules$ exhaustively applies rules from $\erules$ to $I$, towards a fixpoint.
 \footnote{This formulation corresponds to the simplest chase variant, called oblivious \cite{DBLP:conf/kr/CaliGK08}.}
 We denote by $\chase{I}{\erules}$ the (possibly infinite) resulting instance. 
Crucially, $\chase{I}{\erules}$  is a \emph{universal} model of $I$ and $\erules$, i.e., it maps to any model of $I$ and $\erules$. 
Given a UCQ $\ucq$, \emph{query rewriting} with $\erules$ starts from $Q$ seen as a set of CQs; it iteratively rewrites a CQ from the set with a rule from $\erules$ and adds the resulting CQ to the set, while keeping a minimal set w.r.t. query containment, towards a fixpoint. 
We consider here the rewriting algorithm from \cite{SWJ15}, based on so-called piece-unifiers,  and denote by $\rew{\ucq}{\erules}$ the possibly infinite set (i.e., union) of CQs it produces. Each rewriting step is based on a piece-unifier, which unifies a subset $q'$ of a CQ $q$ with a subset of a rule head $h'$ while satisfying the following \emph{piece condition}: an existential variable from $h'$ can only be unified with variables from $q'$, which furthermore do not occur in $q \setminus q'$. 
 A fundamental property holds: For any instance $I$, set of rules $\erules$ and  Boolean UCQ $Q$, $I \cup \erules$ entails $Q$ iff  $\chase{I}{\erules} \models Q$ iff $I \models  \rew{\ucq}{\erules}$.
 A pair $(\ucq, \erules)$ where $\ucq$ is a UCQ, is \emph{FO-rewritable} if there is a UCQ $\ucq'$ such that $\ucq'(I)=
\cert{\ucq}{\chase{I}{\erules}}$ for any instance $I$. 
Note that $(\ucq, \erules)$ is FO-rewritable iff $\rew{\ucq}{\erules}$ is finite. A rule set $\erules$ is FO-rewritable, or \emph{fus}, if $(\ucq, \erules)$ is FO-rewritable for any UCQ $\ucq$.

When mappings (from $\mathcal S$ to $\mathcal T$) are considered instead of rules on a single schema, the result of the chase or of query rewriting is restricted to the relevant schema ($\mathcal T$ or $\mathcal S$). 
Given a mapping $\map$ from $\source$ to $\target$ and a (finite) $\source$-instance $I$, the chase of $I$ with $\map$ yields the  $\target$-instance 
$\map(I)=\{A\in\chase{I}{\map} ~|~ A \text{~is~a~} \target\text{-atom}\}$. 
Given a CQ $\cq_s$, the chase of $\cq_s$ with $\map$ is defined similarly, on the atoms of $\cq_s$ seen as an instance, 
provided that each answer variable from $\cq_s$ occurs in a $\target\text{-atom}$ of $\map(\cq_s)$; hence, $\map(\cq_s)$ can be seen as a CQ with the same arity as $\cq_s$. The chase is further extended to a UCQ: given an $\source$-UCQ $\ucq_s$, 
$\map(\ucq_s)$ is defined only if $\map(\cq_{s_i})$ is defined for each $\cq_{s_i}\in \ucq_s$; then $\map(\ucq_s)$ 
is the $\target$-UCQ obtained by making the disjunction of the $\map(\cq_{s_i})$, for all $\cq_{s_i}\in \ucq_s$. 
Query rewriting with a mapping $\map$ takes as input a $\target$-UCQ $\ucq_t$ and produces the $\source$-UCQ 
$\map^-(\ucq_t)=\{\cq\in\rew{\ucq_t}{\map} ~|~ \cq \text{~is~a~query~on~} \source\}$. 
Note that $\map(\ucq_s)$ and $\map^-(\ucq_t)$ are always finite. 
  
It is convenient to use the same notations for mappings and general rule sets; so we also note  $\erules(I)=\chase{I}{\erules}$ and $\erules^-(\ucq)=\rew{\ucq}{\erules}$. 
For an OBDA specification $\KBDM = \kbdm$, we note $\KBDM(I)=\erules(\map(I))$ and $\KBDM^-(\ucq)=\map^-(\erules^-(\ucq))$.  When $\erules^-(\ucq)$ is not finite, it is seen as an infinitary query, as in \cite{Lutz_et_al_KR_2018}.

\smallskip
\noindent\textbf{Properties of OBDA systems}
We finally list some fundamental properties of OBDA systems, which are explicit or implicit in previous work, except that we consider existential rules instead of specific Horn description logics. 

\begin{proposition}\label{prop-known}
For any OBDA specification $\KBDM = \kbdm$,
$\PS$-database $D$, $\PS$-UCQ $\qs$ and $\PO$-UCQ $\qo$ the following holds:
\begin{enumerate}
\item $\evalC{\qo}{\D,\map}\subseteq \evalC{\qo}{\D,\KBDM}$
\item $\evalC{\qo}{D,\Sigma}~=~  ~\eval{\qo}{\erules(\map(D))}~=~ \eval{(\map^-(\erules^-(\qo)))}{D}$ 
\item If $\qs\sqsubseteq \qs'$ then $\map(\qs) \sqsubseteq_\map \map(\qs')$
\item $\qo \sqsubseteq_\map \qo'$ iff $\map^-(\qo) \sqsubseteq \map^-(\qo')$  
\item $\qo \sqsubseteq_\Sigma \qo'$ iff $\erules^-(\qo) \sqsubseteq_\map \erules^-(\qo')$  
\end{enumerate}
\end{proposition}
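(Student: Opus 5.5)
I would prove the five items in order, each as a consequence of two standard facts recalled in the preliminaries: the chase yields a universal model, and chase-based entailment coincides with evaluation of the rewriting.

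\textbf{Items 1 and 2.}
\begin{itemize}
\item For item 1, I observe that $\modK{\D}{\KBDM}\subseteq\modK{\D}{\map}$. Indeed, every $\PO$-instance $\I$ with $\D\cup\I\models\map$ and $\I\models\erules$ in particular satisfies $\D\cup\I\models\map$. Certain answers are intersections over models, and intersecting over fewer models gives more answers, so item 1 follows.
\item For the first equality of item 2, I show that $\erules(\map(\D))$ is a universal model of $(\D,\KBDM)$. It is a model because the chase of $\D$ with $\map$ satisfies $\map$, and chasing its $\PO$-part with $\erules$ satisfies $\erules$. The chase adds only $\PO$-atoms, so the mapping remains satisfied. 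It is universal because any model $\I$ admits a homomorphism from $\map(\D)$ (universality of the $\map$-chase, keeping constants fixed). This extends to $\erules(\map(\D))\to\I$ by universality of the $\erules$-chase.
\item Since UCQs are preserved under homomorphisms fixing constants, the certain answers equal the answers on $\erules(\map(\D))$. The answers are restricted to constants; the null terms created by the chase are variables, so they cannot appear in answers.
\item For the second equality, I apply the fundamental rewriting property twice, instantiating answer variables by constants to reduce to the Boolean case. First, $\erules(\map(\D))\models\qo(\tuple c)$ iff $\map(\D)\models\erules^-(\qo)(\tuple c)$. Second, the same property for $\map$ turns this into $\D\models\map^-(\erules^-(\qo))(\tuple c)$.
\item When $\erules^-(\qo)$ is infinite it is an infinitary union. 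Each disjunct is rewritten separately, so the argument goes through disjunct by disjunct.
\end{itemize}

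\textbf{Item 3.} I would use item 2 with $\erules=\emptyset$. It suffices to show that for every database $\D$ and answer $\tuple c\in\map(\qs)(\map(\D))$, also $\tuple c\in\map(\qs')(\map(\D))$.
\begin{itemize}
\item Take a homomorphism $h$ from a disjunct $\map(\cq)$ into $\map(\D)$, where $\cq\in\qs$.
\item Since $\qs\sqsubseteq\qs'$, there is $\cq'\in\qs'$ and a query homomorphism $g:\cq'\to\cq$.
\item Applying the oblivious chase is monotone under homomorphisms: the trigger of a rule on $\cq'$ composed with $g$ is a trigger on $\cq$. So $g$ extends to a query homomorphism $\map(\cq')\to\map(\cq)$ that sends each fresh null of a trigger to the null of the image trigger.
\item Composing this extension with $h$ gives the required homomorphism from $\map(\cq')$ into $\map(\D)$.
\end{itemize}

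\textbf{Items 4 and 5.}
\begin{itemize}
\item For item 4, item 2 with $\erules=\emptyset$ gives $\evalC{\qo}{\D,\map}=\map^-(\qo)(\D)$ for every $\D$. So $\qo\sqsubseteq_\map\qo'$ says exactly that $\map^-(\qo)(\D)\subseteq\map^-(\qo')(\D)$ for all databases $\D$.
\item This is containment of $\PS$-UCQs over databases. Containment over all instances is equivalent to containment over finite ground instances, since a canonical database built by freezing a CQ is a finite database.
\item For item 5, item 2 gives $\evalC{\qo}{\D,\KBDM}=\evalC{\erules^-(\qo)}{\D,\map}$. Here the $\map$-certain answers of the (possibly infinitary) UCQ $\erules^-(\qo)$ equal its answers on $\map(\D)$, which again holds by universality. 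Both sides of item 5 therefore unfold to the same inclusion for all $\D$.
\end{itemize}

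\textbf{Main obstacle.} The main point needing care is item 3, specifically extending a query homomorphism through the chase. A secondary point is that in item 4 the definition of $\qs$-containment quantifies over all instances, while $\sqsubseteq_\map$ quantifies over databases. I would handle this gap with the canonical-database argument above.
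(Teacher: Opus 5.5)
Your proof is correct. The paper gives no proof of Proposition~\ref{prop-known}; it presents these facts as explicit or implicit in earlier work. Where it later relies on them in the appendix, it uses the same ingredients you do:
\begin{itemize}
\item universality of $\KBDM(D)$,
\item soundness and completeness of rewriting for $\map$ and $\erules$,
\item the homomorphism characterization of UCQ containment.
\end{itemize}

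One caveat. Your argument for item~3 depends on the existence of a query homomorphism $\cq'\to\cq$, and the paper notes that this characterization fails for CQ$^{\neq,\textbf{C}}$. So your proof would not by itself support the paper's later remark that the proposition still holds for UCQ$^{\neq,\textbf{C}}$. For the UCQ statement as given, nothing is missing.
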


When a perfect $\map$-abstraction of a UCQ $\qs$ is expressible as a UCQ, $\map(\qs)$ is such an abstraction (this follows, e.g., from \cite{DBLP:journals/tods/NashSV10}). 
Moreover, when $\qs$ has a complete $\map$-abstraction, $\map(\qs)$ is such an abstraction, which is even UCQ-minimally complete \cite{Cima2019SemanticCO}.  
Hence, a UCQ $\qs$ has a perfect  $\map$-abstraction expressible as a UCQ iff $\map(\qs)$ is sound, which can be checked by verifying if $\map^-(\map(\qs)) \sqsubseteq \qs$. This result has been extended to a $\Sigma$-abstraction in some specific OBDA settings.\footnote{GAV mappings \cite{Lutz_et_al_KR_2018} or ontology in DL-Lite$_\mathcal R$ \cite{Cima2019SemanticCO}.}

\smallskip
Next, we use the following notations: $\KBDM$ is an OBDA specification defined by $\kbdm$, $D$ is an $\PS$-database, and $\qs$, $\qo$ are queries over $\PS$ and $\PO$ respectively.

\section{From UCQ to UCQ$^{\neq,\textbf{C}}$}
\label{sec-ucq-dif}

The class UCQ$^{\neq,\textbf{C}}$ extends UCQ with two special predicates: a restricted form of inequality ($\neq$) and a unary predicate $\textbf{C}$ stating that its argument is a constant.\footnote{ Formally: for any FO-interpretation $\mathcal I = (\Delta, .^{\mathcal I})$ 
it holds that $\textbf{C}^{\mathcal I} = \consts{\mathcal I}$ and $\neq^\mathcal{I} $ = $ \{(d_1,d_2) \in \Delta^2 ~|~d_1 \neq d_2 \}$.}
In the context of query abstraction, $\textbf{C}$ is used to mark variables in ontological queries that must be mapped to values coming from the database, whereas $\neq$ allows one to distinguish between different ways of matching query variables. 

\begin{definition}[UCQ$^{\neq,\textbf{C}}$] \label{def-ucq-dif} A UCQ$^{\neq,\textbf{C}}$  $Q$ is a UCQ extended with atoms on special predicates $\neq$ (binary) and $\textbf{C}$ (unary), such that: (1) All the variables of $Q$ occur in standard atoms, and (2) The terms of any $\neq$-atom are constants, answer variables, or variables that occur in a $\textbf{C}$-atom. 
We denote by $std(Q)$ the restriction of $Q$ to standard atoms. 
\end{definition}

 The $\textbf{C}$-atoms on answer variables and constants, as well as $\neq$-atoms over constants, can be made explicit or not. Also, $\textbf{C}$ is useless in source queries since databases are ground, but for simplicity we keep the same query class at  the data and the ontology levels. 
A UCQ$^{\neq,\textbf{C}}$ $Q$ is \emph{consistent} if there exists a database $D$ such that $Q(D) \neq \emptyset$, which can be checked in polynomial time.    
Next, we implicitly assume that queries are consistent.

All the technical tools for CQs are extended to CQ$^{\neq,\textbf{C}}$ in the natural way. A \emph{homomorphism} $h: \cq \rightarrow I$ is a homomorphism from $std(\cq)$ to $I$ such that (i) for all $\textbf C(t) \in \cq$, $h(t)$ is a constant, and (ii) for all $t_1 \neq t_2$, $h(t_1) \neq h(t_2)$.  A \emph{query homomorphism} $h: \cq_1 \rightarrow \cq_2$ is a homomorphism from $std(\cq_1)$ to $std(\cq_2)$ such that  (i) for all $\textbf C(t) \in \cq_1$, $h(\textbf C(t)) \in \cq_2$ or $h(t)$ is a constant or $h(t)$ is an answer variable, and (ii) for all $t_1 \neq t_2  \in \cq_1$, $h(t_1 \neq t_2) \in Q_2$ or $h(t_1)$ and $h(t_2)$ are distinct constants. 
The chase of $\cq$ with $\map$, i.e., $\map(\cq)$, is obtained from 
$\mathcal M(std(q))$ 
by (1) adding the  atoms $(t_1 \neq t_2)$ from $q$ if $t_1$ and $t_2$ are both in $\map(std(\cq))$, and (2) adding $\textbf C(x)$ on the variables from $\vars{q} ~\cap~\vars{\map(std(\cq))}$. Hence, if the $\textbf C$-atoms in $\cq$ are made explicit, $\map(\cq)$ is defined as for a plain (U)CQ w.r.t. target predicates  $\mathcal T \cup \{ \textbf{C}, \neq \}$. 
Similarly, the rewriting of $\cq$ with $\map$,  i.e., $\map^-(\cq)$, is defined w.r.t. source predicates $\mathcal S \cup \{ \textbf{C}, \neq \}$; 
inconsistent CQs can be removed from $\map^-(\cq)$ and  the $\textbf{C}$-atoms can be made implicit. 
Note that, by definition of a piece-unifier, an existential variable from a rule head cannot be unified with a variable that occurs in a  $\textbf{C}$- or  $\neq$-atom. These notions are illustrated in Ex. \ref{ex-ucqdifc}. They are extended to UCQ$^{\neq,\textbf{C}}$ as expected.  
Note that the fundamental properties from Prop. \ref{prop-known} still hold. 

\begin{example} Let $\mathcal M$ be the following mapping:\\
\begin{tabular}{ll}
$m_1: s_1(x,y) \rightarrow p(x,y)$ & $m_3: s_2(x) \rightarrow p(x,x)$\\
$m_2: s_1(x,x) \rightarrow r(x)$ & $m_4: s_3(x) \rightarrow \exists y. p(x,y)$
\end{tabular} 

Let the source CQ$^{\neq}$ $q(u) = \exists v. ~s_1(u,v)~\land~u \neq v$. Then, $\map(q(u)) = p(u,v) \land ~u \neq v ~\land ~\textbf{C}(v)$ (with $\textbf{C}(u)$ left implicit as $u$ is an answer variable) and $\map^-(\map(q(u)) \equiv  q(u)$. Let us detail the rewriting of $\map(q(u))$: $p(u,v)$ can be unified with $\head{m_1}$, which yields $q(u)$, as well as with $\head{m_3}$, which yields the inconsistent CQ $s_2(u) \land u \neq u$ (discarded);  $p(u,v)$ cannot be unified with $\head{m_4}$ because $v$ would be unified with the existential variable $y$, while it also occurs in $\textbf{C}(v)$ and in $u \neq v$.  
\label{ex-ucqdifc}
\end{example}

\vspace*{-0.1cm}

The next example shows that a UCQ may have no perfect abstraction in UCQ but one in UCQ$^{\neq,\textbf{C}}$.

\begin{example}[Perfect abstraction within UCQ$^{\neq,\textbf{C}}$] \label{ex-perfect-ucq-dif}
Consider again $\mathcal M$ from Ex. \ref{ex-ucqdifc}. 
The CQ $\cq_S(u) = \exists v. s_1(u,v)$ has no perfect $\mathcal M$-abstraction in UCQ. Indeed, $\mathcal M(\cq_S) = \exists v. p(u,v)$, and $\qs' = \mathcal M^-(\mathcal M(\cq_S)) = \exists v. s_1(u,v) ~\lor ~s_2(u) \lor~s_3(u)$ strictly contains $\cq_S$. 
Hence, $\mathcal M(\cq_S)$ is not a sound abstraction. 
However, the following UCQ$^{\neq,\textbf{C}}$ is a perfect abstraction of $\cq_S$:
$\qo(u)  = q_O^1(u) \lor q_O^2(u)$, with $q_O^1(u) = \exists v. ~p(u,v)  \land \textbf{C}(v) \land u \neq v$ and  $q_O^2(u) = r(u) \land p(u,u)$.
Such query $\qo$ is the output of the algorithm given in Section \ref{sec-min-complete}. 
Intuitively, $\qo$ is a sound abstraction because it does not retrieve the $p$-atoms produced by $m_3$ and $m_4$. Indeed, $q_O^1$ only retrieves $p$-atoms produced by $m_1$, thanks to  $u \neq v$ and $\textbf{C}(v)$. However, $q_O^1$ is not a complete abstraction as it avoids atoms of the form $p(a,a)$ that can be produced by $m_1$.  
Subquery $q_O^2$ compensates for this elimination. Note that the atom $p(u,u)$ in $q_O^2$ is actually not needed: indeed, when an atom $r(a)$ is produced (by $m_2$), the atom $p(a,a)$ is necessarily produced (by $m_1$). 
More formally, let us check that $\mathcal M^-(\qo) \equiv \cq_S$. The rewriting of $q_O^1$ yields the CQ $\exists v. ~s_1(u,v) \land u \neq v$, see Ex. \ref{ex-ucqdifc}. The rewriting of $q_O^2$ yields two CQs: 1. $s_1(u,u)$, obtained by unifying $r(u)$ with $\head{m_2}$ and $p(u,u)$ with $\head{m_1}$; and 2. $(s_1(u,u) \land s_2(u))$, obtained by unifying $r(u)$  with $\head{m_2}$ and $p(u,u)$ with $\head{m_3}$. The latter query is contained in the former, hence can be ignored. We obtain 
$\mathcal M^-(\qo) = (\exists v. ~s_1(u,v) \land u \neq v) \lor  s_1(u,u) \equiv \cq_S$. 
 \end{example}

A natural question is whether this extension increases the complexities of the problems we are interested in. 
 First note that a homomorphism from a CQ$^{\neq,\textbf{C}}$ $\cq$ to an instance $I$ necessarily maps terms from a $\neq$-atom to terms known to be constants. It follows that, for $\cq$ Boolean, if $I$ entails $\cq$ then $\cq$ maps to $I$ 
 (and reciprocally); hence, query answering can still rely on homomorphism. 
This is different for query containment: given CQs$^{\neq,\textbf{C}}$ $\cq_1$ and $\cq_2$, the existence of a query homomorphism from $\cq_2$ to $\cq_1$ is no longer a necessary condition for $\cq_1 \sqsubseteq \cq_2$, even when considering only databases.
   Let \emph{UCQ$^{\neq,\textbf{C}}$ containment} be the problem that takes as input two queries $Q_1$ and $Q_2$ in UCQ$^{\neq,\textbf{C}}$, and asks if $Q_1 \sqsubseteq Q_2$. 
  This problem is known to be  $\Pi^{\mathsf P}_2$-complete, already when both queries are in CQ$^{\neq,\textbf{C}}$  
\cite{DBLP:journals/jcss/Meyden97,DBLP:conf/pods/KolaitisMT98}. 
As a complementary result, we prove that  $\Pi^{\mathsf P}_2$-hardness already holds when $Q_1$ is a very simple kind of CQ.

\begin{theorem}[Complexity of UCQ$^{\neq,\textbf{C}}$ containment] \label{th-QC}
 The UCQ$^{\neq,\textbf{C}}$ containment 
 problem is $\Pi^{\mathsf P}_2$-hard when  $Q_1$ is a Boolean CQJFE and $Q_2$ is a Boolean CQ$^{\neq,\textbf{C}}$.   
\end{theorem}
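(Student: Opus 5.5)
I would prove this by a polynomial reduction from the complement of $\forall\exists$-3SAT, which is $\Pi^{\mathsf P}_2$-complete. Given $\Phi=\forall \vec x\,\exists \vec y\,\varphi$ with $\varphi$ in 3-CNF over $\vec x=x_1,\dots,x_n$ and $\vec y$, I would build a Boolean CQJFE $Q_1$ and a Boolean CQ$^{\neq,\textbf{C}}$ $Q_2$ such that $\Phi$ is true iff $Q_1\sqsubseteq Q_2$. The intended reading is that the databases satisfying $Q_1$ enumerate, up to irrelevant extra facts, all assignments to $\vec x$. Conversely, $Q_2$ should succeed on such a database exactly when the assignment can be extended to $\vec y$ so that $\varphi$ holds. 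Since $Q_1$ is Boolean and CQJFE, every variable of $Q_1$ occurs exactly once. So $Q_1$ is a set of ground atoms over constants such as $0,1$, plus, for each universal variable $x_i$, one gadget with a single fresh variable $w_i$.

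\textbf{Static part.} $Q_1$ contains, as ground facts, the truth tables used by the standard CQ encoding of 3SAT. For each sign pattern of a clause there is a ternary predicate holding all seven satisfying $\{0,1\}$-tuples, together with a predicate $B$ with $B(0)$ and $B(1)$. $Q_2$ has one atom per clause of $\varphi$, with a variable for every propositional variable, and an atom $B(y)$ for each existential variable $y$. Thus, once the values of the universal variables are fixed in $\{0,1\}$, a homomorphism of this part of $Q_2$ exists iff $\varphi$ is satisfiable under that partial assignment. This part is routine.

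\textbf{Universal gadget.} This is the step I expect to be the main obstacle. For each $x_i$, $Q_1$ contains an atom $R_i(w_i,1)$ with $w_i$ occurring only there, plus ground atoms such as $R_i(0,0)$. The idea is that in any database $D$ with $Q_1(D)\neq\emptyset$, the image $d_i$ of $w_i$ either equals the designated constant $0$ (encoding $x_i=$ false) or differs from it (encoding $x_i=$ true). $Q_2$ then contains atoms $R_i(t_i,v_i)$, with $\textbf{C}(t_i)$, $\textbf{C}(v_i)$ and suitable $\neq$-atoms relating $t_i$, $v_i$ and auxiliary marked variables. These must force any homomorphism to pick the $R_i$-fact whose second component $v_i$ is the truth value determined by the case, and $v_i$ is then used in the clause atoms.

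The delicate point is that $Q_2$ is a single CQ, so it cannot branch on ``$d_i=0$ or not''. The $\neq$-atoms must make exactly one of the two candidate $R_i$-facts usable in each case, and the extra atoms needed for this must not let $Q_2$ escape the case distinction. In particular, $Q_2$ must not be able to map $t_i$ to a constant that makes both values available. I would first try to adapt the $\Pi^{\mathsf P}_2$-hardness construction for CQ$^{\neq}$ containment of van der Meyden and of Kolaitis et al.\ cited above. Since $Q_1$ may contain constants, I would move all their ``structure'' into ground atoms of $Q_1$, leaving only join-free variables. I would also use $\textbf{C}$ only to keep the $\neq$-atoms of $Q_2$ well formed in the sense of Definition~\ref{def-ucq-dif}; this is harmless since databases are ground.

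\textbf{Correctness.} It suffices to consider databases that are exact homomorphic images of $Q_1$, since containment is monotone in the extra facts. Such an image is determined by the values $d_i$. For ($\Leftarrow$), suppose $\Phi$ is true. For any choice of the $d_i$, read off the assignment to $\vec x$, take a witnessing assignment to $\vec y$, and map $Q_2$ accordingly; the gadget lemma guarantees that the $R_i$- and $\neq$-atoms are satisfied. For ($\Rightarrow$), given an assignment to $\vec x$, choose $d_i=0$ or a fresh constant accordingly. Then any homomorphism from $Q_2$ is forced by the gadget to use the intended $v_i$, and it yields a satisfying extension for $\vec y$. Polynomiality of the construction is immediate.
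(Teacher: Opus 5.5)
\paragraph{Where the proposal falls short.} Your overall architecture is the paper's: a reduction from $\forall\exists$3CNF, a left-hand query consisting of ground truth tables plus one join-free variable per universal variable, and a restriction to exact groundings. Your gadget $R_i(w_i,1), R_i(0,0)$ is, up to renaming, the paper's $val_i(0,z_i), val_i(1,0)$.

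The genuine gap is the $Q_2$-side gadget. You list requirements for it but never construct it, and your correctness paragraph rests on a ``gadget lemma'' that is neither stated nor proved. The static part and the reduction to exact groundings are routine, so this gadget is the entire non-trivial content of the hardness proof.

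The difficulty you flag is real. In your false case $d_i=0$, the facts $R_i(0,0)$ and $R_i(0,1)$ are both present with the same first component, so no condition on $t_i$ alone can select one of them. The missing idea is to pin $t_i$ to $d_i$ with a second atom, and then tie $t_i$ to a literal variable by an inequality. For instance, the following works, with $\textbf{C}$-atoms on $t_i,x_i,\bar x_i$:
\[
R_i(t_i,1)\wedge R_i(t_i,x_i)\wedge B(\bar x_i)\wedge t_i\neq\bar x_i\wedge x_i\neq\bar x_i .
\]
Here $R_i(t_i,1)$ forces $t_i=d_i$. If $d_i\neq 0$, then $R_i(d_i,x_i)$ forces $x_i=1$, which is compatible with $\bar x_i=0$. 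If $d_i=0$, then $t_i\neq\bar x_i$ forces $\bar x_i=1$, hence $x_i=0$. So every grounding determines exactly one assignment of $\vec x$ and is consistent with it, and both directions of your argument then go through.

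\paragraph{How the paper does it.} The paper's right-hand query (named $Q_1$ in its appendix) uses a different gadget. It has two atoms $val_i(u_i,w_i)$ and $val_i(v_i,y_i)$, with $w_i\neq 5$, $y_i\neq 6$, $\bar x_i\neq u_i$, $x_i\neq v_i$, for constants $5,6\notin\{0,1\}$. Forcing is only needed at the witness groundings $z_i\mapsto 5$ (true) and $z_i\mapsto 6$ (false). Every other grounding is simply read as false, which the gadget permits but does not force.

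\paragraph{A minor slip.} The complement of $\forall\exists$3SAT is $\Sigma^{\mathsf P}_2$-complete, not $\Pi^{\mathsf P}_2$-complete. Your construction ($\Phi$ true iff $Q_1\sqsubseteq Q_2$) is in fact a reduction from $\forall\exists$3SAT itself, which is what you need.
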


\vspace*{-0.3cm}
\begin{proof}[Proof sketch] By a reduction from $\forall\exists$3CNF adapted from \cite{DBLP:journals/tcs/AbiteboulKG91}. 
\end{proof}

\vspace*{-0.2cm}
We now study the complexity of $\mathcal M$-perfectness verification, by decomposing that problem into 
$\mathcal M$-completeness and $\mathcal M$-soundness verifications. 
 The $\mathcal M$-completeness (resp. $\mathcal M$-soundness) verification problem can be recast as verifying if $Q_S \sqsubseteq \mathcal M^-(Q_O)$ (resp. $\mathcal M^-(Q_O) \sqsubseteq Q_S$). There is an immediate reduction from UCQ$^{\neq,\textbf{C}}$ containment to verification, taking a trivial mapping $\mathcal M$  that bijectively translates n-ary predicates in $S$ into n-ary predicates in $O$.

\begin{theorem}[Complexity of $\mathcal M$-completeness] \label{complexity-M-completeness}The UCQ$^{\neq,\textbf{C}}$ $\mathcal M$-completeness verification problem is $\Pi^{\mathsf P}_2$-complete, even if  $Q_S$ is a CQJFE 
and $Q_O$ is a CQ$^{\neq}$. 
\end{theorem}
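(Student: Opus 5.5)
The plan is to show membership in $\Pi^{\mathsf P}_2$ and hardness separately. Hardness will come from Theorem~\ref{th-QC} through the trivial-mapping reduction described just before the statement.

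\textbf{Upper bound.} $\mathcal M$-completeness amounts to checking $Q_S \sqsubseteq \mathcal M^-(Q_O)$. The difficulty is that $\mathcal M^-(Q_O)$ may be exponentially large. To avoid computing it, I would use the characterization of Proposition~\ref{prop-known}, item~2, which still holds for UCQ$^{\neq,\textbf{C}}$. By that characterization, $Q_S \not\sqsubseteq \mathcal M^-(Q_O)$ iff some database $D$ and tuple $\vec a \in Q_S(D)$ satisfy $\vec a \notin \evalC{Q_O}{D,\map}$, that is, $\vec a \notin Q_O(\map(D))$.

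I would then bound the size of such a counterexample database, as in the standard $\Pi^{\mathsf P}_2$ proofs for CQ$^{\neq}$ containment \cite{DBLP:journals/jcss/Meyden97}. One may take $D$ to be an image of some disjunct $\cq_S$ of $Q_S$ under a substitution $\theta$. This $\theta$ identifies some variables and sends the rest to pairwise distinct fresh constants, and it must respect the $\neq$-atoms of $\cq_S$. Such a $D$ has polynomial size. Monotonicity of $Q_O$ and of $\map$ justifies restricting to these canonical databases: removing atoms from $D$ outside the image of $\cq_S$ only removes facts from $\map(D)$, and for $\neq$-atoms the restriction is sound because the constants of $D$ remain distinct. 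Given $D$, the instance $\map(D)$ is computed in polynomial time, since GLAV rules are applied once and no ontology is involved.

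Checking $\vec a \notin Q_O(\map(D))$ is a coNP test: it says that no homomorphism from a disjunct of $Q_O$ to $\map(D)$ respects the answer tuple, the $\textbf{C}$-atoms (mapping them to constants) and the $\neq$-atoms. Hence the complement of the problem is in $\Sigma^{\mathsf P}_2$: one guesses $\cq_S$ and $\theta$, then checks the polynomial-size condition with a coNP oracle. So completeness verification is in $\Pi^{\mathsf P}_2$.

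The step I expect to need the most care is justifying that canonical databases built from $Q_S$ suffice. One must check that the atoms $\map$ produces from the image of $\cq_S$ already witness any failure. This follows from monotonicity of $\map(\cdot)$ and the fact that $Q_O$ is preserved under homomorphisms that are injective on constants.

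\textbf{Lower bound.} I would take the trivial mapping $\mathcal M$ that copies each $n$-ary source predicate $s$ to a fresh $n$-ary target predicate $s'$ via $s(\vec x)\rightarrow s'(\vec x)$. Then $\mathcal M^-(Q_O)$ is $Q_O$ with primes removed. The $\textbf{C}$-atoms become implicit on source databases, which are ground. A variable in a $\neq$-atom that is guarded by $\textbf{C}$ stays unifiable because the rule head has no existential variables.

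Given an instance $(Q_1,Q_2)$ of Theorem~\ref{th-QC}, with $Q_1$ a Boolean CQJFE and $Q_2$ a Boolean CQ$^{\neq,\textbf{C}}$, set $Q_S=Q_1$ and let $Q_O$ be the primed copy of $Q_2$. Then $Q_1\sqsubseteq Q_2$ iff $Q_S\sqsubseteq\mathcal M^-(Q_O)$. Since databases are ground, the $\textbf{C}$-atoms of $Q_2$ can be dropped on the source side, so $Q_O$ only needs $\neq$. For the target side, I would check whether the $\textbf{C}$-atoms are truly needed: every target term produced by this GAV mapping is a constant, so removing $\textbf{C}$ from $Q_O$ preserves answers, and $Q_O$ can be stated as a CQ$^{\neq}$. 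If Definition~\ref{def-ucq-dif} forbids this because of its constraint on $\neq$-terms, I would instead make the $\neq$-variables answer variables, or rely on the hardness proof of Theorem~\ref{th-QC} only placing $\neq$ between variables that are effectively constants. This yields $\Pi^{\mathsf P}_2$-hardness for $Q_S$ a CQJFE and $Q_O$ a CQ$^{\neq}$.
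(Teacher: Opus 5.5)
Your lower bound is the paper's own (copy mapping plus Theorem~\ref{th-QC}). The overall ``for all canonical databases, there is a match'' shape of your upper bound is also right. The gap is the step ``given $D$, the instance $\mathcal M(D)$ is computed in polynomial time'', which is false for GLAV mappings given as input.

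Deciding whether even one rule body maps to $D$ is already an NP-hard CQ-evaluation problem. Worse, $\mathcal M(D)$ can be exponentially large when frontiers are unbounded. Take $m = s(x_1)\wedge\dots\wedge s(x_k)\rightarrow t(x_1,\dots,x_k)$ and the canonical database $D=\{s(c_1),s(c_2)\}$ of $\exists v_1 v_2.\, s(v_1)\wedge s(v_2)$. Then $\mathcal M(D)$ contains all $2^k$ atoms $t(\vec c)$ with $\vec c\in\{c_1,c_2\}^k$. The paper flags exactly this issue in its expressibility analysis: NP-oracle calls when $\mathcal M$ is frontier-bounded, ExpTime otherwise. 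So your coNP test runs on an instance you cannot write down in polynomial time, and the argument as stated does not give $\Pi^{\mathsf P}_2$.

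The missing idea is that you never need all of $\mathcal M(D)$. A match of a disjunct $q_O$ uses at most $|q_O|$ atoms, each produced by a single trigger $(m,h)$ with $h:\body{m}\to D$. So $\vec a\in Q_O(\mathcal M(D))$ iff you can guess at most $|q_O|$ triggers and a homomorphism from $q_O$ into the union of their instantiated heads (fresh nulls per trigger). That homomorphism must respect $\vec a$, the $\textbf{C}$-atoms and the $\neq$-atoms. The $\neq$-atoms only involve terms sent to constants, so checking them on this fragment suffices. This certificate is polynomial, so the inner test is in NP and its negation in coNP, which restores the $\Sigma^{\mathsf P}_2$ bound on the complement. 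This is exactly the paper's argument, phrased there as guessing a CQ of $\mathcal M^-(Q_O)$ through at most $|q_j|$ rules with piece-unifiers, plus a homomorphism into $D$.

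There are two smaller repairs. First, your canonical databases send non-identified variables to \emph{fresh} constants. This loses counterexamples in which a variable must take a constant occurring in $Q_O$. For example, take $\mathcal M=\{s(x)\rightarrow t(x)\}$, $Q_S=\exists v.\, s(v)$ and $Q_O=\exists v.\, t(v)\wedge\textbf{C}(v)\wedge v\neq a$. The only fresh canonical database $\{s(c)\}$ passes, yet $D=\{s(a)\}$ shows that $Q_O$ is not complete. Renaming the constants of the image of $q_S$ to fresh ones is not safe for constants of $Q_O$. So $\theta$ must also be allowed to send variables to $\consts{Q_O}$, for instance by ranging over admissible partitions of $\terms{q_S}\cup\consts{Q_O}\cup\consts{\mathcal M}$ in the spirit of the split construction. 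This is still a polynomial guess.

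Second, drop the fallback of turning the $\neq$-variables into answer variables. It would break the arity match with the Boolean $Q_S$, and it is unnecessary. Under the copy mapping, the $\textbf{C}$-atoms of $Q_O$ can simply be left implicit and are trivially satisfied.
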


\vspace*{-0.3cm}
\begin{proof} To verify that $Q_S \sqsubseteq \mathcal M^-(Q_O)$, we can check if, for every ground instantiation $D$ of a CQ$^{\neq}$ from $Q_S$, 
there is a CQ$^{\neq}$ $\cq_i \in \mathcal M^-(Q_O)$ that maps to $D$ (with answer variables mapped correctly). We can universally choose a $D$ in polynomial time as it is given by a substitution of the variables of a $q_s \in Q_S$ by fresh constants 
and we can guess $\cq_i$ and a homomorphism from $\cq_i$ to $D$ in polynomial time. Indeed, to obtain a $\cq_i$, we guess a CQ$^{\neq}$ $q_j \in Q_O$, a subset of $\mathcal M$ with at most  $|q_j|$ rules and associated piece-unifiers.  
Hence, $\mathcal M$-completeness is in $\Pi^{\mathsf P}_2$. Hardness follows from Th. \ref{th-QC}. 
\end{proof}

\vspace*{-0.2cm}
\begin{theorem}[Complexity of $\mathcal M$-soundness] \label{complexity-M-soundness}The UCQ$^{\neq,\textbf{C}}$ $\mathcal M$-soundness verification problem is $\Pi^{\mathsf P}_2$-complete, even if $Q_S$ is a Boolean CQ$^{\neq}$ and $Q_O$ is a is a Boolean CQJFE.  
\end{theorem}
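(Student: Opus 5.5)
Soundness asks whether $\mathcal M^-(Q_O) \sqsubseteq Q_S$. I would prove the two directions separately, mirroring the proof of Theorem~\ref{complexity-M-completeness}.

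\textbf{Membership in $\Pi^{\mathsf P}_2$.} The plan is to show that the complement is in $\Sigma^{\mathsf P}_2$. A counterexample is a CQ$^{\neq,\textbf{C}}$ $\cq_i \in \mathcal M^-(Q_O)$ together with a database $D$ such that $\cq_i(D) \not\subseteq Q_S(D)$. Three facts make this a polynomial certificate.
\begin{itemize}
\item Every $\cq_i$ in $\mathcal M^-(Q_O)$ is obtained from some CQ $q_j \in Q_O$ by piece-unifiers with at most $|q_j|$ mapping rules. So $\cq_i$ has polynomial size and can be guessed together with its unifiers, exactly as in the proof of Theorem~\ref{complexity-M-completeness}.
\item We may take $D$ to be a ground instantiation of $\cq_i$. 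Its variables are mapped to constants, some of them identified with each other or with constants of $Q_S$ and $Q_O$, subject to the $\neq$-atoms of $\cq_i$. This is the standard canonical-database argument for queries with inequalities: if $\cq_i$ is not contained in $Q_S$, some such identification pattern witnesses it, and there are only polynomially many fresh constants to use. The pattern can therefore be guessed.
\item Checking that the chosen answer tuple is not an answer to $Q_S$ on $D$ is a coNP check. One universally quantifies over all candidate homomorphisms from a CQ$^{\neq}$ of $Q_S$ to $D$.
\end{itemize}
The resulting procedure is $\exists$ (the rewriting $\cq_i$, its unifiers, and the instantiation) followed by $\forall$ (the homomorphisms from $Q_S$). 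This places non-soundness in $\Sigma^{\mathsf P}_2$, hence soundness in $\Pi^{\mathsf P}_2$.

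\textbf{Hardness.} The trivial bijective mapping used for completeness does not give the stated restrictions. Under that mapping, soundness would mean $Q_O \sqsubseteq Q_S$ for $Q_O$ a CQJFE and $Q_S$ a CQ$^{\neq}$. Theorem~\ref{th-QC} does cover that shape ($Q_1$ a CQJFE, $Q_2$ a CQ$^{\neq,\textbf{C}}$), but it does not directly give $Q_2$ a plain CQ$^{\neq}$. I would reduce from Theorem~\ref{th-QC} with $Q_1 := Q_O$ (a Boolean CQJFE) and $Q_2$ the Boolean CQ$^{\neq,\textbf{C}}$, setting $Q_S := Q_2$ with its $\textbf{C}$-atoms dropped.

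\begin{itemize}
\item The mapping is the bijective renaming of source predicates into ontology predicates.
\item In this setting $\mathcal M^-(Q_O)$ is $Q_O$ renamed, and it is evaluated only over databases, which are ground. Every term is then a constant, so the $\textbf{C}$-atoms are vacuous.
\item Containment over databases therefore coincides with containment of $Q_1$ in $Q_2$. Theorem~\ref{th-QC}'s statement concerns arbitrary instances, so I must check that its reduction (from $\forall\exists$3CNF) remains valid when only databases are considered. Since $Q_1$ is a CQJFE, its canonical instances can be grounded without loss, so I expect this to hold.
\end{itemize}

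\textbf{Expected obstacle.} The main risk is the hardness step, specifically making sure that $Q_S$ is a plain CQ$^{\neq}$ and that the reduction of Theorem~\ref{th-QC} survives the restriction to ground databases. If it does not, I would re-do that reduction directly with $Q_S$ a CQ$^{\neq}$ over ground instances. The membership argument is routine once the polynomial bound on rewritings (already used for completeness) is in place.
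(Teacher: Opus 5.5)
Your plan follows the paper's route. Membership is the dual of the proof of Theorem~\ref{complexity-M-completeness}: guess a CQ of $\mathcal M^-(Q_O)$ via at most $|q_j|$ piece-unifiers together with a ground instantiation, then do a coNP check against $Q_S$. Hardness comes from Theorem~\ref{th-QC} through the trivial bijective mapping, with the CQJFE playing the role of $Q_O$.

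One justification in your hardness step is wrong, although the step itself survives. You claim the canonical instance of a CQJFE ``can be grounded without loss''. That fails when the right-hand query has $\textbf{C}$-atoms. Replacing nulls by constants can create new matches for $\textbf{C}$-marked variables. So containment over all instances implies containment over databases, but not conversely.

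The reduction behind Theorem~\ref{th-QC} shows this concretely:
\begin{itemize}
\item The CQJFE contains $val_i(0,z_i)$ and $val_i(1,0)$.
\item The other query contains $val_i(u_i,w_i)$ and $val_i(v_i,y_i)$, together with $w_i\neq 5$ and $y_i\neq 6$. So $w_i$ and $y_i$ must be $\textbf{C}$-marked.
\item Every match used in the correctness argument sends $w_i$ or $y_i$ onto $z_i$. This is only possible because $z_i$ has been instantiated by a constant; on the frozen CQJFE with $z_i$ a null, those matches do not exist.
\end{itemize}

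What actually makes your step go through is that this reduction only ever quantifies over ground instantiations of the CQJFE. In other words, Theorem~\ref{th-QC} is really a hardness result for containment over databases, which is exactly what $\mathcal M^-(Q_O)\sqsubseteq Q_S$ asks for. So the check you proposed, whether the reduction remains valid when only databases are considered, succeeds by direct inspection of the reduction, not by a general grounding principle. Note also that the paper's construction already produces the right-hand query as a CQ$^{\neq}$, so your step of dropping $\textbf{C}$-atoms is harmless but unnecessary.
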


\vspace*{-0.2cm}
\begin{proof} Similar to that of Th. \ref{complexity-M-completeness}.
\end{proof}

\vspace*{-0.2cm}
Since $Q_1\!\sqsubseteq\!Q_2$ iff $Q_1 \equiv Q_1\!\land\!Q_2$, query equivalence is as hard as query containment, hence: 

\begin{corollary} The UCQ$^{\neq,\textbf{C}}$ $\mathcal M$-perfectness verification problem is $\Pi^{\mathsf P}_2$-complete. 
\end{corollary}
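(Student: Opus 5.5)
The plan is to prove the two bounds separately, reusing Theorems~\ref{complexity-M-completeness} and~\ref{complexity-M-soundness} for membership and Theorem~\ref{th-QC} for hardness.

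\emph{Membership.} By definition, $\qo$ is a perfect $\map$-abstraction of $\qs$ iff it is both a complete and a sound $\map$-abstraction of $\qs$. These two properties are exactly the containments $\qs \sqsubseteq \map^-(\qo)$ and $\map^-(\qo) \sqsubseteq \qs$. By Theorems~\ref{complexity-M-completeness} and~\ref{complexity-M-soundness}, each containment can be decided in $\Pi^{\mathsf P}_2$. Since $\Pi^{\mathsf P}_2$ is closed under intersection (the two universal/existential guesses can run one after the other), perfectness verification is in $\Pi^{\mathsf P}_2$.

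\emph{Hardness.} I reduce UCQ$^{\neq,\textbf{C}}$ containment, which is $\Pi^{\mathsf P}_2$-hard by Theorem~\ref{th-QC}, to equivalence under a trivial mapping. The key observation is the one already stated before the corollary: $Q_1 \sqsubseteq Q_2$ iff $Q_1 \equiv Q_1 \land Q_2$. The reduction then goes as follows.

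\begin{enumerate}
\item Take an instance $(Q_1,Q_2)$ of containment, both Boolean.
\item Let $\map$ be the bijective copy mapping that sends each $n$-ary source predicate $s$ to a fresh $n$-ary ontology predicate $s'$ via the rule $s(\tuple{x}) \rightarrow s'(\tuple{x})$.
\item Set $\qs = Q_1$ and $\qo = (Q_1 \land Q_2)'$, the primed copy. For Boolean UCQ$^{\neq,\textbf{C}}$ queries, $Q_1\land Q_2$ is the UCQ$^{\neq,\textbf{C}}$ obtained by distributing the conjunction and renaming variables apart. This is polynomial here, since the hard case of Theorem~\ref{th-QC} uses CQs only.
\end{enumerate}

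With the copy mapping, $\map^-(\qo)$ is just $Q_1\land Q_2$ with its predicates renamed back. The reason is that no existential variables occur in rule heads, so every piece-unifier simply renames predicates, and the $\neq$- and $\textbf{C}$-atoms are carried along unchanged. Hence $\qo$ is perfect iff $Q_1 \equiv Q_1\land Q_2$ iff $Q_1\sqsubseteq Q_2$.

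The only point needing care, and the main obstacle, is checking that $\map^-(\qo)$ is really equivalent to $Q_1\land Q_2$ on databases. Two things must hold. First, the $\textbf{C}$-atoms must remain harmless: they are trivially satisfied on ground databases. Second, no spurious rewritings may appear: each target predicate has a unique producing rule, so the only rewritings obtained are the renamed CQs themselves. Once this is verified, the reduction is correct and the corollary follows.
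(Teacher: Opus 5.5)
Your proposal is correct and follows the paper's argument. Membership comes from the $\Pi^{\mathsf P}_2$ upper bounds for $\map$-completeness and $\map$-soundness (Theorems~\ref{complexity-M-completeness} and~\ref{complexity-M-soundness}), and hardness comes from Theorem~\ref{th-QC} via the trivial bijective mapping together with the observation that $Q_1 \sqsubseteq Q_2$ iff $Q_1 \equiv Q_1 \land Q_2$. One small imprecision: piece-unifiers that unify several same-predicate atoms with one head do produce extra, more specific rewritings, but these are subsumed, so $\map^-(\qo)$ is still equivalent to $Q_1 \land Q_2$ on databases.
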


It is known that $\mathcal M$-perfectness verification is $\Pi^{\mathsf P}_2$-hard already for $Q_S$ and $Q_O$ CQs and $\mathcal M$ a GAV mapping in a DL setting, i.e., with mapping heads restricted to unary and binary predicates  \cite{Lutz_et_al_KR_2018,Cima2019SemanticCO}. Hence, considering (U)CQ$^{\neq,\textbf{C}}$ (and GLAV mappings) does not lead to increased complexity of verification. 

\smallskip
When it comes to taking an ontology into account, most previous works have considered lightweight DLs that are \emph{fus}.\footnote
{An exception is \cite{Lutz_et_al_KR_2018} considering also non-fus DLs from the $\mathcal {EL}$ family.} 
The key point is that, for any \emph{fus} rule set  $\mathcal R$ and $\qo$ in UCQ$^{\neq,\textbf{C}}$, $\mathcal R^-(\qo)$ is also in UCQ$^{\neq,\textbf{C}}$; hence the techniques designed for $\map$-abstractions can be extended, however at the risk of increased complexity. 
Next, we show that perfectness verification  remains in $\Pi^{\mathsf P}_2$ when $\mathcal R$ is a set of \emph{linear} rules---i.e., existential rules whose body has a single atom---over predicates with bounded arity. 
This rule class generalizes several dialects of the DL-Lite family, in particular DL-Lite$_{\mathcal R}$ \cite{pods-09-cgl}. 

\begin{theorem} The UCQ$^{\neq,\textbf{C}}$ $\Sigma$-perfectness verification problem 
is in $\Pi^{\mathsf P}_2$ when $\erules$ is linear with bounded-arity predicates. 
\label{th-complexity-sigma-perfectness-verification}
\end{theorem}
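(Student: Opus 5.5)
We reduce $\Sigma$-perfectness to the two containments
\[\qs \sqsubseteq \map^-(\erules^-(\qo)) \quad\text{and}\quad \map^-(\erules^-(\qo)) \sqsubseteq \qs,\]
which is justified by Prop.~\ref{prop-known} (item 2), still valid for UCQ$^{\neq,\textbf{C}}$. We then show that each containment can be decided by a $\forall\exists$ procedure with polynomial certificates, following the proofs of Th.~\ref{complexity-M-completeness} and Th.~\ref{complexity-M-soundness}. The only new ingredient is $\erules^-(\qo)$, which may be exponentially large even though $\erules$ is linear. The key property we will use is the following. For linear rules over predicates of arity bounded by a constant $k$, each rewriting step replaces the unified subset of a CQ by a single body atom. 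Hence every CQ in $\erules^-(\cq)$ has at most $|\cq|$ standard atoms, and all its terms come from the terms of $\cq$ plus polynomially many fresh variables. A rewriting of any CQ in $\erules^-(\cq)$ can therefore be represented in polynomial size. Moreover, checking that such a CQ $\cq'$ belongs to (or is contained in a member of) $\erules^-(\cq)$ amounts to guessing a derivation. We will argue that derivations can be assumed of polynomial length, or replaced by a polynomially checkable entailment test: for linear rules of bounded arity, entailment of a single atom from a single atom is decidable in polynomial time, since the number of atom types up to variable renaming is polynomial when the arity is bounded.

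\textbf{Completeness.} For completeness, universally guess a CQ$^{\neq,\textbf{C}}$ $\cq_s\in\qs$ together with a ground instantiation $D$ of it by fresh constants; since $\neq$-atoms must be respected, we range over partitions of its variables. We then existentially guess four things: a CQ $\cq_o\in\qo$; a polynomial-size CQ $\cq'$ together with a certificate that $\cq'\in\erules^-(\cq_o)$; at most $|\cq'|$ mapping rules with piece-unifiers, yielding $\cq_i\in\map^-(\cq')$; and a homomorphism from $\cq_i$ to $D$ respecting the answer variables. An alternative, which may be cleaner, is to verify directly that $D$ satisfies $\qo$ under certain answers. For this we would compute $\map(D)$ (polynomial, by bounded frontier or just chase of a polynomial instance) and check $\erules(\map(D))\models \qo$ with the $\textbf{C}$/$\neq$ constraints. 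For linear rules with bounded arity this is in NP: guess a homomorphism into the chase, each image atom being certified by a polynomial-length derivation from a single atom of $\map(D)$. Everything $\textbf{C}$- or $\neq$-marked must land on constants of $D$.

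\textbf{Soundness.} For soundness, universally guess $\cq_o\in\qo$, a rewriting $\cq'\in\erules^-(\cq_o)$ of polynomial size, the mapping rules and unifiers producing $\cq_i\in\map^-(\cq')$, and a ground instantiation $D$ of $\cq_i$. We then existentially guess some $\cq_s\in\qs$ and a homomorphism to $D$. Both the universal and the existential parts must be polynomial and polynomially verifiable, which places the whole check in $\Pi^{\mathsf P}_2$. Because soundness is a $\forall$ over rewritings, we need every CQ of $\erules^-(\cq_o)$ to be representable in polynomial size; the universal player may also range over non-minimal rewritings, which is harmless since they are all sound consequences.

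\textbf{Main obstacle.} The main obstacle is the rigorous polynomial bound on rewritings under linear rules together with the interaction of $\textbf{C}$ and $\neq$ with piece-unifiers: existential head variables cannot be unified with marked variables. We would first prove a lemma. Every CQ in $\erules^-(\cq)$ is obtained, up to isomorphism, by replacing each atom of a partition of $\cq$'s atoms into pieces by a single atom over at most $k$ terms. Membership of such a CQ is witnessed by a polynomial-length sequence of linear-rule steps, since there are only polynomially many distinct atom shapes over a fixed polynomial term set when arity is bounded. With this lemma in hand, the rest follows the template of Th.~\ref{complexity-M-completeness}.
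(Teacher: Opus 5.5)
Your proposal is correct and follows the paper's route. You plug $\erules^-(\qo)$ into the $\map$-completeness/soundness templates, using that under linear rules every CQ of $\erules^-(\qo)$ has at most $|\qo|$ standard atoms, which share only terms of $\qo$. You then observe that a rewriting sequence producing such a CQ can be taken of polynomial length, since with bounded arity there are only polynomially many non-isomorphic atoms over $\terms{\qo}$ plus a few fresh variables. The only thing to drop is your optional ``cleaner'' alternative of computing $\map(D)$ outright: it is not polynomial in general for GLAV mappings with unbounded frontier, because a rule can have exponentially many triggers on $D$. There you would have to guess the few relevant triggers instead, but your main argument does not need it.
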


\begin{proof}[Proof sketch] W.l.o.g. assume $\qo$ is a CQ$^{\neq,\textbf{C}}$. We show we can guess a CQ$^{\neq,\textbf{C}}$ $q' $ from $\mathcal R^-(\qo)$ in polynomial time. Since $\erules$ is linear, any such $q'$ has at most $|\qo|$ atoms, which only share terms from $\qo$. Hence, the length of a rewriting sequence to $q'$ can be bounded by  $|\qo| \times A$, where $A$ is an upper-bound on the number of ``non-isomorphic'' atoms---with isomorphism being the identity on $\terms{\qo}$--- i.e., $A = |P| \times (|\terms{\qo}|+ a)^a$, where $P$ is the set of predicates and $a$ is the maximal arity of a predicate in $P$.  
\end{proof}

This result subsumes previous results establishing $\Pi^{\mathsf P}_2$-membership of $\Sigma$-perfectness verification with UCQs and DL-Lite$_{\mathcal R}$ 
 \cite{Lutz_et_al_KR_2018,Cima2019SemanticCO}. \footnote{We can ignore the disjointness axioms from DL-Lite$_{\mathcal R}$, as they have no impact on the complexity results.}
The theorem actually applies to any FO-rewritable pair $(\qo, \mathcal R)$ such that all the CQs in $\mathcal R^-(\qo)$ can be generated in a polynomial number of rewriting steps.

\section{Computing Minimally Complete and Perfect Abstractions}
\label{sec-min-complete}

\label{sec:min-comp}

Let us first point out that a complete abstraction of a non-Boolean query $\qs$ may not exist, simply because $\mathcal M$ may not transfer all the constants that occur in the answers to $\qs$. This is 
 independent from any target query language. 
 E.g., let $\mathcal M = \{s(x,y) \rightarrow r(y) \}$ and the CQ $q_S(u) = \exists v.s(u,v)$: $q_S$ has no complete  $\KBDM$-abstraction, for any $\KBDM$ with $\map$. Let us characterize when a UCQ$^{\neq,\textbf{C}}$  has a complete $\KBDM$-abstraction: 

\saveContent[Existence of a complete abstraction]{proposition}{prop:existence_comp_trans}{
A CQ$^{\neq\textbf{,C}}$ $\cq_S( \tuple{x})$ has a complete $\KBDM$-abstraction iff for all $x \in \tuple{x}$ there are $m \in \map$ and a homomorphism $h: \body{m} \rightarrow \cq_S(\tuple{x})$ s.t. $x \in h(\fr{m})$. A UCQ$^{\neq,\textbf{C}}$ $\qs(\tuple{x})$ has a complete $\KBDM$-abstraction iff each $\cq_i(\tuple{x}) \in \qs$ has one.
}

Hence, deciding if a non-Boolean UCQ$^{\neq,\textbf{C}}$ has a complete $\Sigma$-abstraction is NP-complete, while it is trivial for a Boolean UCQ$^{\neq,\textbf{C}}$. 

\medskip
\noindent
\textbf{UCQ$^{\neq,\textbf{C}}$ captures perfect $\Sigma$-abstractions of UCQ$^{\neq(\textbf{,C})}$ source queries.}
As already mentioned, chasing a (relevant) UCQ with $\map$ yields a UCQ that is minimally complete within this class.  
However, as illustrated by Ex. \ref{ex-perfect-ucq-dif},  the class UCQ$^{\neq,\textbf{C}}$ may provide a more faithful translation:
the UCQ $\map(\qs)$ is minimally complete within UCQs but not sound, while the UCQ$^{\neq,\textbf{C}}$ $\qo$ is a perfect abstraction.   
We now state the main result of this section: the class UCQ$^{\neq,\textbf{C}}$ captures minimally complete abstractions of source UCQs$^{\neq,\textbf{C}}$, where minimality is w.r.t. \emph{any} ontological query class (still with certain answer semantics)\footnote{See the discussion at the end of this section.}.  

\begin{theorem}[Minimal completeness] \label{th-min-complete} 
For any mapping $\map$ and any UCQ$^{\neq,\textbf{C}}$ $\qs$ that has a complete $\map$-abstraction, there is a UCQ$^{\neq,\textbf{C}}$ $\qo$ such that, for any $\Sigma$ with mapping $\map$,  $\qo$ is a minimally complete $\Sigma$-abstraction of $\qs$. 
\end{theorem}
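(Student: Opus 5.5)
We will construct $\qo$ explicitly by a \emph{modified chase} of $\qs$ with $\map$ that enumerates all ways the source query can be ``collapsed'' and records, via $\neq$ and $\textbf{C}$, exactly which target terms come from source constants. Concretely, for each CQ$^{\neq,\textbf{C}}$ $\cq_i(\tuple x)\in\qs$, we consider every partition $\pi$ of $\vars{\cq_i}$ compatible with its $\neq$-atoms (variables in the same block are identified, answer variables keep their identity up to equalities among them, handled by equality atoms on answer variables or by the usual substitution convention), yielding a specialization $\cq_i^\pi$ in which all distinct variables are pairwise distinct. Then define $q_i^\pi = \map(\cq_i^\pi)$ enriched with $\textbf{C}(x)$ for every variable $x$ of $\cq_i^\pi$ surviving in $\map(std(\cq_i^\pi))$ and with $x\neq y$ for every pair of distinct surviving variables (and constants). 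Let $\qo=\bigvee_{i,\pi} q_i^\pi$. Finiteness is immediate, and each disjunct is a legal CQ$^{\neq,\textbf{C}}$ since all $\neq$-terms carry $\textbf{C}$; existence of a complete abstraction (Prop.~\ref{prop:existence_comp_trans}) ensures that answer variables occur in $\map(\cq_i^\pi)$, so the disjuncts have the right arity.

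\emph{Completeness.} Let $\tuple c\in\qs(D)$ via $h:\cq_i\to D$. The kernel of $h$ on variables determines a partition $\pi$, and $h$ becomes injective on $\cq_i^\pi$, so it maps distinct variables to distinct constants. By monotonicity of the chase, $\map(h(\cq_i^\pi))\subseteq\map(D)$ up to renaming of nulls, hence $q_i^\pi$ maps into $\map(D)$ (and thus into $\Sigma(D)$ and into every model) with surviving variables sent to distinct constants; so $\tuple c$ is a certain answer for any $\Sigma$ with mapping $\map$ (Prop.~\ref{prop-known}, item 1).

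\emph{Minimality.} Let $\qo'$ be any complete $\Sigma$-abstraction (arbitrary query, certain answer semantics), and let $\tuple c\in\evalC{\qo}{D,\Sigma}$. By Prop.~\ref{prop-known}(2) some $q_i^\pi$ maps to $\erules(\map(D))$ by a homomorphism $g$ sending $\tuple x$ to $\tuple c$ and surviving variables to pairwise distinct constants of $D$. Build the canonical database $D_\pi$ by freezing $\cq_i^\pi$ with fresh constants, but using $g(x)$ for each surviving variable $x$ (legitimate since $g$ is injective on them and yields constants). Then $\tuple c\in\qs(D_\pi)$, so by completeness of $\qo'$, $\tuple c\in\evalC{\qo'}{D_\pi,\Sigma}$. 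The key claim is that every model of $(D,\Sigma)$ contains (up to homomorphism fixing $\consts{D}$) a model of $(D_\pi,\Sigma)$, or more precisely $\modK{D}{\Sigma}\subseteq$ something with $\cert{\qo'}{\modK{D_\pi}{\Sigma}}\subseteq\cert{\qo'}{\modK{D}{\Sigma}}$. Since $\qo'$ is an arbitrary query we cannot use homomorphism-closure; instead we aim for a direct model-inclusion: every $I\in\modK{D}{\Sigma}$ satisfies $D_\pi\cup I\models\map$ after renaming the fresh non-surviving constants of $D_\pi$ appropriately. The fresh constants of $D_\pi$ not surviving in $\map(\cq_i^\pi)$ do not appear in any mapping head, so they are irrelevant to the target; the surviving ones are exactly the constants $g(x)$. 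Thus $\map(D_\pi)$ is, up to nulls, the image under $g$ of $\map(std(\cq_i^\pi))$, which is contained in $\erules(\map(D))$ modulo nulls. One then shows $\modK{D}{\Sigma}\subseteq\modK{D_\pi}{\Sigma}$ up to the choice of interpretation of nulls — this is the main obstacle, since models are instances with constants interpreted by themselves and $g$ may map nulls of $\map(\cq_i^\pi)$ onto arbitrary elements. I would handle it by noting that a rule application on $D_\pi$ is satisfied in $I$ as soon as the corresponding head image exists in $I$, which follows because $g$'s image lies in the universal model $\erules(\map(D))$, which maps into $I$ fixing constants; composing gives the required witnesses, so $I\models$ the mapping rules fired on $D_\pi$. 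Hence certain answers of $\qo'$ on $D_\pi$ are certain on $D$, giving $\qo\sqsubseteq_\Sigma\qo'$.
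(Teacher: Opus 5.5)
Your overall architecture is the paper's. You split each disjunct by how its terms can be identified, chase with $\map$, get completeness from the chase, and get minimality by freezing a disjunct into $D_\pi$ and showing $\modK{D}{\Sigma}\subseteq\modK{D_\pi}{\Sigma}$. Your trigger-by-trigger argument via $\map(D_\pi)\to\Sigma(D)\to I$ is exactly the proof of Prop.~\ref{prop-lemma-3-notes}.

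The gap is in the split. You partition only $\vars{\cq_i}$, whereas the paper partitions $\terms{\cq_i}\cup\consts{\map}$. It then adds $v\neq t$ between every variable $v$ and every other term, including constants of $\map$ that occur only in rule bodies. This makes every grounding of a split disjunct injective on all terms and disjoint from $\consts{\map}$. Your claim that $\map(D_\pi)$ is, up to nulls, $g(\map(std(\cq_i^\pi)))$ is the paper's Lemma~\ref{lemma-split}, and you assert it without proof. It depends on that injectivity and fails without it.

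Concretely, take $\map=\{s(x,y)\rightarrow p(x,y),\ s(x,a)\rightarrow r(x),\ s_2(x,y)\rightarrow p(x,y)\}$ and $\qs(u)=\exists v.\,s(u,v)$. Your disjuncts are $\exists v.\,p(u,v)\land\textbf{C}(v)\land u\neq v$ and $p(u,u)$, since the second rule cannot fire on $s(u,v)$. On $D=\{s_2(b,a)\}$ your query returns $b$. By contrast, $(\exists v.\,p(u,v)\land\textbf{C}(v)\land u\neq v\land v\neq a)\lor(p(u,a)\land r(u))\lor p(u,u)$ is a complete abstraction that does not return $b$, so your $\qo$ is not minimally complete. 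In your minimality argument this shows up as $D_\pi=\{s(b,a)\}$ with $\map(D_\pi)=\{p(b,a),r(b)\}$: the atom $r(b)$ does not map into $\map(D)$, so the model inclusion fails.

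If instead you read your ``(and constants)'' as adding $u\neq a$ and $v\neq a$, then completeness breaks. On $D=\{s(b,a)\}$ the answer $b$ is lost, because no disjunct of yours merges $v$ with $a$. The kernel of $h$ on variables does not record that $h$ sends a variable onto a constant.

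Including $\consts{\cq_i}\cup\consts{\map}$ in the partition, with constants chosen as class representatives, repairs both directions. The rest of your proof then goes through.
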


Note that $\qo$ is also a minimally complete $\map$-abstraction (we take $\KBDM$ with $\erules = \emptyset$). 
However, a minimally complete $\map$-abstraction is generally not a minimally complete $\KBDM$-abstraction, and vice-versa; to obtain Th. \ref{th-min-complete}, we will rely on the specific abstraction computed by the $\mathcal M$-chase.  
Moreover, if there is a perfect $\KBDM$-abstraction of $\qs$, any minimally complete $\KBDM$-abstraction of $\qs$ is perfect, hence UCQ$^{\neq,\textbf{C}}$ also captures perfect abstractions:

\begin{corollary}[Perfectness] For any $\Sigma$ and  any UCQ$^{\neq,\textbf{C}}$ $\qs$, if there is a perfect $\Sigma$-abstraction of $\qs$, then it can be expressed as a UCQ$^{\neq,\textbf{C}}$. 
 \end{corollary}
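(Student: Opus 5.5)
The plan is to combine Theorem~\ref{th-min-complete} with the observation, already stated before the corollary, that any minimally complete abstraction is perfect as soon as some perfect abstraction exists. Fix $\KBDM=\kbdm$ and a UCQ$^{\neq,\textbf{C}}$ $\qs$, and let $\qo^*$ be a perfect $\KBDM$-abstraction of $\qs$, given as an arbitrary query. The goal is a UCQ$^{\neq,\textbf{C}}$ $\qo$ with $\eval{\qs}{\D}=\evalC{\qo}{\D,\KBDM}$ for every $\PS$-database $\D$.

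\emph{Step 1: meet the hypothesis of Theorem~\ref{th-min-complete}.} That theorem needs a complete $\map$-abstraction of $\qs$, whereas we only have a complete $\KBDM$-abstraction, namely $\qo^*$. I would bridge this with Proposition~\ref{prop:existence_comp_trans}. The criterion given there for a complete $\KBDM$-abstraction depends only on homomorphisms from mapping bodies into the CQs of $\qs$; it does not mention $\erules$. Since $\qo^*$ is complete, the criterion holds for every $\cq_i\in\qs$. Applying the same proposition to the specification $(\PS,\PO,\map,\emptyset)$ then shows that $\qs$ has a complete $\map$-abstraction. This is the only step with real content, and it goes through because the existence condition is insensitive to the ontology.

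\emph{Step 2: take the minimally complete abstraction.} Theorem~\ref{th-min-complete} now provides a UCQ$^{\neq,\textbf{C}}$ $\qo$ that is a minimally complete $\KBDM$-abstraction of $\qs$ for every $\KBDM$ with mapping $\map$, in particular for ours. Here minimality is with respect to arbitrary ontological queries, so it applies to $\qo^*$ whatever its syntactic form.

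\emph{Step 3: conclude perfectness.} Since $\qo^*$ is complete, minimality gives $\qo\sqsubseteq_\KBDM\qo^*$. Soundness of $\qo^*$ then yields, for every $\D$,
$$\evalC{\qo}{\D,\KBDM}\subseteq\evalC{\qo^*}{\D,\KBDM}\subseteq\eval{\qs}{\D},$$
so $\qo$ is sound. It is also complete by construction, hence $\qo$ is a perfect $\KBDM$-abstraction expressed as a UCQ$^{\neq,\textbf{C}}$. As a by-product, $\qo$ and $\qo^*$ are $\KBDM$-equivalent, since both coincide with $\qs$ on every database.
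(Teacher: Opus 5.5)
Your proof is correct and follows the paper's own argument. Theorem~\ref{th-min-complete} gives a UCQ$^{\neq,\textbf{C}}$ that is a minimally complete $\KBDM$-abstraction, and minimality with respect to the given perfect (hence complete) abstraction forces it to be sound; your Step~1, which uses the $\erules$-independence of the criterion in Proposition~\ref{prop:existence_comp_trans} to pass from a complete $\KBDM$-abstraction to a complete $\map$-abstraction, makes explicit a hypothesis-matching step that the paper leaves implicit.
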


Furthermore, it is easy to find examples 
in which $\textbf{C}$ or the limited $\neq$ is required to express a perfect abstraction of a UCQ, hence one can argue that UCQ$^{\neq,\textbf{C}}$ is a \emph{minimal} class to express perfect abstractions of UCQs and UCQ$^{\neq,\textbf{C}}$. 

To prove Th.  \ref{th-min-complete}, we first state a fundamental semantic property of OBDA systems.  

\saveContent{proposition}{prop-lemma-3-notes}{ For any databases $D$ and $D'$ on $\PS$, if  $\Sigma(D) \models \mathcal M(D')$, then: \footnote{As regards the formulation of the proposition, note that $\Sigma(D) \models \mathcal M(D')$ is stronger than $\Sigma(D) \models \Sigma(D')$: 
indeed, $\Sigma(D) \models \mathcal M(D')$ implies $ \erules(\Sigma(D)) \models \erules(\mathcal M(D'))$, with 
$\erules(\Sigma(D)) \equiv \Sigma(D)$ and $\erules(\mathcal M(D')) \equiv \Sigma(D')$
}  
\begin{enumerate}
\item  $\modK{\D}{\KBDM} \subseteq \modK{\D'}{\map}$.  
\item Hence: for any $\qo$ on $O$, $\evalC{\qo}{\D',\map}\subseteq \evalC{\qo}{\D,\KBDM}$. 
\end{enumerate} 
}

We now explain how to build the desired abstraction. Only $\map$ is required (not $\erules$), the resulting query being minimally complete for any $\KBDM$ with mapping $\map$. In a nutshell, before chasing $\qs$, we first split each $\cq_i \in \qs$ into an equivalent UCQ$^{\neq,\textbf{C}}$,
whose CQs encode all the ways of mapping $\cq_i$ to a database: 
terms substituted identically are merged, remaining terms are declared distinct ($\neq$) and marked by $\textbf{C}$.   
We will show that chasing the output with $\map$ yields the desired minimally complete $\Sigma$-abstraction. 

Note that a similar split operation is presented in \cite{cima_monotone_2022} to compute minimally complete abstractions expressed in a more complex target language. Such operation is also commonly used to build inverse mappings, see e.g. Ex. \ref{ex:max_recov_of_glav_mapping}. 
For the sake of self-containedness, and to include the processing of constants, we detail our split operation next. 
Given a CQ$^{\neq, \textbf{C}}$ $q$, a partition of $\terms{q}$ is said \emph{admissible} if none of its classes contains two constants nor both terms of a $\neq$-atom from $q$. Informally, each class of the partition gathers the terms of $q$ mapped to the same database constant. 
To each admissible partition  $P_\sigma$ can be assigned a substitution $\sigma$, which is obtained by (1) selecting one term $t_i$ in each class $C_i \in P_\sigma$, with priority given to constants, then to answer variables if any, and (2) setting $\sigma(t_j) = t_i$ for each $t_j \in C_i$ s.t. $t_j \neq t_i$. 
E.g., to $P_\sigma = \{ \{x,u\}, \{v,w,a\}\}$, with $x$ an answer variable and $a$ a constant, is assigned $\sigma = \{ u \mapsto x, v \mapsto a, w \mapsto a \}$.  
 Given a UCQ$^{\neq,\textbf{C}}$ $\qs(\tuple{x})$, 
$\textit{split}(\qs)$ is a UCQ$^{\neq,\textbf{C}}$ built as follows:

\noindent
\rule{\linewidth}{0.2mm}
\hspace*{0.2cm}Let $\textit{split}(\qs) = \emptyset$\\
\hspace*{0.2cm}For each $q_i \in \qs$\\
\hspace*{0.5cm} For each admissible $P_\sigma$ on $\terms{q_i} \cup \consts{\map}$ \\
\hspace*{0.8cm} Let $q' = \sigma(q_i)$  \quad \emph{// $q'$ is consistent} \\
\hspace*{0.8cm} For any $v \in \vars{q'}$  \\
\hspace*{1.1cm} If $v \not \in ~\tuple{x}$ then add $\textbf{C}(v)$ to $q'$ \\
\hspace*{1.1cm}  For any $t \in \terms{q'} \cup \consts{\map}$ with $v \neq t$\\
\hspace*{1.3cm}  Add $v \neq t$ to $q'$\\ 
\hspace*{0.8cm}  Add $q'$ to $\textit{split}(\qs)$\\
\rule{\linewidth}{0.2mm}
Note that $q'$ may include atoms of the form $v \neq c$, where $c$ is a constant from $\map$ that does not occur in \emph{std}$(q')$: this is necessary to ensure the desired behavior of $\textit{split}(\qs)$ (see Lemma  \ref{lemma-split}).  It is easy to check that for any database $D$, $\qs(D) = \textit{split}(\qs)(D)$.

\begin{example}(Minimally complete abstraction) Consider again Example \ref{ex-perfect-ucq-dif} with $\cq_s(u) = \exists v. s_1(u,v)$. 
$$\mathit{split}(\cq_s(u)) = (\exists v. s_1(u,v)
\land \textbf{C}(v) \land u \neq v) \lor s_1(u,u)$$
$$\mathcal \map(\mathit{split}(\cq_s(u))) =  \qo(u)$$
\end{example}

The following lemma states the crucial property of \textit{split$(\qs)$}. We call \emph{grounding} of a CQ$^{\neq,\textbf{C}}$ $q_S$ a substitution $\sigma$ of each variable in $q_S$ by a constant s.t. $\sigma(q_S)$ is consistent. The key point is that any grounding of a $q_s^i \in \textit{split}(\qs)$ is \emph{injective}. It follows that any rule body maps ``in the same way'' to $q_s^i$ and to any of its groundings:

\begin{lemma}\label{lemma-split} Let $\sigma^i_s$ be a grounding of $q_s^i \in \textit{split}(\qs)$. Then: 
$\map( \sigma^i_s(q_s^i)) \equiv \sigma^i_s(\map(q_s^i))$. 
\end{lemma}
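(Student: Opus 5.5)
The plan is to show that the grounding $\sigma^i_s$ induces a bijection between the rule applications on $q_s^i$ and those on $\sigma^i_s(q_s^i)$. The equivalence $\map(\sigma^i_s(q_s^i)) \equiv \sigma^i_s(\map(q_s^i))$ then follows because the oblivious chase produces, for each rule application, a copy of the rule head with fresh nulls.

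The first step is to establish that $\sigma^i_s$ is injective on $\terms{q_s^i} \cup \consts{\map}$ and does not send a variable of $q_s^i$ to a constant of $\consts{\map}$ or of $q_s^i$. By construction of \textit{split}, $q_s^i$ contains $v \neq t$ for every variable $v$ and every other term $t \in \terms{q_s^i}\cup\consts{\map}$. Since a grounding must make $\sigma^i_s(q_s^i)$ consistent, it must respect all these inequalities. Hence $\sigma^i_s$, extended by the identity on constants, is a bijection from $\terms{q_s^i}$ onto $\terms{\sigma^i_s(q_s^i)}$ that fixes exactly the constants. Moreover, every constant of $\map$ occurring in $\sigma^i_s(q_s^i)$ is the image of itself and of no variable.

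The second step is the key claim: for each $m \in \map$, the map $h \mapsto \sigma^i_s \circ h$ is a bijection between homomorphisms $\body{m} \rightarrow std(q_s^i)$ and homomorphisms $\body{m} \rightarrow \sigma^i_s(std(q_s^i))$. Composition clearly preserves homomorphisms. For the converse, take $h' : \body{m} \rightarrow \sigma^i_s(q_s^i)$ and set $h = (\sigma^i_s)^{-1} \circ h'$, using the injectivity from the first step. Then $h$ maps atoms to atoms of $q_s^i$, because $\sigma^i_s$ is a bijection on terms and hence on atoms. Constants of $\body{m}$ belong to $\consts{\map}$, so they are sent to themselves: $h'(c)=c$ forces $h(c)=c$, since no variable of $q_s^i$ is mapped by $\sigma^i_s$ to $c$. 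This constant-handling point is where I expect the main subtlety to lie, and it is exactly why \textit{split} adds the atoms $v \neq c$ for constants $c \in \consts{\map}$ that do not occur in $std(q')$. Injectivity of the correspondence is immediate from injectivity of $\sigma^i_s$.

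Finally, I would conclude that the chase of $\sigma^i_s(q_s^i)$ fires, for each $m$, exactly the triggers $(m, \sigma^i_s \circ h)$, so it produces $\sigma^i_s(h(\head{m}))$ with fresh existentials. By the bijection, the set of $\target$-atoms obtained is precisely $\sigma^i_s$ applied to those produced on $q_s^i$, up to renaming of nulls. Since $\map$ is source-to-target, a single round of triggers suffices. The $\textbf{C}$- and $\neq$-atoms added in $\map(q_s^i)$ become trivially true, or are dropped, once their variables are grounded injectively. This yields the two homomorphisms witnessing $\map(\sigma^i_s(q_s^i)) \equiv \sigma^i_s(\map(q_s^i))$, and in fact isomorphism up to nulls.
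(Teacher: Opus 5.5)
Your proposal is correct and follows essentially the same route as the paper. The paper derives the lemma from a two-point proposition: composing with $\sigma^i_s$ preserves homomorphisms from $\body{m}$, and conversely $(\sigma^i_s)^{-1}\circ h'$ is a homomorphism into $q^i_s$ thanks to the injectivity forced by the $\neq$-atoms of \textit{split}, with the same care that no variable is grounded to a constant of $\map$. You also spell out the final chase step (the bijection of oblivious-chase triggers and the handling of the added $\textbf{C}$- and $\neq$-atoms), which the paper leaves implicit.
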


\vspace*{-0.2cm}

\begin{proof}[\textbf{Proof of Th. \ref{th-min-complete} (Sketch)}] 
Let $\qo(\tuple{x}) = \map(\textit{split}(\qs))$. 
We show that $\qo$ is a minimally complete $\Sigma$-abstraction of $\qs$, for any $\Sigma$ with mapping $\map$. Completeness follows from the properties of the $\map$-chase. To prove that $\qo$ is minimally complete, we consider any $D$ and $\tuple{c} \in  \eval{\qo}{\Sigma(\D)}$ and show that $\tuple{c}$ is a certain answer to any complete $\Sigma$-abstraction of $\qs$. Let $q^{i}_O \in \qo$ that maps by $h_i$ to $\Sigma(\D)$ with $h_i(\tuple{x}) = \tuple{c}$. Let $q^{i}_S \in \textit{split}(\qs)$ such that $q^{i}_O = \map(q^{i}_S)$. Let $D_i$ be  obtained by a grounding of $h_i(q^{i}_S)$. Since $h_i(\tuple{x}) = \tuple{c}$, $\tuple{c} \in \eval{q^{i}_S}{D_i}$. Since $q^{i}_O$ is a complete $\Sigma$-abstraction of $q^{i}_S$, $\tuple{c} \in \eval{q^{i}_O}{\Sigma(D_i)}$. We have $\map(h_i(q^{i}_S)) \equiv \map(D_i)$, hence, from Lemma \ref{lemma-split}, $\map(D_i)$ maps to $h_i(q^{i}_O)$. Since $h_i(q^{i}_O) \subseteq \Sigma(D)$,  $\map(D_i)$ maps to $\Sigma(D)$. 
So, by Prop. \ref{prop-lemma-3-notes}, for all ontological query $Q$, $\evalC{Q}{D_i, \Sigma} \subseteq \evalC{Q}{D, \Sigma}$, hence if $Q$ is $\Sigma$-complete then $\tuple{c} \in \evalC{Q}{D, \Sigma}$.  
\end{proof}

\noindent \textbf{Complexity of expressibility}
With these results in hand, we can now study the complexity of determining whether a UCQ$^{\neq,\textbf{C}}$ $\qs$ has a perfect $\map$-abstraction. Let us say that $\map$ is \emph{frontier-bounded} if the frontier of all its rules is bounded by a constant. 
From \cite{Lutz_et_al_KR_2018} we know that UCQ $\map$-expressibility is $\Pi^{\mathsf P}_2$-complete in a GAV setting with bounded predicates (in rule heads). We observe that $\Pi^{\mathsf P}_2$-membership can be extended to GLAV mappings with unbounded predicate arity provided that $\map$ is frontier-bounded. Indeed, for a CQ $q_{s{_i}} \in Q_S$, $\map(q_{s{_i}})$ is built from rule heads whose frontier is substituted by $\terms{q_{s{_i}}}$; hence, for each $m \in \map$, the number of substitutions that need to be considered is bounded by $\terms{q_{s{_i}}}^{|\fr{m}|}$. Therefore, when $\map$ is frontier-bounded, we can build each $\map(q_{s{_i}})$ by making a polynomial number of calls to an NP oracle, asking for each $m \in \map$ with $\fr{m} = (x_1, \ldots, x_k)$ (according to an arbitrary total ordering of the frontier variables) and tuple $t = (y_1, \ldots, y_k) \in \terms{q_{s{_i}}}^k$ if there is a homomorphism $h: \body{m} \rightarrow q_{s{_i}}$ such that $h(\fr{m}) = t$. 
When $\map$ is not frontier-bounded, $\map(Q_S)$ can be computed in ExpTime, which yields a Co-NExpTime upper bound.
These arguments can be generalized to (unrestricted) $\map$-expressibility of UCQ$^{\neq,\textbf{C}}$ queries, as shown next. 

\begin{theorem}(Complexity of $\map$-expressibility)  UCQ$^{\neq,\textbf{C}}$ $\mathcal M$-expressiblity is $\Pi^{\mathsf P}_2$-complete when $\map$ is frontier-bounded, otherwise it is in Co-NExpTime.
\end{theorem}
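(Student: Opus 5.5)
By Theorem \ref{th-min-complete} and its proof, whenever $\qs$ has a complete $\map$-abstraction, $\qo = \map(\textit{split}(\qs))$ is a minimally complete $\map$-abstraction. Any perfect abstraction is complete, and so is any minimally complete one. Hence a perfect $\map$-abstraction of $\qs$ exists iff two conditions hold: (i) $\qs$ satisfies the condition of Prop. \ref{prop:existence_comp_trans}, and (ii) $\map(\textit{split}(\qs))$ is sound, i.e. $\map^-(\map(\textit{split}(\qs))) \sqsubseteq \qs$. The decision procedure checks (i) and then (ii).

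Condition (i) is in NP, since we guess, for each answer variable, a rule and a homomorphism. It therefore fits inside a $\Pi^{\mathsf P}_2$ procedure. For (ii), we use the same universal/existential pattern as in Th. \ref{complexity-M-completeness}. Universally, we choose a CQ $q_i \in \qs$ and an admissible partition $P_\sigma$ of $\terms{q_i} \cup \consts{\map}$; both have polynomial size, so this yields a disjunct $q'$ of $\textit{split}(\qs)$. We then need $\map(q')$. The key point is that the atoms of $\map(q')$ are generated by pairs $(m, t)$ with $t \in \terms{q'}^{|\fr{m}|}$ such that $\body{m}$ maps to $q'$ sending $\fr{m}$ to $t$. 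Each such pair contributes a copy of $\head{m}$ with fresh existentials.

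When $\map$ is frontier-bounded, there are polynomially many pairs. The universal player can guess the set of applicable pairs together with witness homomorphisms. The correctness of the "not applicable" claims must also be verified, and this is a coNP check that fits in the universal layer of $\Pi^{\mathsf P}_2$; alternatively, the whole computation is in $\mathsf{P}^{\mathsf{NP}}$, which the $\Pi^{\mathsf P}_2$ machine absorbs. We then universally pick a CQ in $\map^-(\map(q'))$ by guessing a polynomial set of rules and piece-unifiers, and a grounding $D$ of it. Existentially, we guess a disjunct of $\qs$ and a homomorphism to $D$. This expresses soundness as a $\forall\exists$ statement with polynomial certificates. One subtlety deserves care: the CQs of $\map^-(\map(q'))$ that matter have size polynomial in $|\map(q')|$, because rewriting with a GLAV mapping is one-step, source to target. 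This gives $\Pi^{\mathsf P}_2$ membership.

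Without frontier-boundedness, $\map(\textit{split}(\qs))$ may be exponential but is computable in ExpTime. The same universal guess of a rewriting CQ and grounding, followed by an existential polynomial homomorphism check (or a deterministic exponential check), gives a Co-NExpTime procedure: the complement guesses an exponential counterexample database $D$, verifies $D \models \map^-(\map(\textit{split}(\qs)))$ via a homomorphism from some CQ of the chased query, and verifies $D \not\models \qs$ in exponential time.

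For hardness in the frontier-bounded case, we reduce from $\map$-soundness verification (Th. \ref{complexity-M-soundness}), or directly from UCQ$^{\neq,\textbf{C}}$ containment (Th. \ref{th-QC}). Given $Q_1 \sqsubseteq Q_2$, we use a mapping that merges the two source copies, in the style of Example \ref{ex-simple}: for instance, a source query $Q_2$ over fresh copies together with GAV rules sending both copies' predicates to common target predicates. Then a perfect abstraction exists iff $Q_1 \sqsubseteq Q_2$. Alternatively, we can adapt the known $\Pi^{\mathsf P}_2$-hardness of UCQ $\map$-expressibility in the GAV, bounded-arity case \cite{Lutz_et_al_KR_2018}, checking that in their reduction the split-based abstraction does not gain expressibility. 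I expect this last point, ensuring that the richer language does not make instances expressible, to be the main obstacle. I would first try the simpler reduction from containment via predicate merging, which keeps frontiers bounded.
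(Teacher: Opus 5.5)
Your membership argument is essentially the paper's, in dual form. The paper places the complement in $\Sigma^{\mathsf P}_2$ (resp.\ NExpTime): it guesses a disjunct of $\textit{split}(\qs)$, computes its $\map$-chase with polynomially many NP-oracle calls (resp.\ in ExpTime), and guesses a rewriting of polynomial (resp.\ exponential) size not contained in $\qs$. Your explicit check of the complete-abstraction condition is a detail the paper leaves implicit.

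The gap is in the hardness half. The reduction you say you would try first is never pinned down: where does $Q_1$ occur in the source query? The natural reading fails even on positive instances. Under that reading, two full copies of the schema are sent by GAV rules to common target predicates, and one query is written over the copy. Take for instance $Q_S = Q_1 \lor Q_2^{c}$, with $Q_2$ over the copy. A grounding of $Q_2$ over the original predicates that falsifies $Q_1$ has the same $\map$-image as its copy-renamed twin, yet only the twin satisfies $Q_S$. So $Q_S$ is inexpressible even when $Q_1 \sqsubseteq Q_2$.

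What works is to merge only a fresh flag, exactly as in Example~\ref{ex-simple}:
\begin{itemize}
\item keep the predicates of $Q_1, Q_2$ with identity rules $p(\vec x)\rightarrow p'(\vec x)$;
\item add the rules $s_1(x)\rightarrow r(x)$ and $s_2(x)\rightarrow r(x)$;
\item take $Q_S = (\exists z.\, s_1(z)\land Q_1)\lor Q_2$.
\end{itemize}
If $Q_1\sqsubseteq Q_2$, then $Q_S\equiv Q_2$ on databases, and the primed copy of $Q_2$ is a perfect abstraction. Otherwise, pick a database $D_0$ with $D_0\models Q_1$ and $D_0\not\models Q_2$. Then $D_0\cup\{s_1(c)\}$ and $D_0\cup\{s_2(c)\}$ have the same $\map$-image, hence the same models, while only the first satisfies $Q_S$. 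Combined with Theorem~\ref{th-QC} (arity at most $3$), this gives $\Pi^{\mathsf P}_2$-hardness already for GAV mappings with frontier at most $3$.

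This reduction also sidesteps the issue you correctly flag with the citation route, which is the one the paper takes. Hardness of UCQ-expressibility from \cite{Lutz_et_al_KR_2018} does not transfer automatically to the UCQ$^{\neq,\textbf{C}}$ problem. Even a GAV mapping can make a UCQ expressible in UCQ$^{\neq,\textbf{C}}$ but not in UCQ: Example~\ref{ex-perfect-ucq-dif} with rule $m_4$ removed is such a case.
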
 

\begin{proof}[Proof sketch] To check that  $Q_s$ is \emph{not} $\map$-expressible, 
one can guess a CQ$^{\neq}$ from $Q_s$ and a partition on its terms,
 which yields a CQ$^{\neq}$ $q_{s{_i}}$ from split($Q_s$), 
 compute $q_{o{_i}}=\map({q_{s{_i}}})$ 
 and guess  a rewriting $q'_{s{_i}}$ of $q_{o{_i}}$ (of polynomial size in $q_{o{_i}}$) 
 such that $q'_{s{_i}} \not \sqsubseteq Q_s$ (test in $\Sigma^P_2$).
 If $\map$ is frontier-bounded,  $q_{o{_i}}=\map({q_{s{_i}}})$ can be built by making a polynomial number of calls to an NP oracle, otherwise, it can be computed in ExpTime. Hence, the (co-)problem is in $\Sigma^P_2$ if $\map$ is frontier-bounded, otherwise in NExpTime. 
 $\Pi^{\mathsf P}_2$-hardness follows from \cite{Lutz_et_al_KR_2018}. 
\end{proof}

\noindent \textbf{Discussion on related frameworks} 
We will now  discuss our framework further in relationship with previous work.

As shown above, one can decide if a UCQ$^{\neq,\textbf{C}}$  $\qs$ has a perfect $\map$-abstraction by simply checking if $\q_O = \map(\textit{split}(\qs))$ is a sound $\map$-abstraction,
i.e., $\map^-(\q_O) \sqsubseteq \q_S$. This may seem contradictory with other results from the literature. In particular, it is shown in \cite{DBLP:conf/lics/CimaCLP21} that determining if a CQ has a perfect $\map$-abstraction is undecidable.
In fact, the crucial point is the semantics of ontological queries. We consider the widely adopted semantics of certain answers. As a consequence, ontological queries are necessarily monotone, in the following sense:  $Q_O$ is \emph{monotone} if for all $D_1,D_2$ on $S$, 
if $\modK{\D_2}{\KBDM} \subseteq \modK{\D_1}{\KBDM}$ then any answer to $Q_O$ on $(D_1, \KBDM)$ is an answer to $Q_O$ on $(D_2, \KBDM)$. 
 This is a corollary of our Prop. \ref{prop-lemma-3-notes}.  A more general notion of ontological query is investigated in the above-mentionned paper, which allows for non-monotone queries. Note that a source query may have no perfect monotone abstraction but a perfect abstraction in this more general setting, which is studied in \cite{Cima2020NonMonotonicOA}.

\smallskip
The $\textbf{C}$ predicate was introduced in \cite{fagin2008quasi}, under the name \emph{is-constant}, to define specific kinds of inverses of GLAV mappings, which are disjunctive$^{\neq,\textbf{C}}$ mappings (Definition \ref{def-disj-mapping}). It has been commonly used since then in the data exchange litterature, not only in inverse mappings but also in queries, see e.g., \cite{Arenas2008TheRO}. Similar notions have also been studied in KR, as closed-world variables \cite{DBLP:conf/ijcai/AmendolaLMV18}, or nominal variables in description logics \cite{DBLP:conf/kr/KrotzschR14}. 
 We think that $\textbf{C}$ yields a very simple and effective way of dealing with unknown values introduced by mappings (and ontologies). First, it is easy to understand for a user and its introduction has no impact on computational complexity. Second, it can be handled using off-the-shelf tools. Indeed, one can slighly modify the mapping by adding a head atom $\textbf{C}(t)$ for each frontier variable or constant $t$: then, $\textbf{C}$-atoms in ontological queries can be processed just like the standard atoms. 
 A more general way of introducing some closed world reasoning would have been to extend queries with the epistemic operator $\textbf{K}$   (``is known''), as in  \cite{Cima2020NonMonotonicOA,cima_monotone_2022}. However, this operator does not have the simplicity of $\textbf{C}$, which seems a better choice to us in the context of  abstractions under standard certain answer semantics. 
 On the other hand, $\textbf{K}$ comes into its own in the context of \emph{non-monotone} ontological queries, which inherently requires a more general semantics than certain answers.

\vspace*{-0.1cm}
\section{Computing Maximally Sound Abstractions}\label{sec:max_sound}
\label{sec-max-sound}

A (U)CQ$^{(\neq)}$ always has a sound abstraction (the empty UCQ, which has no answer) but may not have a \emph{maximally sound} abstraction expressible as a UCQ$^{(\neq, \textbf{C})}$. Finding a suitable language for such abstractions remains open. In this section, we make progress by providing a characterization of maximally sound $\map$-abstractions of UCQ$^{(\neq)}$s, which we further extend to $\KBDM$-abstractions with \emph{fus} rules. For that, we rely on specific inverse mappings from $\PO$ to $\PS$, which, as explained later, correspond to so-called \emph{maximum recoveries}.  Such mappings have disjunctive heads, as defined next. 

\begin{definition} [Disjunctive$^{\neq, \textbf{C}}$ mapping]\label{def-disj-mapping} 
A \emph{disjunctive} (resp. \emph{disjunctive$^{\neq, \textbf{C}}$}) \emph{mapping} from a source schema $\source$ to a target schema $\target$ is a set of $\source$-to-$\target$ disjunctive existential rules of the form 
$\forall\tuple{x}.(\exists \vect{y}. B[\vect{x},\tuple{y}]) \rightarrow \bigvee\limits^n_{i=1} \exists \vect{z_i}. H_i[\vect{x}, \vect{z_i}]$, where $B$ is a $CQ$ (resp. a $CQ^{\neq, \textbf{C}}$) and each $H_i$ is a CQ, all with answer variables  $\vect{x}$. 
\end{definition} 

Given a (GLAV) mapping $\map$ from $S$ to $O$, we will consider a disjunctive$^{\neq, \textbf{C}}$ mapping $\map_\lor$ from $O$ to $S$, which has the property of being a maximum recovery of $\mathcal M$  \cite{Arenas2009TheRO,Arenas2009InvertingSM}. Before entering into the formal framework of maximum recoveries, we first explain how $\map_\lor$ is built. Briefly, each rule of $\map_\lor$ is obtained by rewriting a rule head from $\map$ against $\map$. Precisely, for each $m\in \map$ with head  $\exists \vect y. H[\vect x, \vect y]$ (seen as a CQ), $\map_\lor$ has the following disjunctive rule:  

\vspace*{-0.3cm}
$$\forall\tuple{x}.(\exists \vect y. H[\vect x, \vect y] \land \textbf{C} [\vect x]) \rightarrow \map^-({\exists \vect y. H[\vect x, \vect y]})$$ 

This is illustrated in Ex. \ref{ex:max_recov_of_glav_mapping}. As shown in the example, the head of the obtained rule may contain equalities; these equalities can be turned into inequalities in the rule body, by a split operation similar in spirit to that described in Sect. \ref{sec-min-complete}, which yields a rule complying with  Def. \ref{def-disj-mapping}; see \cite{Arenas2009InvertingSM} for details. 

\begin{example}\label{ex:max_recov_of_glav_mapping}Let $\map = $\\
$\begin{cases} 
    m_1 = s_1(x)   \rightarrow \exists y. p(x,y) &      m_3 = s_3(x,y)   \rightarrow r(x,y)\\
    m_2 = s_2(x,y)   \rightarrow p(x,y) &    m_4 = s_4(x)  \rightarrow r(x,x)  
    \end{cases}$
    \\
By rewriting the CQs $\head{m_i}(\fr{m_i})$, one gets $\map'$=
\\ 
 $\begin{cases}
    m_1' = p(x,y) \land \textbf{C}(x) &\hspace{-0.2cm}\rightarrow s_1(x) \lor \exists z. s_2(x,z) \\
    m_2' = p(x,y) \land \textbf{C}(x) \land \textbf{C}(y) &\hspace{-0.2cm}\rightarrow s_2(x,y) \\
   m_3' = r(x,y) \land \textbf{C}(x) \land \textbf{C}(y) &\hspace{-0.2cm}\rightarrow s_3(x,y) ~\lor 
   \\ & \hspace{0.2cm}(s_4(x) \land x=y) \\
    m_4' = r(x,x) \land \textbf{C}(x) &\hspace{-0.2cm}\rightarrow s_3(x,x) \lor s_4(x)
 \end{cases}$
\\
Moreover, Rule $m_3'$ with equality can be replaced by two rules obtained by considering that, in $\body{m_3'}$, either \mbox{$x=y$} (which yields $m_4'$, already present) or $x\neq y$, which yields: 
$m_3'' = r(x,y) \land \textbf{C}(x) \land \textbf{C}(y)\land x \neq y \rightarrow s_3(x,y)$ 
\\
Finally, $\map_\lor = \{m_1', m_2', m_3'', m_4'\}$. 
\end{example}

We furthermore consider the rewriting operator for UCQs against \emph{disjunctive} mappings introduced in \cite{Leclre2023QueryRW}. This operator is sound and complete and yields a possibly infinite disjunction of CQs. Its extension to UCQ$^{\neq,\textbf{C}}$s and disjunctive$^{\neq, \textbf{C}}$ mappings is straightforward. We can now outline our characterization of maximal sound $\map$-abstractions:  For any mapping $\map$ (from $\PS$ to $\PO$) and UCQ$^{\neq,\textbf{C}}$ $\ucq_S$ on $\PS$,
let $\map_{\lor}$ be the disjunctive$^{\neq, \textbf{C}}$ mapping from $\PO$ to $\PS$ built as above; then, $\map_\lor^-(\qs)$, i.e., the rewriting of $\qs$ against  $\map_{\lor}$, is a maximally sound abstraction of $\qs$. This result relies on the fact that  $\map_{\lor}$ is a maximum recovery of $\map$ and is proven in Th. \ref{th-max-recovery}. 

\smallskip
\noindent
\textbf{Maximum recoveries} 
The following definitions and results come from \cite{Arenas2009TheRO,Arenas2009InvertingSM}. 
The notion of a maximum recovery is defined on \emph{abstract mappings}, which may then be \emph{specified} by \emph{concrete} mappings, i.e.,  provided with a specific syntax (e.g., GLAV). 
An \emph{abstract mapping}  $\mapa$  from a schema $\source$ to a schema $\target$ is any relation from the $\source$-instances to the $\target$-instances.\footnote{In the cited work, instances are finite, but the definitions work in the infinite case.}
Let $\q_\target$ be a query on $\target$. 
Given an $\source$-instance $I$, we denote by $\certain{\mapa}{\q_\target}{I} = \bigcap_{(I,J) \in \mapa} \q_\target(J)$ the \emph{certain answers} to $\q_\target$  through $I$ and $\mapa$. 
 A query $\q_\source$ on $\source$ such that $\q_\source(I) = \certain{\mapa}{\q_\target}{I}$ for all instance $I$ on $\source$, is called a \emph{perfect rewriting} of $\q_\target$ through $\mapa$ (such $\q_\source$ may not exist).
The composition of two abstract mappings $\mapa$ and $\mapa'$ is denoted by $\mapa \bullet \mapa'$.\footnote{We use $\bullet$ to avoid confusion with the classical $\circ$: $\mapa \bullet \mapa'$ can be read $\mapa' \circ \mapa$.} 
Given an abstract mapping $\mapa$ from $\source$ to $\target$, the abstract mapping $\rmap$ from $\target$ to $\source$ is a \emph{recovery} of  $\mapa$ if for any query $\q_\source$ on $\source$ and instance $I$ on $\source$, $\certain{\mapa \bullet \rmap}{\q_\source}{I} \subseteq \q_\source(I)$; and $\rmap$ is a \emph{maximum} recovery if, moreover, for any recovery $\rmap'$ of $\mapa$, $\certain{\mapa \bullet \rmap'}{\q_\source}{I} \subseteq \certain{\mapa \bullet \rmap}{\q_\source}{I}$.

An abstract mapping $\mapa$ from $\source$ to $\target$ is \emph{specified} by a (GLAV) mapping $\map$ from $\source$ to $\target$ if: for every pair of $(\source,\target)$-instances $(I,J)$, $(I,J) \in \mapa$ iff $I \cup J \models \map$.
 In this case, $\certain{\mapa}{\q_\target}{I} = \evalC{\q_\target}{\I,\map}$ holds for any query $\q_\target$ and, when $\q_\target$ is a UCQ$^{\neq,\textbf{C}}$, $\mrew{\q_\target}$ is a perfect rewriting of $\q_\target$ through $\mapa$. 
 Not all abstract mappings have a maximum recovery. However,  
when the  source instances are ground,  
a GLAV mapping always has one, taking the form of a disjunctive$^{\neq, \textbf{C}}$ mapping (Def. \ref{def-disj-mapping}).\footnote{More precisely: For every GLAV mapping $\map$, which specifies an abstract mapping $\mapa$, there is a concrete mapping $\map_\lor$ that specifies a  maximum recovery of  $\mapa$ and can be expressed as a disjunctive mapping$^{\neq, \textbf{C}}$. For the sake of simplicity, we say that $\map_\lor$ is a (concrete) maximum recovery of $\map$. }

\smallskip
\noindent \textbf{Maximally sound $\map$-abstractions} 
Let $\mapa$ be an abstract mapping specified by a GLAV mapping $\map$. 
The following lemma shows that a perfect rewriting of a source query $\q_\source$ through a maximum recovery of $\mapa$ behaves similarly to a maximally sound $\source$-to-$\target$ translation of $\q_\source$ through $\map$ (i.e., an $\map$-abstraction of $\q_\source$ when $\source = S$ and $\target = O$).  Indeed, Point (1) corresponds to the soundness of an $\map$-abstraction and Point (2) to its maximality.

\saveContent[]{lemma}{lem:link_max_recov_max_sound_trans}{
Let  $\q_\source$ be a query on $\source$, $\mapa$ be an abstract mapping from $\source$ to $\target$ that has a maximum recovery $\rmap$. Let $\q_\target$ be a perfect rewriting of $\q_\source$ through $\rmap$. 
Then, for any $\source$-instance $I$: (1) $\certain{\mapa}{\q_\target}{I} \subseteq \q_\source(I)$, and (2) $\certain{\mapa}{\q'_\target}{I} \subseteq \certain{\mapa}{\q_\target}{I}$ for any query $\q'_\target$ such that $\certain{\mapa}{\q'_\target}{I} \subseteq \q_\source(I)$.
}

Finally, Th. \ref{th-max-recovery} directly relies on Lemma \ref{lem:link_max_recov_max_sound_trans}:

\saveContent[]{theorem}{th-max-recovery}{
    Let $\map$ be a (GLAV) mapping from $S$ to $O$, $\map_\lor$ be a disjunctive$^{\neq,\textbf{C}}$ mapping that is a (concrete) maximum recovery of $\map$, and $\q_S$ be a UCQ$^{\neq,\textbf{C}}$ on $S$. Then $\Srewriting{\q_S}{\map_\lor}$ is a maximally sound $\map$-abstraction of $\q_S$.
}

In general, $\Srewriting{\q_S}{\map_\lor}$ is a possibly infinite disjunction of UCQ$^{\neq,\textbf{C}}$. Yet, the next proposition gives cases where  it is a UCQ$^{\neq, \textbf{C}}$.
In such cases, the rewriting algorithm from \cite{Leclre2023QueryRW} can be used to effectively output a maximally sound $\map$-abstraction. 

\saveContent[]{proposition}{thm:maximally_sound_UCQ_abstraction_we_can_compute}{
   The maximally sound $\map$-abstraction of a UCQ$^{\neq,\textbf{C}}$ $\ucq$ is a UCQ$^{\neq,\textbf{C}}$ when:
    \begin{enumerate}
        \item      $\mrew{\head{m}}$ is a CQ$^{\neq, \textbf{C}}$ for all $m\in \map$; or: 
        \item $\ucq$ contains only full CQs$^{\neq, \textbf{C}}$ (i.e., without existential variables) and $\map$ is GAV; or: 
        \item $\ucq$ contains only  atomic CQs$^{\neq, \textbf{C}}$ (i.e., with at most one standard atom). 
    \end{enumerate}
}

\vspace*{-0.2cm}
\begin{proof} Let $\map_{\lor}$ be a maximum recovery of $\map$. 
    (1) $\map_{\lor}$ is a conjunctive mapping. 
    (2)    All rules in $\map_{\lor}$  are lossless (all body variables are frontier) which guarantees to get a UCQ$^{\neq, \textbf{C}}$-rewriting from any full CQ$^{\neq, \textbf{C}}$.  (3) Disjunctive source-to-target rule sets guarantee to get a UCQ$^{\neq, \textbf{C}}$-rewriting from any atomic CQ$^{\neq, \textbf{C}}$.
    Points (2) and (3) follow from the rewriting algorithm in  \cite{Leclre2023QueryRW}.  
\end{proof}

\noindent
\textbf{Note.} We remark that Th. \ref{th-max-recovery} contradicts a result from  \cite{Arenas2009InvertingSM} (see Th. 4.4).
This result states that  a recovery that maximally recovers answers to CQs (not UCQs),  called a CQ-maximum recovery, can be specified by a conjunctive mapping (i.e., without disjunctive heads). 
But then, the rewriting of a CQ through a CQ-maximum recovery would always be finite (this is a property of conjunctive mappings), hence a CQ would always have a maximally sound $\map$-abstraction as a UCQ$^{\neq, \textbf{C}}$, which is false. 

\smallskip
\noindent \textbf{Maximally sound $\KBDM$-abstractions}
 We now extend previous results to an OBDA specification with a \emph{fus} ontology $\erules$. 
 A suitable disjunctive$^{\neq, \textbf{C}}$ mapping from $\PO$ to $\PS$, say $\map_\lor^{\Sigma}$, is obtained by rewriting each rule head of $\map \cup \erules$ against $\map \cup \erules$. For a rule
 head $\exists \vect{y}. H[\vect x, \vect y]$, this yields the disjunctive rule $\forall \vect{x}. (\exists \vect{y}. H[\vect x, \vect y] \land \textbf{C} [\vect x]) \rightarrow \Srewriting{\exists \vect y. H[\vect x, \vect y]}{\Sigma}$. 
 To bring OBDA specifications into the maximum recovery framework, we say that an abstract 
mapping $\map_\mathcal{A}$ from $\PS$ to $\PO$ is \emph{specified} by 
$\KBDM=\kbdm$  if, for all $\PS$-database $D$ and $\PO$-instance $J$, $(D,J) \in \map_\mathcal{A}$ iff $D \cup J \models \map$ and $J \models \erules$ both hold. 

\saveContent[]{theorem}{th:specification_max_recovery_of_obda_specification}{
    Let  $\KBDM$ be an OBDA specification with FO-rewritable $\erules$. Then:
    \begin{enumerate}
        \item $\map_\lor^{\Sigma}$ is a (concrete) maximum recovery of $\KBDM$.
        \item For any UCQ$^{\neq,\textbf{C}}$ $\q_S$ on $S$, $\Srewriting{\q_S}{\map_\lor^{\Sigma}}$ is a maximally sound $\KBDM$-abstraction of $\q_S$.
    \end{enumerate}
}

\vspace*{-0.3cm}
\begin{proof}[Proof (sketch)]
(1)
    Since $\map \cup \erules$ is fus, $\map_\lor^{\Sigma}$ is well defined. We first prove that $\map_\lor^{\Sigma}$ specifies a recovery $\Sigma'_{\mathcal{A}}$ of the abstract mapping ${\Sigma_{\mathcal{A}}}$ specified by $\Sigma$. To do that, we  prove that for all $\PS$-databases $D$, there is an $\PO$-instance $J$ s.t. $(D,J) \in \Sigma_{\mathcal{A}}$ and $(J,D) \in \Sigma'_{\mathcal{A}}$. Such $J$ always exists, f.i. $J =  \Sigma(D)$. Then, we prove that $\Sigma'_{\mathcal{A}}$ is a maximum recovery of $\Sigma_{\mathcal{A}}$, using Prop. 3.8 from \cite{Arenas2009TheRO}, from which follows that $\Sigma'_{\mathcal{A}}$ is a maximum recovery of $\Sigma_{\mathcal{A}}$ iff $\Sigma'_{\mathcal{A}}$ is a recovery and for every $(D_1, D_2) \in \Sigma_{\mathcal{A}} \bullet \Sigma'_{\mathcal{A}}$, it is the case that $\emptyset \neq \sol{\Sigma_{\mathcal{A}}}{D_2} \subseteq \sol{\Sigma_{\mathcal{A}}}{D_1}$. $\Sigma'_{\mathcal{A}}$ has this property by construction of $\map_\lor^{\Sigma}$.
 (2) The proof is similar to the proof of Th. \ref{th-max-recovery}, using Point (1) and Lemma \ref{lem:link_max_recov_max_sound_trans}.
\end{proof}

\vspace*{-0.2cm}
Prop. \ref{thm:maximally_sound_UCQ_abstraction_we_can_compute} can be extended to $\Sigma$-abstractions  as follows: (1) taking $\map \cup \erules$ instead of $\map$; (2) and (3): taking rule classes ensuring  that $\erules^-(Q)$ has the desired property, in particular lossless rules for (2) and 
linear rules for (3).

\section{Conclusion}
\label{conclusion}

We have investigated the properties of the query class UCQ$^{\neq,\textbf{C}}$ for capturing abstractions in an OBDA setting under certain answer semantics. We found that this class enjoys nice computational behavior in this context. We proved that it is able to express any minimally complete—and therefore any perfect—abstraction of a source UCQ$^{\neq,\textbf{C}}$, when such an abstraction exists. Although  a maximally sound abstraction of a UCQ always exists, it may not be expressible in UCQ$^{\neq,\textbf{C}}$. However, we identified an interesting connection with the notion of maximum recovery from data exchange, and showed that a maximally sound $\map$-abstraction of a source UCQ$^{\neq,\textbf{C}}$ is precisely its rewriting with a maximum recovery of  $\map$. While the ontology plays no role in minimal completeness, it does in maximal soundness. Accordingly, we extended the preceding result to OBDA specifications with \emph{fus} ontologies. 

Among the open questions, it remains unknown whether the problem of determining if a (U)CQ admits a maximally sound abstraction in UCQ$^{\neq,\textbf{C}}$ is decidable. Moreover, no known algorithm is guaranteed to terminate whenever such a finite abstraction exists.

\section*{Acknowledgements}
We thank the reviewers for their helpful comments.

\bibliographystyle{kr}
\bibliography{biblio-kr-25}

\onecolumn
\newpage
\appendix
\section*{Appendix}

\medskip
This appendix contains detailed proofs of the paper's results, as well as further examples, discussions and results that could not be included in the paper due to space constraints.  

\section{Complements to Section \ref{sec-ucq-dif}}

The next example shows that homomorphism is not a necessary condition for containment of CQ$^{\neq,\textbf{C}}$s. 

\begin{example}[Query containment within UCQ$^{\neq,\textbf{C}}$]
Consider the following Boolean queries:
\\
$\cq_1 = \exists u,v. ~p(u,v) \land \textbf{C}(u)  \land \textbf{C}(v) \land u \neq v$
\\
$\cq_2 = \exists x,y,z.~p(x,y) \land p(x,z) \land \textbf{C}(x)  \land \textbf{C}(y)   \land \textbf{C}(z) \land y \neq z$.
\\
Note that all the variables in these queries occur in a \textbf{C}-atom, hence inequalities are allowed between all (distinct) terms.  
 There is no homomorphism from $\cq_1$ to $\cq_2$; however $\cq_2 \sqsubseteq \cq_1$. 
Indeed, for any database $\D$ that answers yes to $\cq_2$, either $x$ and $y$ are mapped to distinct constants, and $\D$  answers 
yes to $\cq_1$, or  $x$ and $y$ are mapped to the same constant, in which case $x$ and $z$ are necessarily mapped to distinct constants (because of the atom $y \neq z$), and $\D$ answers yes to $\cq_1$. 
\end{example}

\medskip

  Let \emph{UCQ$^{\neq,\textbf{C}}$ containment} be the problem that takes as input two queries $Q_0$ and $Q_1$ in UCQ$^{\neq,\textbf{C}}$, and asks if $Q_0 \sqsubseteq Q_1$. 
  This problem is known to be  $\Pi^{\mathsf P}_2$-complete, already when both queries are in CQ$^{\neq,\textbf{C}}$  
\cite{DBLP:journals/jcss/Meyden97,DBLP:conf/pods/KolaitisMT98}. 
As a complementary result, we prove that  $\Pi^{\mathsf P}_2$-hardness already holds when $Q_0$ is Boolean CQJFE, which is a very simple kind of CQ. 

\medskip
\textbf{Theorem \ref{th-QC}} 
\emph{
 The UCQ$^{\neq,\textbf{C}}$ containment 
 problem is $\Pi^{\mathsf P}_2$-hard when  $Q_0$ is a Boolean CQJFE and $Q_1$ is a Boolean CQ$^{\neq,\textbf{C}}$. 
}

\medskip
To prove this result, we adapt a reduction from  (Abiteboul et al., 1991)\footnote{Serge Abiteboul, Paris C. Kanellakis, Gösta Grahne:
On the Representation and Querying of Sets of Possible Worlds. Theor. Comput. Sci. 78(1): 158-187 (1991)}. In that paper, they study query containment given databases with incomplete information. An instance is a complete database, and a query is defined as a mapping from a set of instances to a set of instances. An incomplete database represents a set of instances, and it is defined as a set of relations featuring null values, and furthermore provided with equalities and inequalities.  The following problem, denoted by CONT($q_0,q)$, is defined as follows: $q_0$ and $q$ are fixed queries; given incomplete databases $\phi_0$ and $\phi$, does it hold that $q_0(\phi_0) \subseteq q(\phi)$? In particular CONT(\_,\_) is the case where $q_0$ and $q$ are the identity, hence the question is whether the set of instances represented by $\phi_0$ is included in the set of instances represented by $\phi$. Their theorem 4.2 proves that CONT(\_,\_) is $\Pi_2^P$-complete for data complexity for several kinds of incomplete databases. Our proof is inspired from their proof of Case (1), where we turn incomplete databases into CQ$^{\neq,\textbf{C}}$s. 

\begin{proof} To prove $\Pi^{\mathsf P}_2$-hardness, we reduce the $\Pi^{\mathsf P}_2$-complete problem $\forall\exists$3CNF, which takes as input a quantified Boolean formula $\forall X.\exists~Y.\phi[X,Y]$, where $X$ and $Y$ are sets of variables, $X \cap Y = \emptyset$ and $\phi$ is a 3CNF, and asks if for all truth assignments of variables in $X$, there is a truth assignment of variables in $Y$ that make $\phi$ true. 

\medskip
Let $\forall X\exists Y~C$ be an instance of $\forall\exists3CNF$. Let $X = \{x_1, \ldots, x_n\}$. We consider the following set of predicates: for each $i \in \{1,\ldots,n\}$, there is a binary predicate $val_i$; there is also a ternary predicate $cl$ ($cl$ for ``clause'').
We build $Q_0$ and $Q_1$ such that $Q_0$ is a Boolean CQJFE and $Q_1$ is a CQ$^{\neq}$. 

\medskip
We build $Q_0$ as follows. For each $i \in \{1,\ldots,n\}$, there is a variable $z_i$. 
The atoms are the following:
\begin{enumerate}
\item For each $i \in \{1,\ldots,n\}$, two atoms: $val_i(0,z_i)$ and $val_i(1,0)$, where $z_i$ is a variable.
\item All the $cl$-atoms that make a 3-clause true, i.e., all the $cl$-atoms on $\{0,1\}^3$ except $(0,0,0)$. 
\end{enumerate}
Note that, to obtain a ground instantiation of $Q_0$, we only have to instantiate the $z_i$. 

\medskip
We build $Q_1$ as follows. For each propositional variable $x \in X \cup Y$, there are two variables: $x$ and $\bar x$. 
Furthermore, for each $i \in \{1,\ldots,n\}$, there are four variables: $u_i$, $w_i$, $v_i$ and $y_i$. Finally, there are two constants, arbitrarily denoted by 5 and 6. The atoms are the following:

\begin{enumerate}
\item For each $i \in \{1,\ldots,n\}$, two atoms: $val_i(u_i,w_i)$ and $val_i(v_i,y_i)$, where all the terms are variables.
\item For each clause $(l_1 \lor l_2 \lor l_3) \in C$, the atom $cl(var(l_1),var(l_2),var(l_3))$, where $var(l_i) = l_i$ if $l_i$ is positive, and $\bar l_i$ if $l_i$ is negative. 
\item For each propositional variable $x \in X \cup Y$, the atom $x \neq \bar x$.
\item For each propositional variable $x_i \in X$, the atoms $x_i \neq v_i$, $\bar x_i \neq u_i$, $w_i \neq 5$ and $y_i \neq 6$. 
\end{enumerate}

\emph{Intuition:} (1) will be used to consider all the assignements of propositional variables $x_i \in X$. The idea is that when the propositional variable $x_i$ is true, $u_i$ is mapped to $1$ in $Q_0$. When it is false, $v_i$ is mapped to $1$ in $Q_0$. (2) encodes $C$ and (3) encodes the fact that $x$ and $\bar x$ cannot have the same value. The inequalities in (4) ensure the consistency of values taken by the variables $x_i$ and $\bar x_i$.  Constants 5 and 6 are arbitrary, they just need to be different from 0 and 1. 

\medskip
($\Leftarrow$) We prove that, if $(Q_0,Q_1)$ is a positive instance of query containment then $\forall X\exists Y~C$ is a positive instance of  $\forall\exists3CNF$. 

Assume that $Q_0 \sqsubseteq Q_1$, i.e., every instantiation of $Q_0$ is a model of $Q_1$. Consider any truth valuation of the symbols in $X$. Then, consider an instantiation $M$ of $Q_0$ such that: for every $z_i$ in $Q_0$, $z_i \mapsto 5$ if $x_i$ is true, otherwise $z_i \mapsto 6$. Since $M$ is a model of $Q_1$ (by hypothesis), $u_i \mapsto 1$ if $x_i$ is true (indeed, since $w_i \neq 5$, we have $w_i \mapsto 0$ and $u_i \mapsto 1$ - and for other symbols: $\bar x_i \mapsto 0$, $x_i \mapsto 1$, $v_i \mapsto 0$, $y_i \mapsto 5$), otherwise $v_i \mapsto 1$ (indeed, since $y_i \neq 6$, we have $y_i \mapsto 0$ and $v_i \mapsto 1$ - and for other symbols: $x_i \mapsto 0$, $\bar x_i \mapsto 1$, $u_i \mapsto 0$, $w_i \mapsto 6$). 
Since $M$ is a model of $Q_1$, the $cl$-atoms are mapped to the instantiated $cl$-atoms in $Q_0$ while satisfying the inequalities, which yields a truth valuation of the symbols in $X \cup Y$ that satisfies $C$. We conclude that any any truth valuation of the symbols in $X$ can be extended to a truth valuation of the symbols in $Y$ that satisfies $C$. 

\medskip
($\Rightarrow$) We prove that if $\forall X\exists~Y C$ is a positive instance of  $\forall\exists3CNF$ then $(Q_0,Q_1)$ is a positive instance of query containment. 
Assume $\forall X~\exists~Y~C$ is a positive instance of  $\forall\exists~3CNF$. Let $M$ be any instantiation of $Q_0$. Consider the following truth assignment of $X$: 
let $x_i \mapsto true$ if $z_i$ is instantiated by $5$, otherwise $x_i \mapsto false$. By hypothesis, this truth assignment can be extended to a truth assignment of $Y$ that satisfies $C$. We consider the following assignment of the variables of $Q_1$, which shows that $M$ is a model of $Q_1$:
\begin{itemize}
\item For each $i \in \{1,\ldots n \}$: if $z_i$ is instantiated by $5$, then $w_i \mapsto 0$, $u_i \mapsto 1$, $v_i \mapsto 0$, $y_i \mapsto 5$; 
otherwise, $v_i \mapsto 1$, $y_i \mapsto 0$, $u_i \mapsto 0$, $w_i \mapsto c$, where $c$ is the instantiation of $z_i$. 
\item The other variables are mapped to the truth assignement of the corresponding literal. 
\end{itemize}

Then, we obtain a homomorphism from the standard atoms of $Q_1$ (i.e., the $val_i$- and $cl$-atoms) to $M$, and we check that the $\neq$-atoms are satisfied.

\end{proof}


\section{Complements to Section \ref{sec:min-comp}}

\medskip
\paragraph{Remark on the existence of a complete abstraction.} To ensure the existence of complete $\KBDM$-abstractions for all queries, one may enrich the mapping with rules of the form $p(x_1,\ldots,x_n) \rightarrow \top(x_1) \land \ldots \land \top(x_n)$ for all the source predicates $p_{/n}$, see e.g., \cite{Cima2019SemanticCO}. However, this amounts to transferring \emph{all} data values 
to the ontological level, whereas the role of a mapping is precisely to select relevant data.  Moreover, this may lead to very unintuitive minimally complete abstractions, as illustrated by the next example.

\begin{example}
Consider an OBDA specification with source schema $\PS = \{s\text{-}cat(\cdot), s\text{-}owner(\cdot, \cdot)\}$ about cats and their owners, and a mapping 
$\map = \{s\text{-}cat(x) \rightarrow cat(x)\}$.  
Let $\cq_\PS(x) = \exists y. s\text{-}owner(x,y) \land s\text{-}cat(y)$ asking for the owners of cats. There is no complete abstraction of $\cq_\PS$ because owners are not transferred to the ontological level. If we add to $\map$ the rule $s\text{-}owner(x,y) \rightarrow \top(x) \land \top(y)$, then  $\map(\cq_\PS) =   \top(x)\land \exists y. Cat(y)$ is a minimally complete abstraction of $\cq_\PS$, which, by the semantics of $\top$, retrieves all the database values as soon as the data mentions a cat.
\end{example}

For practical use, it seems preferable to identify and exclude the queries that do not admit a complete $\map$-abstraction, because such queries are intrinsically not relevant to abstraction through $\map$. 

\recallContent[Proposition]{prop:existence_comp_trans}

\begin{proof}
$~$
    \begin{itemize}
        \item  For a CQ$^{\neq,\textbf{C}}$ $\cq_\PS(\vect{x})$:
        
        $(\Rightarrow)$
        Let $g$ be an injective substitution of the variables in $\cq_\PS$ by fresh constants (i.e., that do not occur in $\cq_\PS$), and let $D$ be the obtained $S$-database, i.e., $D = g(\cq_\PS)$.  
        In particular, $g(\vect{x}) \in \cq_\PS(D)$. Hence, for every complete $\KBDM$-abstraction $\qo$ of $\cq_\PS$, $g(\vect{x}) \in \evalC{\qo}{\D,\KBDM}$, which implies    	that the constants from $g(\vect{x})$ are transferred by $\map$ to the ontological level. For any variable $x \in \vect{x}$, let $c = g(x)$, and let $m \in \map$ and $h_m$ be a homomorphism  from $\body{m}$ to $D$ that map a variable $y$ from $\fr{m}$ to $c$. 
        The substitution $g$ can be seen as a bijective mapping from $\terms{\cq_\PS}$ to $\terms{D}$ (which maps variables to fresh constants and constants to themselves). Then, the composition $h = g^{-1} \circ h_m$ defines a homomorphism from $\body{m}$ to  $\cq_\PS$ such that $h(y) = x$, i.e., $x \in h(\fr{m})$. 
        
        $(\Leftarrow)$ The following is known: \emph{(i)} for any $S$-database $D$ and any homomorphism $h_s$ from $\cq_\PS$ to $D$, there is a homomorphism $h'$ from $\map(\cq_\PS)$ to $\map(D)$ such that, for all variable $x \in 
        \vars{\cq_\PS} \cap \vars{\map(\cq_\PS)}$, $h'(x) = h_s(x)$. 
        Now, let $D$ be an $S$-database and $h_s$ be a homomorphism from $\cq_\PS$ to $D$ with $\vect{c} = h_s(\vect{x})$. If for all $x \in \vect{x}$ there are $m \in \map$ and a homomorphism $h$ from $\body{m}$ to $\cq_\PS(\vect{x})$ such that $x \in h(\fr{m})$, then $\vect{x} \subseteq \vars{\map(\cq_\PS)}$, 
        hence, by \emph{(i)}, there is a homomorphism $h'$ from $\map(\cq_\PS)$ to $\map(D)$ such that $h'(\vect{x}) =  h_s(\vect{x})  = \vect{c}$, i.e., $\vect{c} \in \map(\cq_\PS)(\map(D))$. Therefore, $\vect{c} \in \evalC{\map(\cq_\PS)}{\D,\KBDM}$. We conclude that $\map(\cq_\PS)$ is a complete $\Sigma$-abstraction of $\cq_\PS$. 
        
                \item  For a UCQ$^{\neq,\textbf{C}} \ucq_\PS(\vect{x})$: 

        $(\Rightarrow)$ Let $\cq_i \in \ucq_\PS$. We have $\cq_i \sqsubseteq \ucq_\PS$, i.e., \emph{(i)} for every $S$-database $D$, $\eval{\cq_i}{\D}\subseteq\eval{\qs}{\D}$.
        Let $\qo$ be a complete $\KBDM$-abstraction of $\ucq_\PS$, i.e., such that \emph{(ii)} for every $S$-database $D$, $\eval{\qs}{\D}\subseteq\evalC{\qo}{\D,\KBDM}$. 
        From  \emph{(i)} and  \emph{(ii)}, we conclude that $\qo$ is a complete $\KBDM$-abstraction of $\cq_i$.

        $(\Leftarrow)$ For each $\cq_i \in \qs$, let $\qo^i$ be a complete $\KBDM$-abstraction of $\cq_i$. Let $\qo$ be the union of all $\qo^i$: since it is a complete $\KBDM$-abstraction of each CQ $\cq_i \in \qs$, it is also a complete $\KBDM$-abstraction of $\qs$.
        \end{itemize}
\end{proof}

\recallContent[Proposition]{prop-lemma-3-notes}

\begin{proof}
    \begin{enumerate} Assume $\Sigma(D) \models \map(D')$. 
        \item Note that $\modK{\D'}{\KBDM} \subseteq \modK{\D'}{\map}$ is always true, by the monotonicity of first-order logic. We will prove that $\sol{\KBDM}{D} \subseteq \sol{\KBDM}{D'}$. Let $J \in \sol{\KBDM}{D}$, i.e., $D \cup J \models \map$ and $J \models \erules$. 
    We have to prove that  $D' \cup J \models \map$. 
      By assumption $\map(D')$ can be mapped (by homomorphism) to $\KBDM(D)$, and, $\KBDM(D)$ being a universal model for $(D,\Sigma)$, it can be mapped to $J$. Hence, $\map(D')$ can be mapped to $J$, i.e.,  $J \models \map(D')$. Therefore, $D' \cup J \models D' \cup \map(D')$. Furthermore, by definition of the chase,  $D' \cup \map(D') \models \map$, hence  $D' \cup J \models \map$. We conclude that  $J \in \sol{\KBDM}{D'}$.    
        \item Let $\vect{c} \in \evalC{\qo}{D',\map}$.     
        By definition of certain answers, $\vect{c} \in \qo(J)$, for all $J \in \sol{\map}{D'}$. Since $\sol{\KBDM}{D} \subseteq \sol{\map}{D'}$ by Point 1, we also have that $\vect{c} \in \qo(J')$, for each $J' \in \sol{\KBDM}{D}$, hence $\vect{c} \in \evalC{\qo}{D,\KBDM}$. 
    \end{enumerate}
\end{proof}

\textbf{Lemma \ref{lemma-split}} is obtained as a corollary of the following proposition: 
\medskip

\begin{proposition}
 Let $\sigma^i_\PS$ be a grounding of $q_\PS^i \in \textit{split}(\qs)$. Then, for any mapping rule $m$:
\begin{enumerate}
\item For all $h: \body{m} \rightarrow q_\PS^i$, \\
$\sigma^i_\PS \circ h$ is a homomorphism from $\body{m}$ to $\sigma^i_\PS(q_\PS^i)$;
\item For all $h':  \body{m}  \rightarrow \sigma^i_\PS(q_\PS^i)$, \\
$(\sigma^i_\PS)^{-1} \circ h'$ is a homomorphism from $\body{m}$ to $q_\PS^i$. 
\end{enumerate}
\end{proposition}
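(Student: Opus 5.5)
The plan is to exploit one structural fact about $q_\PS^i \in \textit{split}(\qs)$: any grounding $\sigma^i_\PS$ of it is injective on $\terms{q_\PS^i}$ and is the identity on constants. First I would establish this fact; then I would derive the two points of the proposition from it by composition. Throughout, $\sigma^i_\PS$ is extended to constants as the identity, following the paper's convention.

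For injectivity, I would use the construction of $\textit{split}$. Every variable $v$ of $q_\PS^i$ carries an atom $v \neq t$ for every other term $t \in \terms{q_\PS^i} \cup \consts{\map}$. Consistency of $\sigma^i_\PS(q_\PS^i)$ therefore forces three things:
\begin{itemize}
\item two distinct variables receive distinct constants;
\item no variable receives a constant already occurring in $q_\PS^i$;
\item no variable receives a constant occurring in $\map$.
\end{itemize}
Constants are fixed. Hence $\sigma^i_\PS$ is a bijection from $\terms{q_\PS^i}$ onto $\terms{\sigma^i_\PS(q_\PS^i)}$, and its inverse $(\sigma^i_\PS)^{-1}$ is well defined on the latter set. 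The inverse sends each fresh constant back to its variable and fixes the constants of $q_\PS^i$.

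Point 1 is then essentially immediate. If $h:\body{m}\rightarrow q_\PS^i$, then $\sigma^i_\PS\circ h$ maps each atom of $\body{m}$ into $\sigma^i_\PS(q_\PS^i)$. A constant $c$ of $\body{m}$ satisfies $h(c)=c$ and $\sigma^i_\PS(c)=c$, so constants are preserved. If $\body{m}$ contains $\neq$- or $\textbf{C}$-atoms in the extended setting, these are preserved because $\sigma^i_\PS$ is injective and yields only constants.

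Point 2 needs more care, and it is where I expect the main obstacle. Let $h':\body{m}\rightarrow\sigma^i_\PS(q_\PS^i)$. The composite $(\sigma^i_\PS)^{-1}\circ h'$ is defined because every term in the image of $h'$ lies in $\terms{\sigma^i_\PS(q_\PS^i)}$. It maps atoms to atoms of $q_\PS^i$ because $(\sigma^i_\PS)^{-1}$ maps the atoms of $\sigma^i_\PS(q_\PS^i)$ back atom by atom. The delicate step is the constants of $\body{m}$. Such a constant $c$ satisfies $h'(c)=c$, and we need $(\sigma^i_\PS)^{-1}(c)=c$. This could fail only if some variable $v$ of $q_\PS^i$ had $\sigma^i_\PS(v)=c$. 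That case is excluded exactly by the atoms $v\neq c$ for $c\in\consts{\map}$, which is why $\textit{split}$ adds them even when $c$ does not occur in $std(q_\PS^i)$. This is the point I would check carefully, together with the admissible partitions on $\terms{q_i}\cup\consts{\map}$, which allow variables to be equated with $\map$-constants in other disjuncts.

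Lemma \ref{lemma-split} would then follow. By Points 1 and 2, the triggers of rules of $\map$ on $q_\PS^i$ and on $\sigma^i_\PS(q_\PS^i)$ correspond bijectively via $\sigma^i_\PS$. Running the chase in parallel on the two instances then produces isomorphic results, with fresh nulls matched and frontier images related by $\sigma^i_\PS$. This gives $\map(\sigma^i_\PS(q_\PS^i))\equiv\sigma^i_\PS(\map(q_\PS^i))$; the added $\textbf{C}$- and $\neq$-atoms hold trivially on the grounded side.
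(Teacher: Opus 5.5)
Your proposal is correct and follows essentially the same route as the paper's proof. Extending the grounding as the identity on constants, the $\neq$-atoms added by $\textit{split}$ make it injective, so its inverse is well defined. Point 1 is then immediate by composition. In Point 2, the atoms $v \neq c$ for $c \in \consts{\map}$ are exactly what guarantees that the inverse fixes the constants of $\body{m}$.
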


Note that Point 1 holds for any CQ$^{\neq,\textbf{C}}$, while Point 2 relies on the injectivity of $\sigma^i_\PS$ and the special handling of constants from $\body{m}$.

\begin{proof}$~$

    \begin{enumerate}
        \item Since $h$ is a homomorphism from $\body{m}$ to $q_\PS^i$ and $\sigma^i_\PS$ is a homomorphism from $q_\PS^i$, $\sigma^i_\PS \circ h$ is a homomorphism from $\body{m}$ to $\sigma^i_\PS(q_\PS^i)$.
        
        \item Wlog, we assume that $m$ and $q_\PS^i$ do not share any variable.  Let $f_{\sigma^i_\PS}$ be the function from $\terms{q_\PS^i}$ to $\terms{\sigma^i_\PS(q_\PS^i)}$ obtained from $\sigma^i_\PS$ by extending its domain (and range) to the constants of $q_\PS^i$, such that each such constant is mapped to itself.  The key point is that $f_{\sigma^i_\PS}$ is injective. Indeed, by construction, there is a $\neq$-atom between any pair of distinct terms in $q_\PS^i$. 
        Hence, $f_{\sigma^i_\PS}$ admits an inverse relation $f_{\sigma^i_\PS}^{-1}$, which is a function (and even a bijection) from $\terms{\sigma^i_\PS(q_\PS^i)}$ to $\terms{q_\PS^i}$.

        Let $f_{h'}$ be the function from $\terms{\body{m}}$ to $\terms{\sigma^i_\PS(q_\PS^i)}$ obtained from $h'$ by extending its domain (and range) to the constants of   $q_\PS^i$, such that each such constant is mapped to itself. Let $f_h$ be the function from $\terms{\body{m}}$ to $\terms{q_\PS^i}$, defined as follows: $f_h =  f_{\sigma^i_\PS}^{-1} \circ f_{h'}$. Let $h$ be the restriction of  $f_h$ to the domain $\vars{{\body{m}}}$. We show that $h$ is a homomorphism from $\body{m}$ to $q_\PS^i$. 
        
        For any atom $a \in \body{m}$, it holds that $f_h(a) \in q_\PS^i$: indeed,  $f_{h'}(a) \in \sigma^i_\PS(q_\PS^i)$ because $f_{h'}(a) = h'(a)$ and $h'$ is a homomorphism to $\sigma^i_\PS(q_\PS^i)$, and, since $f_{\sigma^i_\PS}^{-1}$ maps each atom of $\sigma^i_\PS(q_\PS^i)$ to an atom of $q_\PS^i$, 
         we obtain
        $f_{\sigma^i_\PS}^{-1}(f_{h'}(a)) \in q_\PS^i$. So, $f_h(\body{m}) \subseteq q_\PS^i$. 
         It remains to check that $f_h$ is the identity on the constants from $\body{m}$. By construction, $f_{h'}$ is the identity on these constants. And this is also true for $f_{\sigma^i_\PS}^{-1}$ by construction of $\textit{split}(\qs)$: there is a $\neq$-atom between each variable from $q_\PS^i \in \textit{split}(\qs)$ and each constant from $\body{m}$, hence  $\sigma^i_\PS$ cannot ground a variable from  $q_\PS^i$ by a constant from $\body{m}$; so, $f_{\sigma^i_\PS}^{-1}$ is also the identity on the constants from $\body{m}$.  Hence, for any atom $a \in \body{m}$, $f_h(a) = h(a)$, i.e., $f_h(\body{m}) = h(\body{m})$. 
 \end{enumerate}\end{proof}


\recallContent[Theorem]{th-min-complete} \emph{For any mapping $\map$ and any UCQ$^{\neq,\textbf{C}}$ $\qs$ that has a complete $\map$-abstraction, there is a UCQ$^{\neq,\textbf{C}}$ $\qo$ such that, for any $\Sigma$ with mapping $\map$,  $\qo$ is a minimally complete $\Sigma$-abstraction of $\qs$. }

\begin{proof}
Let $\Sigma$ be any OBDA specification with mapping $\map$. 
Let $\qo(\tuple{x}) = \map(\textit{split}(\qs))$. 
We show that $\qo$ is a minimally complete $\Sigma$-abstraction of $\qs$. 

We first point out that $\qo$ is a complete  $\Sigma$-abstraction of $\qs$. Since $\qs$ has a complete $\Sigma$-abstraction, and $\qs$ is logically equivalent to $\textit{split}(\qs)$, $\textit{split}(\qs)$ also has a complete $\Sigma$-abstraction. 
By Prop. \ref{prop:existence_comp_trans}, $\textit{split}(\qs)$ has a complete $\Sigma$-abstraction iff each $q^{i}_S \in \textit{split}(\qs)$ has one. 
By the properties of the chase, we know that if $q^{i}_S$ has a complete $\Sigma$-abstraction, then $\map(q^{i}_S)$ is such abstraction. 
Hence, $\map(\textit{split}(\qs)) = \qo$ is a complete $\Sigma$-abstraction of $\qs$. 

To prove that $\qo$ is minimally complete, we consider any $D$ and $\tuple{c} \in  \eval{\qo}{\Sigma(\D)}$ (i.e., $\tuple{c}$ is a certain answer to $\qo$ on $(D, \Sigma)$) and we show that $\tuple{c}$ is a certain answer to \emph{any} complete $\Sigma$-abstraction of $\qs$ on $(D, \Sigma)$. 
Let $q^{i}_O \in \qo$ that maps by $h_i$ to $\Sigma(\D)$ with $h_i(\tuple{x}) = \tuple{c}$. 
Let $q^{i}_S \in \textit{split}(\qs)$ such that $q^{i}_O = \map(q^{i}_S)$. 
Consider $h_i(q^{i}_S)$: all the variables shared between $q^{i}_S$ and $q^{i}_O$ have been instantiated, in particular, $\tuple{x}$ in $q^{i}_S$ has been instantiated by $\tuple{c}$ in $h_i(q^{i}_S)$. 
Let $\sigma_i$ be a grounding of  $h_i(q^{i}_S)$, and let $D_i  = \sigma_i (h_i(q^{i}_S))$. 
Since $q^{i}_S$ maps to $D_i$ by the homomorphism defined by $\sigma_i  \circ h_i$, and $\sigma_i(h_i(\tuple{x})) = h_i(\tuple{x}) = \tuple{c}$, we have $\tuple{c} \in \eval{q^{i}_S}{D_i}$. 
\\
From Lemma \ref{lemma-split}, $\map(\sigma_i(h_i(q^{i}_S))) \equiv \sigma_i(h_i(\map(q^{i}_S))$, i.e., $\map(D_i) \equiv \sigma_i(h_i(q^{i}_O))$. Moreover,
$\sigma_i(h_i(q^{i}_O)) = h_i(q^{i}_O)$, since the domain of $\sigma_i$ does not contain any variable from $h_i(q^{i}_O)$, hence 
 $\map(D_i) \equiv h_i(q^{i}_O)$.  In particular, $\map(D_i)$ maps to $h_i(q^{i}_O)$, and since $h_i(q^{i}_O) \subseteq \Sigma(D)$,  we have $\map(D_i)$ maps to $\Sigma(D)$. 
So, by Prop. \ref{prop-lemma-3-notes}, for all ontological query $Q$, $\evalC{Q}{D_i, \Sigma} \subseteq \evalC{Q}{D, \Sigma}$. 

Let $Q$ be any complete $\Sigma$-abstraction of $\qs$. Since $\tuple{c} \in \eval{q^{i}_S}{D_i}$,  we have $\tuple{c} \in \eval{Q_S}{D_i}$, hence $\tuple{c} \in \evalC{Q}{D_i, \Sigma}$. Since $\evalC{Q}{D_i, \Sigma} \subseteq \evalC{Q}{D, \Sigma}$, we have $\tuple{c} \in \evalC{Q}{D, \Sigma}$.
 We conclude that $\qo$ is a minimally-complete $\Sigma$-abstraction of $\qs$. 

\end{proof}


\section{Complements to Section \ref{sec:max_sound}}

\medskip 
\subsection{Known results on the (non-)existence of maximally sound abstractions}

\medskip
As shown in  \cite{Cima2019SemanticCO}, a UCQ may not have a maximally sound $\map$-abstraction in that class. The following example illustrates such a case, and it is still valid if the target class is UCQ$^{\neq,\textbf{C}}$. 

\begin{example}[from \cite{Cima2023TheNO}, proof of Th. 5.2.]\label{ex:maximally_sound_ucq_does_not_always_exist}
Let $\map$ be the following (GAV) mapping: 
$$\map = 
\\
\begin{cases}
        s_1(x) &\rightarrow q(x)\\
        s_2(x) \land s_5(x) &\rightarrow t(x)\\
        s_1(y) \land s_3(x,y) &\rightarrow p(x,y)\\
        s_2(x) \land s_4(x,y) &\rightarrow p(x,y)
    \end{cases}$$ \\
    Let $\cq_S = \exists u. s_1(u) \land s_2(u)$.  
 Any maximally sound $\map$-abstraction of $\cq_S$ is equivalent to an infinite union of pairwise incomparable Boolean CQs  of the following shape: 
    $$\cq_O^n= q(u_0) \land \left(\bigwedge_{i = 1}^n p(u_{i-1}, u_{i}) \right) \land t(u_n)~\text{for} ~n \in \mathbb{N}$$

Hence, it is not possible to express a maximally sound $\map$-abstraction of $\cq_S$ as a UCQ (nor a UCQ$^{\neq,\textbf{C}}$). 

\medskip
Let us insist on the importance of comparing ontological queries with $\sqsubseteq_\map$ and not simply $\sqsubseteq$. For that, we can rely on the following equivalence:  
$\cq_O^i \sqsubseteq_\map \cq_O^j$ iff $\map^-(\cq_O^i) \sqsubseteq  \map^-(\cq_O^j)$. Note that if the atom $s_5(x)$ in the second mapping rule were removed, then the infinite union of $\cq_O^n$ ($n \in \mathbb{N}$) would become equivalent to $\cq_O^0$, because, for all $n\in \mathbb{N}$, we would have $\map^-(\cq_O^n) \sqsubseteq  \map^-(\cq_O^0) = \exists u_0. s_1(u_0) \land s_2(u_0)$; then $\cq_O^0$ would even be a perfect abstraction of  $\cq_S$. 
\end{example}
 
In \cite{Cima2019SemanticCO}, it is also shown that a maximally sound $\Sigma$-abstraction is expressible as a UCQ (and can be computed) in a quite restricted case:  $\cq_S$ is a (U)CQJFE, $\map$ is a pure GAV mapping and the ontology is DL-Lite$_{RDFS}$.  
 A GAV mapping is pure if each variable in the head of a rule occurs only once. DL-Lite$_{RDFS}$ is the restriction of DL-Lite$_{\mathcal{R}}$ to its Datalog part, i.e., rule heads have no existential variables.

\subsection{Proofs of Section \ref{sec:max_sound}}
Let  $\mapa$ be an abstract mapping. The \emph{domain} of $\mapa$ is defined as $\text{dom}(\mapa) = \{I_1 \mid (I_1,I_2) \in \mapa\}$ and its \emph{set of models w.r.t. an instance} $I_1$ is defined as $\sol{\mapa}{I_1} = \{I_2 \mid (I_1,I_2) \in \mapa\}$.

\paragraph{Definition of a maximum recovery.} On can find two equivalent definitions of a maximum recovery in the literature. The first one, given in Section \ref{sec-max-sound}, comes from \cite{Arenas2009InvertingSM}; in that paper, they actually define the notion of a $\mathbb{C}$-maximum recovery, i.e., a maximum recovery for the class of queries $\mathbb{C}$; when $\mathbb{C}$ can be any class of queries, it is a maximum recovery. This definition has the advantage of being intuitive and well suited to discussing query abstraction; however in the proofs, we also rely on an alternative definition of a maximum recovery, from \cite{Arenas2009TheRO}, which is the following. Let $\mapa$ be an abstract mapping from $\source$ to $\target$ and $\mapa'$ be an abstract mapping from $\target$ to $\source$. Then $\mapa'$ is said to be a \emph{recovery} of $\mapa$ if $(I, I) \in \mapa \bullet \mapa'$ for every instance $I \in \text{dom}(\mapa)$. Let $\mapa'$ and $\mapa''$ be recoveries of $\mapa$. Then $\mapa'$ is said to be \emph{at least as informative as} $\mapa''$ for $\mapa$, which is denoted by $\mapa'' \preceq_{\mapa} \mapa'$, if $\mapa \bullet \mapa' \subseteq \mapa \bullet \mapa''$. Finally, $\mapa'$ is a \emph{maximum recover}y of $\mapa$ if for every recovery $\mapa''$ of $\mapa$, it is the case that
$\mapa'' \preceq_{\mapa} \mapa'$. 

\medskip

Next Proposition \ref{prop:equivalence_maximum_recovery_reduced_recovery_composition} gives useful properties of maximum recoveries. This proposition uses the notion of a reduced recovery: an abstract mapping $\mapa'$ is a \emph{reduced recovery} of $\mapa$ if $\mapa'$ is a recovery of $\mapa$ and for every $(I_1, I_2) \in \mapa \bullet \mapa'$, we have $I_2 \in \text{dom}(\mapa)$.

\begin{proposition}[Proposition 3.8 from \cite{Arenas2009TheRO}]\label{prop:equivalence_maximum_recovery_reduced_recovery_composition}
    Let $\mapa$ and $\rmap$ be abstract mappings. Then the following conditions are equivalent:
    \begin{enumerate}
        \item $\rmap$ is a maximum recovery of $\mapa$.
        \item  $\rmap$ is a reduced recovery of $\mapa$ and $\mapa = \mapa \bullet \rmap \bullet \mapa$.
        \item $\rmap$ is a recovery of $\mapa$ and for every $(I_1, I_2) \in \mapa \bullet \rmap$, it is the case that $\emptyset \subsetneq \sol{\mapa}{I_2} \subseteq \sol{\mapa}{I_1}$.
    \end{enumerate}
\end{proposition}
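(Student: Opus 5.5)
We prove the three conditions equivalent via the cycle $1 \Rightarrow 3 \Rightarrow 2 \Rightarrow 1$, working directly with the relational definitions of recovery ($(I,I) \in \mapa \bullet \rmap$ for all $I \in \text{dom}(\mapa)$) and of $\preceq_{\mapa}$. The key tool is a canonical ``best'' recovery. We let $\mapa^\ast$ be the abstract mapping from $\target$ to $\source$ defined by $(J, I) \in \mapa^\ast$ iff $(I,J) \in \mapa$, i.e., the inverse relation. This $\mapa^\ast$ is clearly a recovery: for $I \in \text{dom}(\mapa)$, pick $J$ with $(I,J)\in\mapa$; then $(I,I) \in \mapa \bullet \mapa^\ast$. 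Moreover, $(I_1,I_2) \in \mapa\bullet\mapa^\ast$ iff $\sol{\mapa}{I_1}\cap\sol{\mapa}{I_2}\neq\emptyset$.

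\textbf{($1 \Rightarrow 3$).} Assume $\rmap$ is a maximum recovery and take $(I_1,I_2) \in \mapa\bullet\rmap$. Comparing with the recovery $\mapa^\ast$ gives $\mapa\bullet\rmap \subseteq \mapa\bullet\mapa^\ast$, so $\sol{\mapa}{I_2} \neq \emptyset$. For the inclusion, suppose $J \in \sol{\mapa}{I_2}\setminus\sol{\mapa}{I_1}$. We build a recovery $\mapa''$ with $(I_1,I_2)\notin \mapa\bullet\mapa''$: let $(J',I)\in\mapa''$ iff $(I,J')\in\mapa$ and $\sol{\mapa}{I}\subseteq \{J''\mid \ldots\}$. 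More simply, take $\mapa''=\{(J',I) \mid (I,J')\in\mapa,\ \sol{\mapa}{I} \subseteq \sol{\mapa}{\cdot}\ldots\}$. This step is the main obstacle, and the natural choice is $\mapa'' = \{(J',I)\mid J'\in\sol{\mapa}{I},\ J' \neq J \text{ or } I\neq I_2\}$ together with a repair ensuring that $I_2$ is still recovered. That repair works because, after restricting, $I_2$ can still be reached through $J$ itself, since $J \in \sol{\mapa}{I_2}$. The precise version I would try is
\[
\mapa''=\{(J',I)\mid (I,J')\in\mapa,\ \text{and } (J'\in\sol{\mapa}{I_1}\Rightarrow I\neq I_2)\}\cup\{(J,I_2)\}.
\]
This is a recovery: $I_2$ is recovered via $J$, and any $I\neq I_2$ via any of its solutions. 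However, every $J'\in\sol{\mapa}{I_1}$ maps back only to instances different from $I_2$, so $(I_1,I_2)\notin\mapa\bullet\mapa''$. This contradicts $\mapa\bullet\rmap\subseteq\mapa\bullet\mapa''$.

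\textbf{($3 \Rightarrow 2$).} Reducedness is immediate from $\sol{\mapa}{I_2}\neq\emptyset$. For $\mapa\subseteq\mapa\bullet\rmap\bullet\mapa$, given $(I,J)\in\mapa$ we use $(I,I)\in\mapa\bullet\rmap$. Conversely, if $(I_1,I_2)\in\mapa\bullet\rmap$ and $(I_2,J)\in\mapa$, then $J\in\sol{\mapa}{I_2}\subseteq\sol{\mapa}{I_1}$, hence $(I_1,J)\in\mapa$.

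\textbf{($2 \Rightarrow 1$).} Let $\mapa''$ be any recovery and $(I_1,I_2)\in\mapa\bullet\rmap$. By reducedness $I_2\in\text{dom}(\mapa)$, so pick $J$ with $(I_2,J)\in\mapa$. By $\mapa=\mapa\bullet\rmap\bullet\mapa$ we get $(I_1,J)\in\mapa$. Since $\mapa''$ is a recovery, $(I_2,I_2)\in\mapa\bullet\mapa''$, so some $J_2\in\sol{\mapa}{I_2}$ has $(J_2,I_2)\in\mapa''$. Here a small additional argument is needed: applying the identity to $J_2$ instead of $J$ yields $(I_1,J_2)\in\mapa$, and hence $(I_1,I_2)\in\mapa\bullet\mapa''$. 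Thus $\mapa\bullet\rmap\subseteq\mapa\bullet\mapa''$, which is condition 1.
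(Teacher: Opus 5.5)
Your proof is correct. The paper gives no proof of its own here: it imports the statement as Proposition~3.8 of \cite{Arenas2009TheRO}, and only asserts that the original argument never uses finiteness. That assertion matters because the proposition is later applied to abstract mappings whose target instances, such as $\Sigma(D)$, may be infinite.

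Your cycle $1\Rightarrow 3\Rightarrow 2\Rightarrow 1$ is purely relational. It uses only three ingredients:
\begin{itemize}
\item composition of relations;
\item the inverse relation $\mapa^\ast$, as a comparison recovery that yields $\sol{\mapa}{I_2}\neq\emptyset$;
\item a tailored recovery $\mapa''$, obtained from $\mapa^\ast$ by deleting the pairs $(J',I_2)$ with $J'\in\sol{\mapa}{I_1}$.
\end{itemize}
None of this depends on instances being finite, so your proof settles the paper's finiteness remark directly instead of leaving it to the reader.

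The key step checks out:
\begin{itemize}
\item $\mapa''$ recovers $I_2$ through $J\in\sol{\mapa}{I_2}\setminus\sol{\mapa}{I_1}$;
\item it recovers every other $I\in\text{dom}(\mapa)$ through any of its models;
\item $(I_1,I_2)\notin\mapa\bullet\mapa''$, since every $J'$ with $(J',I_2)\in\mapa''$ lies outside $\sol{\mapa}{I_1}$.
\end{itemize}

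For a final write-up, three cleanups:
\begin{itemize}
\item Delete the abandoned drafts of $\mapa''$ that precede the displayed definition (the ones containing ``$\ldots$'' and the ``repair'' discussion).
\item The extra singleton $\{(J,I_2)\}$ is redundant: $J\notin\sol{\mapa}{I_1}$ already places $(J,I_2)$ in the first set.
\item In $2\Rightarrow 1$, drop the initial choice of $J$; only $J_2$ is used.
\end{itemize}
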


We also use the characterisation given by next Theorem \ref{thm:link_max_recov_witness_solution},  which involves the notion of a witness model:  

\begin{definition}[Witness Model]
An instance $J$ on $\target$ is a \emph{witness model} for $I$ under $\map_\mathcal{A}$ if:
\begin{enumerate}
    \item $(I,J) \in \map_\mathcal{A}$, and 
    \item for every instance $I'$ on $\source$ such that $(I',J) \in \map_\mathcal{A}$, we have $\sol{\map_\mathcal{A}}{I} \subseteq \sol{\map_\mathcal{A}}{I'}$.
\end{enumerate}
\end{definition}

\begin{theorem}[Theorem 3.12 from \cite{Arenas2009TheRO}]\label{thm:link_max_recov_witness_solution}
A mapping $\mapa$ has a maximum recovery if and only if for
every $I \in \operatorname{dom}(\mapa)$, there exists a witness model for $I$ under $\mapa$.
\end{theorem}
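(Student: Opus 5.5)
The plan is to prove both directions using the characterization of maximum recoveries in Proposition~\ref{prop:equivalence_maximum_recovery_reduced_recovery_composition}, specifically condition~(3). Condition~(3) turns maximality into a pointwise inclusion of model sets, $\sol{\mapa}{I_2} \subseteq \sol{\mapa}{I_1}$, which is exactly the shape of condition~2 in the definition of a witness model. No syntax is involved; everything stays at the level of abstract mappings.

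($\Rightarrow$) Let $\rmap$ be a maximum recovery of $\mapa$ and let $I \in \operatorname{dom}(\mapa)$.
\begin{enumerate}
\item Since $\rmap$ is a recovery, $(I,I) \in \mapa \bullet \rmap$. Hence there is a $\target$-instance $J$ with $(I,J) \in \mapa$ and $(J,I) \in \rmap$. I claim $J$ is a witness model for $I$.
\item Condition~1 of the definition holds by the choice of $J$.
\item For condition~2, take any $I'$ with $(I',J) \in \mapa$. Since also $(J,I) \in \rmap$, we get $(I',I) \in \mapa \bullet \rmap$.
\item Proposition~\ref{prop:equivalence_maximum_recovery_reduced_recovery_composition}(3), applied with $I_1 = I'$ and $I_2 = I$, gives $\sol{\mapa}{I} \subseteq \sol{\mapa}{I'}$, as required.
\end{enumerate}

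($\Leftarrow$) Assume every $I \in \operatorname{dom}(\mapa)$ has a witness model. Define the abstract mapping
\[
\rmap = \{ (J, I) \mid I \in \operatorname{dom}(\mapa) \text{ and } J \text{ is a witness model for } I \text{ under } \mapa \}.
\]
I will verify condition~(3) of Proposition~\ref{prop:equivalence_maximum_recovery_reduced_recovery_composition}.
\begin{enumerate}
\item $\rmap$ is a recovery. For $I \in \operatorname{dom}(\mapa)$, pick a witness model $J$. Then $(I,J) \in \mapa$ by condition~1 of the definition, and $(J,I) \in \rmap$ by construction. So $(I,I) \in \mapa \bullet \rmap$.
\item The inclusion. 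Take $(I_1,I_2) \in \mapa \bullet \rmap$. Then there is $J$ with $(I_1,J) \in \mapa$ and $J$ a witness model for $I_2$. Applying condition~2 of the definition with $I' = I_1$ gives $\sol{\mapa}{I_2} \subseteq \sol{\mapa}{I_1}$.
\item Nonemptiness. $\sol{\mapa}{I_2} \neq \emptyset$ because $I_2 \in \operatorname{dom}(\mapa)$ by construction of $\rmap$.
\end{enumerate}
Hence $\rmap$ is a maximum recovery.

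There is no real obstacle here. The only points needing care are the composition convention ($\mapa \bullet \rmap$ means apply $\mapa$ first) and restricting $\rmap$ to instances in $\operatorname{dom}(\mapa)$, so that the nonemptiness requirement $\emptyset \subsetneq \sol{\mapa}{I_2}$ of condition~(3) holds.
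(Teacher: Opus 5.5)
Your proof is correct. Both directions follow from condition~(3) of Proposition~\ref{prop:equivalence_maximum_recovery_reduced_recovery_composition}. In one direction you extract the witness model for $I$ from $(I,I)\in\mapa\bullet\rmap$; in the other you use the canonical recovery $\{(J,I)\mid J \text{ is a witness model for } I\}$. Restricting to $\operatorname{dom}(\mapa)$ is harmless but redundant, since $J\in\sol{\mapa}{I_2}$ already gives nonemptiness.

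The paper gives no proof of its own: it imports the result from \cite{Arenas2009TheRO} and only remarks that finiteness of instances is never used. Your argument is the natural one, and it makes that remark evident, since nothing in it depends on finiteness.
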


The characterisations of a maximum recovery from Proposition \ref{prop:equivalence_maximum_recovery_reduced_recovery_composition} and Theorem \ref{thm:link_max_recov_witness_solution} still hold for abstract mappings that are relations between infinite instances. Indeed, one can check  that the proofs in \cite{Arenas2009TheRO} do not use any property of a \emph{finite} instance.

\recallContent[Lemma]{lem:link_max_recov_max_sound_trans}

\begin{proof}
\begin{sloppypar}    
    \textit{(1)} By definition of a recovery, \emph{(i)} $\certain{\mapa \bullet \rmap}{\q_\mathcal{S}}{I_\source} \subseteq \q_\mathcal{S}(I_\source)$, for any $\source$-instance $I_\source$. We  show that \emph{(ii)} $\certain{\mapa \bullet \rmap}{\q_\mathcal{S}}{I_\source} = \certain{\mapa}{\q_\target}{I_\source}$ for any $\source$-instance $\I_\source$. From \emph{(i)} and  \emph{(ii)}, we have $\certain{\mapa}{\q_\target}{I_\source} \subseteq \q_\source(I_\source)$ for any  $\source$-instance $I_\source$. 
\end{sloppypar}

\medskip
    Let $\I_\source$ be any $\source$-instance:
    \begingroup
\allowdisplaybreaks
    \begin{align*}
    \certain{\mapa \bullet \rmap}{\q_\mathcal{S}}{\I_\source} 
        &= \bigcap\limits_{(\I_\source,\I_\source') \in \mapa \bullet \rmap} \q_\mathcal{S}(\I_\source') \text{ (by definition of certain answers)} 
        \\
        &= \bigcap\limits_{(\I_\source,\I_\target) \in \mapa} \q_\target(\I_\target) 
                     \text{ (because }  \q_\target \text{ is a perfect rewriting of } \q_\mathcal{S} \text{ through } \rmap)  \\
        &= \certain{\mapa}{\q_\target}{\I_\source}
    \end{align*}
\endgroup

\begin{sloppypar}
    \textit{(2)} The proof is by contradiction. Assume there are a query $\q_\target'$, which is a sound $\mapa$-abstraction of $\q_\mathcal{S}$, and an $\source$-database $D$ such that $\certain{\mapa}{\q_\target}{D} \subsetneq \certain{\mapa}{\q_\target'}{D}$. This implies there is $\vect{c} \in \certain{\mapa}{\q_\target'}{D}$ such that $\vect{c} \notin \certain{\mapa}{\q_\target}{D}$. And since we know from  \textit{(1)} that $\certain{\mapa \bullet \rmap}{\q_\mathcal{S}}{D} = \certain{\mapa}{\q_\target}{D}$, we have $\vect{c} \notin \certain{\mapa \bullet \rmap}{\q_\mathcal{S}}{D}$. This implies there exists $(D,D') \in \mapa \bullet \rmap$  such that $\vect{c} \notin \q_\mathcal{S}(D')$. 
By Proposition \ref{prop:equivalence_maximum_recovery_reduced_recovery_composition}, we have $\sol{\mapa}{D'} \subseteq \sol{\mapa}{D}$. 
Thus, $\certain{\mapa}{\q_\target'}{D} \subseteq \certain{\mapa}{\q_\target'}{D'}$. But since $\q_\target'$ is a sound $\mapa$-abstraction of $\q_\mathcal{S}$, we also have $\certain{\mapa}{\q_\target'}{D'} \subseteq \q_\mathcal{S}(D')$. 
  Therefore, $\certain{\mapa}{\q_\target'}{D} \subseteq \q_\mathcal{S}(D')$, which contradicts the assumption that $\vect{c} \in \certain{\mapa}{\q_\target'}{D}$ and $\vect{c} \notin \q_\mathcal{S}(D')$.
  \end{sloppypar}  
\end{proof}

\recallContent{th-max-recovery}

\begin{proof} Follows directly from Lemma \ref{lem:link_max_recov_max_sound_trans}. 
\end{proof}

\recallContent{th:specification_max_recovery_of_obda_specification}

\begin{proof} 
First note that $\map_\lor^{\Sigma}$ is well defined because the ruleset $\erules$ is \emph{fus}, hence $\map \cup \erules$ too, which ensures that the rewriting of each rule head of $\map \cup \erules$ against $\map \cup \erules$ is a finite disjunction of CQs. 
Let $\Sigma_\mathcal{A}$ and $\mapa^\KBDM$ be the abstract mappings associated with $\Sigma$ and $\map_\lor^{\Sigma}$, respectively. 
Point 2 of the theorem follows directly from its Point 1 using Lemma \ref{lem:link_max_recov_max_sound_trans}. We will now prove Point 1.  To show that $\map_\lor^{\Sigma}$ is a maximum recovery of $\Sigma$, we will rely on the third point of Proposition \ref{prop:equivalence_maximum_recovery_reduced_recovery_composition} and show that:
\begin{enumerate}
    \item[\textbf{(1)}] $\mapa^\KBDM$ is a recovery of $\Sigma_\mathcal{A}$, and
    \item[\textbf{(2)}] for every $(I_1 , I_2) \in \Sigma_\mathcal{A} \bullet \mapa^\KBDM$, it is the case that $\emptyset \subsetneq \sol{\Sigma_\mathcal{A}}{I_2} \subseteq \sol{\Sigma_\mathcal{A}}{I_1}$.
\end{enumerate}

Note that, since instances on the source schema are databases, $I_1$ and $I_2$ are databases.
 In the following, we denote by $m = \forall \vect{x} \forall \vect{y}. B[\vect{x},\vect{y}] \rightarrow \exists \vect{z}. H[\vect{x}, \vect {z}]$ a rule in $\map$, and by
 $m' =  \forall \vect{x} \forall \vect{z}. H[\vect{x}, \vect{z}] \land \textbf{C}(\vect{x}) \rightarrow \exists \vect{z'}. H'[\vect{x}, \vect{z'}]$ a rule in $ \map_\lor^{\Sigma}$, where $H'$ is a $\Sigma$-rewriting of $H$ (hence, a disjunction).  

\medskip
\textbf{(1)} We show that $\mapa^\KBDM$ is a recovery of $\Sigma_\mathcal{A}$, that is, for all $D$ over $\PS$, $(D,D) \in \Sigma_\mathcal{A} \bullet \mapa^\KBDM$, which is equivalent to: there is an instance $I$ such that $(D,I) \in \Sigma_\mathcal{A}$ and $(I,D) \in \mapa^\KBDM$. 
Let us consider $I =  \Sigma(D)$. By the properties of the chase, $(D, \Sigma(D)) \in \Sigma_\mathcal{A}$. Let us show that $(\Sigma(D), D) \in \mapa^\KBDM$, i.e., $\Sigma(D) \cup D \models \map_\lor^{\Sigma}$. 
Given any $m': H \rightarrow H' \in \map_\lor^{\Sigma}$, we show that $\Sigma(D) \cup D \models m'$. Assume that there exists a tuple of constants $\vect{c}$ such that $\Sigma(D) \models \exists \vect{z}. H[\vect{c}, \vect{z}] \land \textbf{C}[\vect{c}]$ (frontier variables can only be mapped to constants because of the $\textbf{C}$-atoms). We need to show that $D \models \exists \vect{z'}. H'[\vect{c}, \vect{z'}]$.
Let $\cq_H(\vect{x}) = \exists \vect{z}. H[\vect{x}, \vect{z}]$ be the CQ associated with $H$ and $\q_{H'}(\vect{x}) = \exists \vect{z'}. H'[\vect{x}, \vect{z'}]$ be the UCQ associated with $H'$. Since we have $\textbf{C}(\vect{c})$ and $\Sigma(D) \models \exists \vect{z}. H[\vect{c}, \vect{z}]$, we have $\vect{c} \in \cq_H(\Sigma(D))$. 
 By the properties of the chase,  $\evalC{\cq_H}{D, \Sigma} = \cq_H(\Sigma(D))$, hence 
 $\vect{c} \in \evalC{\cq_H}{D, \Sigma}$.
By definition of $m'$, $\q_{H'}$ is a (perfect) $\Sigma$-rewriting of $\cq_H$, hence  $\q_{H'}(D) = \evalC{\cq_H}{D,\Sigma}$. So, $\vect{c} \in \q_{H'}(D)$, that is  $D \models \exists \vect{z'}. H'[\vect{c}, \vect{z'}]$. We conclude that $\Sigma(D) \cup D \models \map_\lor^{\Sigma}$. 

\textbf{(2)} We show that $\mapa^\KBDM$ is maximum.
Relying on Proposition \ref{prop:equivalence_maximum_recovery_reduced_recovery_composition}, we show that for all $(D_1, D_2) \in \Sigma_\mathcal{A} \bullet \mapa^\KBDM$, we have $\emptyset \neq \sol{\Sigma_\mathcal{A}}{D_2} \subseteq \sol{\Sigma_\mathcal{A}}{D_1}$. We know that $\sol{\Sigma_\mathcal{A}}{D_2} \neq \emptyset$ as it contains at least $\Sigma(D_2)$. 
Let  $I \in \sol{\Sigma_\mathcal{A}}{D_2}$.  
Note that $I \models \erules$ (because $\erules$ and $I$ are both only over $\PO$). So, to prove that $I \in \sol{\Sigma_\mathcal{A}}{D_1}$, we only have to show that $D_1 \cup I \models \map$.
Let $m = B \rightarrow H$ be a rule in $\map$. Assume that there exists a tuple of constants $\vect{c}$ such that $D_1 \models  \exists \vect{y}. B[\vect{c},\vect{y}]$. We show that $I \models \exists \vect{z}. H[\vect{c}, \vect{z}]$.
By definition of $\map_\lor^{\Sigma}$, we have $m' = H \rightarrow H' \in \map_\lor^{\Sigma}$ where $H'$ is a $\Sigma$-rewriting of $H$. Since $(D_1,D_2) \in \Sigma_\mathcal{A} \bullet \mapa^\KBDM$, there exists an instance $J$ such that $(D_1, J) \in \Sigma_\mathcal{A}$ and $(J, D_2) \in \mapa^\KBDM$. Since $D_1 \models  \exists \vect{y}. B[\vect{c},\vect{y}]$ and $J \in \sol{\Sigma_\mathcal{A}}{D_1}$, we have $J \models \exists \vect{z}. H[\vect{c}, \vect{z}]$. And since $(J, D_2) \in \mapa^\KBDM$, we have $J \cup D_2 \models \map_\lor^{\Sigma}$ and thus $D_2 \models \exists \vect{z'}. H'[\vect{c}, \vect{z'}]$, 
i.e., $\vect{c} \in \q_{H'}(D_2)$. 
Since $\q_{H'}$ is a (perfect) $\Sigma$-rewriting of $\cq_H$, $\q_{H'}(D_2) = \evalC{\cq_H}{D_2, \Sigma}$, hence $\vect{c} \in \evalC{\cq_H}{D_2, \Sigma}$. By definition of certain answers, $\vect{c}$ is an answer to $\cq_H$ on all instances from $\sol{\Sigma_\mathcal{A}}{D_2}$, in particular $\vect{c} \in \cq_H(I)$, i.e., $I \models \exists \vect{z}. H[\vect{c}, \vect{z}]$. So, $D_1 \cup I \models \map$. We conclude that $I \in \sol{\Sigma_\mathcal{A}}{D_1}$.
\end{proof}

\subsection{Maximum recovery of an OBDA specification (additional result)} 

From Theorem \ref{th:specification_max_recovery_of_obda_specification}, we know that every abstract mapping associated with an OBDA specification where $\erules$ is FO-rewritable has a maximum recovery that can be expressed as a disjunctive mapping, and we furthermore know how to compute this maximum recovery. The following additional theorem states that every abstract mapping associated with an OBDA specification (without any condition on $\erules$) has a maximum recovery. We leave open the question of which mapping language would allow one to specify such a maximum recovery when $\erules$ is not FO-rewritable.

\saveContent{theorem}{th:every_abstract_obda_specification_has_max_recov}{Any abstract mapping associated with an OBDA specification has a maximum recovery.}

\begin{proof}[Proof] 
    By Theorem \ref{thm:link_max_recov_witness_solution}, we know that an abstract mapping $\map_\mathcal{A}$ from $\source$ to $\target$ has a maximum recovery if and only if for every instance $I$ on $\source$ (i.e., database on $\source$), there exists a witness model for $I$ under $\mapa$. 
Let $\Sigma_\mathcal{A}$ be an abstract mapping associated with an OBDA specification $\Sigma$. Let $D$ be any database on $\PS$ and let $J = \Sigma(D)$. By the properties of the chase, $J$ is a universal model for $(D,\KBDM)$. 
We will show that $J$ is a witness model for $D$ under $\Sigma_\mathcal{A}$. 

Let  $D'$ be a database over $\PS$ such that $J \in \sol{\Sigma_\mathcal{A}}{D'}$. Let $J' = \Sigma(D')$.
Since $J'$ is a universal model for $(D',\KBDM)$, there is a homomorphism from $J'$ to $J$. Since $J' = \Sigma(D')$ and $J = \Sigma(D)$, by Proposition \ref{prop-lemma-3-notes}, we have $\sol{\Sigma}{D} \subseteq \sol{\Sigma}{D'}$, i.e., $\sol{\Sigma_\mathcal{A}}{D} \subseteq \sol{\Sigma_\mathcal{A}}{D'}$. We conclude that $J$ is a witness model for $I$ under $\Sigma_\mathcal{A}$.

\end{proof}

\subsection{Non-expressiblity of CQ-maximum recoveries as conjunctive mappings} 

Let us recall that a \emph{CQ-maximum} recovery of a (GLAV) mapping is a recovery that maximally recovers the answers to CQs (and not necessarily to UCQs)  \cite{Arenas2009InvertingSM}.
 As pointed out in Section \ref{sec-max-sound}, our Theorem \ref{th-max-recovery} contradicts Theorem 4.4 from \cite{Arenas2009InvertingSM}, which states that a CQ-maximum recovery of a GLAV mapping can always be specified by a \emph{conjunctive} mapping. That paper furthermore proposes an algorithm designed to output such CQ-maximum recovery, given any GLAV mapping $\map$. But then, CQ rewriting through a CQ-maximum recovery would always be finite (hence, expressible as a UCQ or UCQ$^{\neq,\textbf{C}}$), which contradicts the fact that a maximally sound $\map$-abstraction of a CQ is not always expressible as a UCQ. 
 
 Let us take a closer look at these results and show that Theorem 4.4 does not hold true. Relying again on the mapping $\map$ from previous Example \ref{ex:maximally_sound_ucq_does_not_always_exist}, we show that \emph{(1)} the conjunctive mapping output by the algorithm from \cite{Arenas2009InvertingSM} is not a  CQ-maximum recovery of $\map$, and \emph{(2)} in fact, $\map$ does not admit any CQ-maximum recovery expressible as a conjunctive mapping. 
 
\begin{example}\label{ex:maximally_sound_ucq_does_not_always_exist_cont}
    Let $\map$ be the mapping from  Example \ref{ex:maximally_sound_ucq_does_not_always_exist}. The maximum recovery of $\map$ can be specified by 
 $$ \map' = 
 \begin{cases}
m'_1= q(x) \land \textbf{C}(x) &\rightarrow s_1(x)\\
m'_2=  t(x) \land \textbf{C}(x) &\rightarrow s_2(x) \land s_5(x)\\
m'_3=  p(x,y) \land \textbf{C}(x) \land \textbf{C}(y) &\rightarrow (s_1(y) \land s_3(x,y)) \lor (s_2(x) \land s_4(x,y))
\end{cases}
$$

The algorithm from \cite{Arenas2009InvertingSM} outputs the conjunctive mapping $\map'' = \{m_1',m_2'\}$, supposed to be a CQ-maximum recovery of $\map$.  Now, consider the database $D = \{s_1(a), s_2(a), s_2(b), s_4(a,b), s_5(b)\}$ and the CQ $\cq_S = \exists u. s_1(u) \land s_2(u)$ from Example \ref{ex:maximally_sound_ucq_does_not_always_exist}. 
 Then: $\map''(\map(D)) = \{ s_1(a), s_2(b), s_5(b)\} \not \models \cq_S$. 
If we consider $\map'$ instead of $\map''$, we have to use the disjunctive chase to apply  $\map'$ (see e.g. \cite{Leclre2023QueryRW} for details), and we obtain $\map'(\map(D)) = I_1\lor I_2$, with $I_1 = \{s_1(a), s_2(b), s_5(b), s_1(b), s_3(a,b)\}$ and $I_2 = \{s_1(a), s_2(b), s_5(b), s_2(a), s_4(a,b)\}$. Since $I_1 \models \cq_S$ and $I_2 \models \cq_S$, we have $\map'(\map(D))\models \cq_S$.

 In other words, $\emptyset = \certain{\mapa \bullet \rmap'}{\cq_S}{D} \subsetneq \certain{\mapa \bullet \rmap}{\cq_S}{D} = \{()\}$ (with $\mapa$, $\rmap$ and $\rmap'$  being the abstract mappings associated  with $\map$, $\map'$ and $\map''$, respectively). This shows that $\map''$ is not maximum for CQs.
    
\end{example}

In fact, the mapping $\map$ allows one to show the following:

\begin{proposition} There is a G(L)AV mapping that does not admit any CQ-maximum recovery  expressible as a conjunctive mapping.
\end{proposition}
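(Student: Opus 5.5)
We take $\map$ to be the GAV mapping of Example~\ref{ex:maximally_sound_ucq_does_not_always_exist}, and argue by contradiction. Suppose $\map''$ is a conjunctive mapping (rules with CQ$^{\neq,\textbf{C}}$ bodies and single CQ heads, no disjunction) from $O$ to $S$ that specifies a CQ-maximum recovery $\rmap'$ of $\mapa$. We compare it with the full maximum recovery $\map'$ of Example~\ref{ex:maximally_sound_ucq_does_not_always_exist_cont}. Since $\map'$ is a maximum recovery, it is in particular a CQ-maximum recovery. Any two CQ-maximum recoveries recover exactly the same certain answers to CQs, because each is at least as good as the other for every CQ. Hence, for the Boolean CQ $\cq_S=\exists u.\,s_1(u)\land s_2(u)$ and every database $D$,
\[
\certain{\mapa\bullet\rmap'}{\cq_S}{D}=\certain{\mapa\bullet\map'_\mathcal{A}}{\cq_S}{D}.
\]

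The second step transfers this to abstractions. Lemma~\ref{lem:link_max_recov_max_sound_trans} is stated for queries in general, and its proof only uses properties that hold for the fixed CQ $\cq_S$. Applying it query-wise shows that a perfect rewriting of $\cq_S$ through $\rmap'$ is a maximally sound $\map$-abstraction of $\cq_S$. By Example~\ref{ex:maximally_sound_ucq_does_not_always_exist}, such an abstraction must be equivalent (w.r.t.\ $\sqsubseteq_\map$) to the infinite union of the pairwise incomparable CQs $\cq_O^n$, so it is not expressible as a UCQ$^{\neq,\textbf{C}}$.

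The third step produces the contradiction. Because $\map''$ is conjunctive, the rewriting $\map''^-(\cq_S)$ is a finite UCQ$^{\neq,\textbf{C}}$: each piece-unifier replaces atoms of $\cq_S$ by a single body, and the rewriting is one step from $O$ to $S$, so there is no recursion. The plan is to show that $\map''^-(\cq_S)$ is a perfect rewriting of $\cq_S$ through $\rmap'$. This means proving that, for every $O$-instance $J$,
\[
\cq_S \text{ holds in every } S\text{-instance } I \text{ with } J\cup I\models\map'' \iff J\models \map''^-(\cq_S).
\]
This is the standard soundness and completeness of GLAV rewriting, applied here with $\neq$ and $\textbf{C}$ in rule bodies. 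Once it holds, the previous step would make $\map''^-(\cq_S)$ a UCQ$^{\neq,\textbf{C}}$ maximally sound abstraction of $\cq_S$, which is impossible.

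\textbf{Main obstacle.} The delicate point is the perfect-rewriting claim when $J$ ranges over arbitrary $O$-instances, including infinite ones and ones containing nulls. Bodies with $\neq$ and $\textbf{C}$ only fire on constants, and the chase of $J$ with $\map''$ still yields a universal model, so certain answers of a CQ coincide with evaluating its rewriting; I would check this via the chase argument.

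As a fallback, the contradiction can be reached more concretely. If the rewriting is a finite union, it has bounded size. Take $n$ larger than that size, and a database $D_n$ whose image $\map(D_n)$ is a chain $q,p^n,t$ satisfying $\cq_O^n$. Then $\map''^-(\cq_S)$ cannot answer yes on $\map(D_n)$ while matching $\map'$ on all the smaller databases as well. The instance $D$ of Example~\ref{ex:maximally_sound_ucq_does_not_always_exist_cont} is the case $n=1$ of this family.
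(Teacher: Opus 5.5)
This is correct and is essentially the paper's proof run in the contrapositive order: the paper first gets soundness of the finite UCQ$^{\neq,\textbf{C}}$ $\map''^-(\cq_S)$ from $\map''$ being a recovery. It then uses Example~\ref{ex:maximally_sound_ucq_does_not_always_exist} to make this rewriting strictly weaker than $\map'^-(\cq_S)$ on some $D$, and concludes that $\map''$ recovers strictly fewer answers to $\cq_S$ than $\map'$; you instead use CQ-maximality first to force $\map''^-(\cq_S)$ to be maximally sound. The only thing to tighten is your second step. Apply Lemma~\ref{lem:link_max_recov_max_sound_trans} to $\map'$ rather than to $\map''$: its part (2) relies on Proposition~\ref{prop:equivalence_maximum_recovery_reduced_recovery_composition}, which is not available for a mere CQ-maximum recovery. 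Then transfer the conclusion via $\certain{\map\bullet\map'}{\cq_S}{D}\subseteq\certain{\map\bullet\map''}{\cq_S}{D}$, which is the only direction of your first step you actually need.
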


\begin{proof}
Consider again Example \ref{ex:maximally_sound_ucq_does_not_always_exist}, which provides a G(L)AV mapping $\map$ and a CQ $\cq_S$ that does not admit any maximally sound $\map$-abstraction expressible as a UCQ (or UCQ$^{\neq,\textbf{C}}$). Assume that $\map$ has a CQ-maximum recovery $\map''$ that is a conjunctive mapping. 
Then, $\map''^-(\cq_S)$ is finite (i.e., it is a UCQ$^{\neq,\textbf{C}}$). Moreover, let $\map'$ be the maximum recovery of $\map$ given in Example \ref{ex:maximally_sound_ucq_does_not_always_exist_cont}, and let $Q_O = \map'^-(\cq_S)$. 
Then:
\begin{enumerate}
\item For all database $D$, $\certain{\map \bullet \map''}{\cq_S}{D} \subseteq \cq_S(D)$ because $\map''$ is a recovery.
\item For all database $D$, $\certain{\map}{\map''^-(\cq_S)}{D} =  \certain{\map \bullet \map''}{\cq_S}{D}$ by the properties of perfect query rewriting.
\item From 1 and 2: For all database $D$, $\certain{\map}{\map''^-(\cq_S)}{D} \subseteq \cq_S(D)$, i.e., $\map''^-(\cq_S)$ is a sound abstraction of $\cq_S$.
\item From 3 and $Q_O$ being a maximally sound abstraction of  $\cq_S$: For all database $D$, 
$\certain{\map}{\map''^-(\cq_S)}{D}  \subseteq \certain{\map}{Q_O}{D}$. 
\item From 4 and $Q_O$ not being equivalent to any UCQ$^{\neq,\textbf{C}}$: There is a database $D$, such that 
$\certain{\map}{\map''^-(\cq_S)}{D}  \subsetneq \certain{\map}{Q_O}{D}$. 
\item From 5 and the properties of perfect query rewriting: For this database $D$, 
$\certain{\map.\map''}{\cq_S}{D}  \subsetneq \certain{\map.\map'}{\cq_S}{D}$. 
This contradicts the assumption that $\map''$ is a CQ-maximum recovery. 
\end{enumerate}
\end{proof}

\end{document}